\newtheorem{theorem}{Theorem}
\DeclareMathOperator*{\argmin}{arg\,min}
\theoremstyle{definition}
\theoremstyle{remark}
\DeclareMathOperator{\Tr}{Tr}
\def\BibTeX{{\rm B\kern-.05em{\sc i\kern-.025em b}\kern-.08em
    T\kern-.1667em\lower.7ex\hbox{E}\kern-.125emX}}
\begin{document}

\title{Towards Robust Graph Structural Learning Beyond Homophily via Preserving Neighbor Similarity}

\author{\IEEEauthorblockN{Yulin Zhu\IEEEauthorrefmark{1}, Yuni Lai\IEEEauthorrefmark{2}\thanks{The first two authors contributed equally to the paper.}, Xing Ai\IEEEauthorrefmark{2}, Wai Lun LO\IEEEauthorrefmark{1}, Gaolei Li\IEEEauthorrefmark{3}, \\Jianhua Li\IEEEauthorrefmark{3}, Di Tang\IEEEauthorrefmark{4}, Xingxing Zhang\IEEEauthorrefmark{4}, Mengpei Yang\IEEEauthorrefmark{5}, Kai Zhou\thanks{Prof. Kai Zhou is the corresponding author.}\IEEEauthorrefmark{2}}
\IEEEauthorblockA{\IEEEauthorrefmark{1}\textit{Dept. of Computer Science}, \textit{Hong Kong Chu Hai College}, HKSAR\\ 
ylzhu@chuhai.edu.hk, wllo@chuhai.edu.hk}
\IEEEauthorblockA{\IEEEauthorrefmark{2}\textit{Dept. of Computing}, \textit{The Hong Kong Polytechnic University}, HKSAR, \\ cs-yuni.lai@polyu.edu.hk, xing96.ai@connect.polyu.hk, kaizhou@polyu.edu.hk}
\IEEEauthorblockA{\IEEEauthorrefmark{3}\textit{School of Cyber Science and Engineering}, \textit{Shanghai Jiao Tong University}, Shanghai, China, \\ gaolei\_li@sjtu.edu.cn, lijh888@sjtu.edu.cn}
\IEEEauthorblockA{\IEEEauthorrefmark{4}\textit{Shanghai CESI Technology Co., Ltd}, Shanghai, China, \\ ditonytang@hotmail.com, zhangxx@cesi.cn}
\IEEEauthorblockA{\IEEEauthorrefmark{5}\textit{China Electronics Standardization Institute}, Shanghai, China, \\ yangmp@cesi.cn}
}

\maketitle

\begin{abstract}
Despite the tremendous success of graph-based learning systems in handling structural data, it has been widely investigated that they are fragile to adversarial attacks on homophilic graph data, where adversaries maliciously modify the semantic and topology information of the raw graph data to degrade the predictive performances. Motivated by this, a series of robust models are crafted to enhance the adversarial robustness of graph-based learning systems on homophilic graphs. However, the security of graph-based learning systems on heterophilic graphs remains a mystery to us. To bridge this gap, in this paper, we start to explore the vulnerability of graph-based learning systems regardless of the homophily degree, and theoretically prove that the update of the negative classification loss is negatively correlated with the pairwise similarities based on the powered aggregated neighbor features. The theoretical finding inspires us to craft a novel robust graph structural learning strategy that serves as a useful graph mining module in a robust model that incorporates a dual-kNN graph constructions pipeline to supervise the neighbor-similarity-preserved propagation, where the graph convolutional layer adaptively smooths or discriminates the features of node pairs according to their affluent local structures. In this way, the proposed methods can mine the ``better" topology of the raw graph data under diverse graph homophily and achieve more reliable data management on homophilic and heterophilic graphs.
\end{abstract}

\begin{IEEEkeywords}
Adversarial Robustness, Graph Structural Learning, Graph Representation Learning
\end{IEEEkeywords}

\section{Introduction}
Relational data is ubiquitous in diverse domains, including biostatistics, finance, and cryptocurrency~\cite{battaglia2018relational,zhang2020deep,wu2020comprehensive}. The remarkable success of graph-based learning methods has resulted in their widespread adoption in various graph representation learning frameworks. One of the most typical graph-based learning methods is graph neural networks (GNNs). GNNs have shown exceptional performance in tasks such as node classification~\cite{GCN,GraphSage}, graph classification~\cite{li2019semi,shervashidze2011weisfeiler} and link prediction~\cite{shibata2012link,daud2020applications} etc. GNNs excel in graph representation learning due to their tailored propagation mechanism, which is especially effective for handling relational data.

GNNs commonly assume \textit{graph homophily}~\cite{homophily}, which means that connected nodes tend to share similar features. However, real-world graphs often exhibit \textit{heterophily}, where dissimilar nodes also tend to connect with each other. 
For instance, in fraud detection networks, the fraudsters are usually connected with benign users and mimic their behaviors so as to evade anomaly detection. 
Fortunately, numerous GNN variants~\cite{H2GCN,GBKGNN,FAGNN,GPRGNN,ACMGNN} have emerged to address this limitation via introducing useful techniques such as ego- and neighbor-embeddings separation~\cite{H2GCN}, aggregation with high-pass filter~\cite{FAGNN}, concatenating with higher order neighbors' embeddings~\cite{GPRGNN}. These powerful techniques expand the application of the GNN framework and enrich the family of graph-based learning methods. 

\begin{figure}[h]
    \centering
    \begin{subfigure}[b]{0.234\textwidth}
    	\centering
    	\includegraphics[width=\textwidth,height=3.cm]{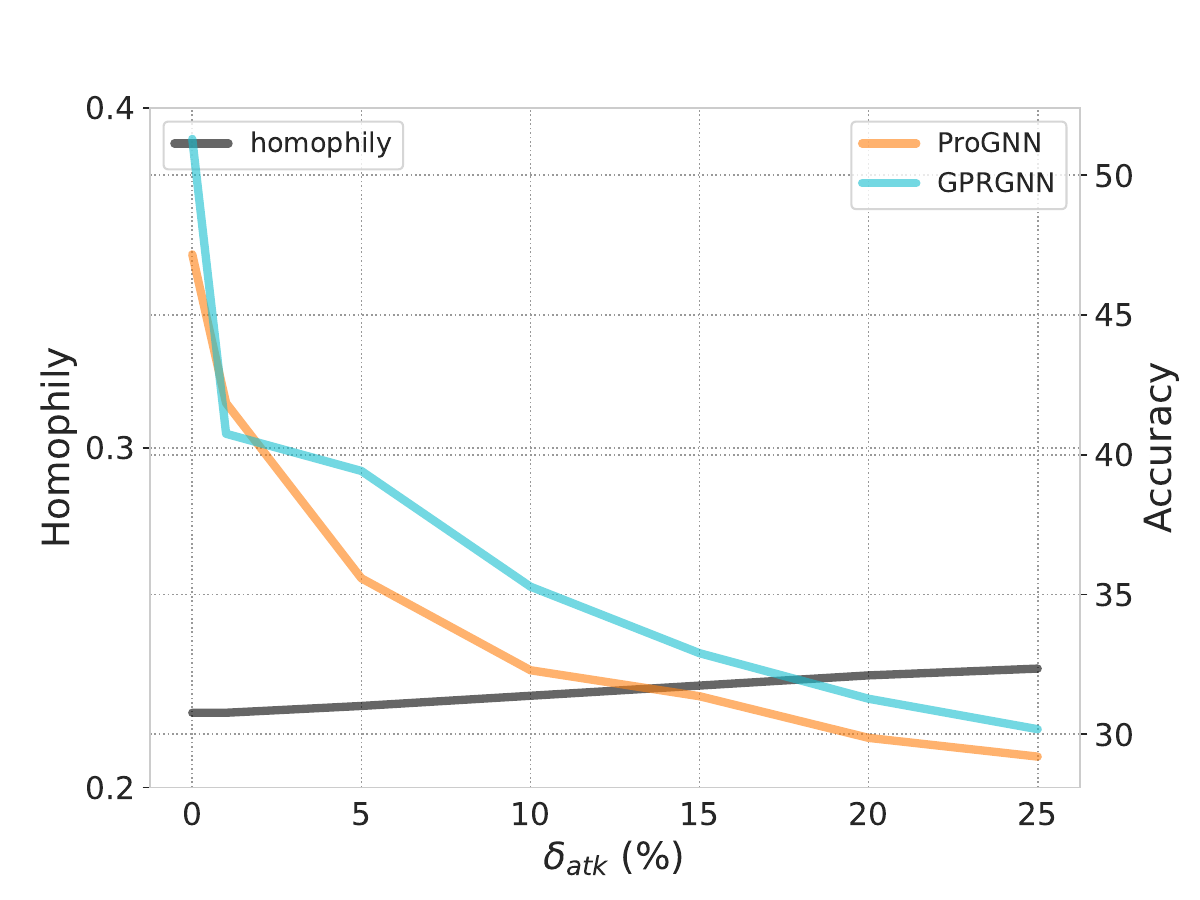}
    	\caption{Mettack}
    \end{subfigure}
    \hfill
    \begin{subfigure}[b]{0.234\textwidth}
    \centering
     	\includegraphics[width=\textwidth,height=3.cm]{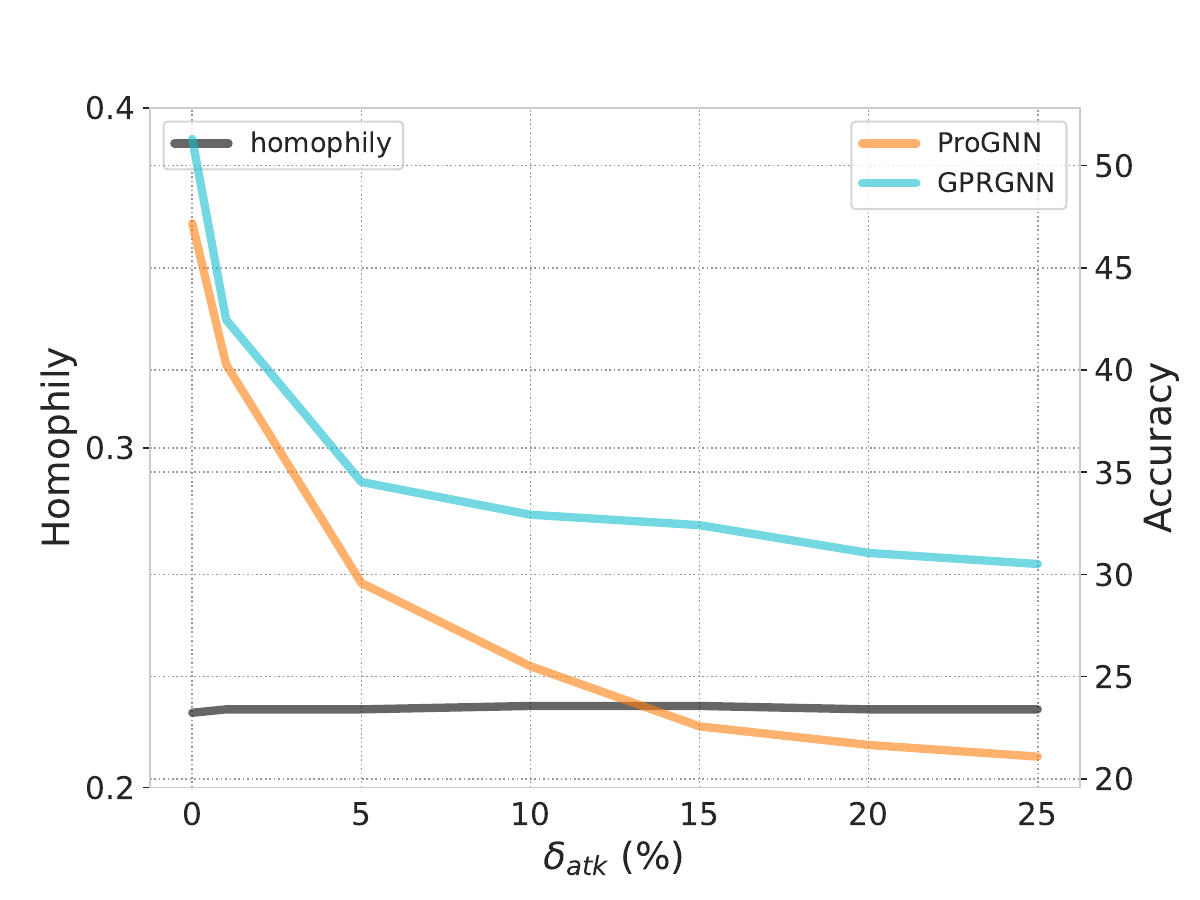}
     	\caption{Minmax}
    \end{subfigure}
    \caption{Homophily degrees and accuracies of the poisoned heterophilic graphs under different attacking scenarios.}
    \label{fig-homophily-ratio-squirrel}
\end{figure}

Despite the remarkable achievements of GNNs, extensive research has revealed their vulnerability to graph adversarial attacks, which involve adding or deleting a fraction of links in the original graph~\cite{Nettack,Mettack,TopologyAttack,BinarizedAttack}. 
In particular, the adversarial robustness of GNNs over homophilic graphs has been thoroughly examined~\cite{ProGNN,GCNJaccard,GCNSVD,GNNGUARD}. One crucial observation is that adversarial attacks tend to introduce connections between dissimilar node pairs while removing links that connect similar nodes~\cite{GCNJaccard}, \textit{thus decreasing the overall homophily level of the graph}. Intuitively, when inter-class links are added or intra-class links are deleted, the message-passing operation becomes less effective in distinguishing between different clusters, leading to a degradation in the quality of node representations.
Then, based on this observation, researchers have developed various robust models~\cite{GCNJaccard,GNNGUARD,ProGNN} that adhere to the same principle of increasing the homophily level of the graph to achieve robustness. For instance, ProGNN~\cite{ProGNN} learns a new graph structure to minimize the distance between the connected nodes' features, etc. 
Unfortunately, the previous observation does not hold for attacks over heterophilic graphs. To illustrate this, we employ two representative adversarial attacks, Mettack~\cite{Mettack} and Minmax~\cite{TopologyAttack}, to poison heterophilic graph--Squirrel~\cite{chameleon}. It is worth noting that we use one typical robust model (ProGNN~\cite{ProGNN}) and one GNN variant crafted for heterophilic graphs (GPRGNN~\cite{GPRGNN}) to evaluate the classification performances. As shown in Fig.~\ref{fig-homophily-ratio-squirrel}, the attacks are still significantly effective in decreasing the node classification accuracy; however, the homophily ratio~\cite{H2GCN} of the graph does not change significantly (in some cases it even increases slightly). Consequently, \textit{previous robust models like GCNJaccard, GNNGUARD and ProGNN ~\cite{GCNJaccard,GNNGUARD,ProGNN} that rely on restoring the homophily degree fail to exhibit robustness over heterophilic graphs}.

Thus, we are motivated to propose a robust model beyond homophily supervised by a useful robust graph structural learning approach. This objective can be divided into two coherent tasks: \textbf{1)} analyze the characteristics of attacks on graph data, considering properties that go beyond homophily degrees; \textbf{2)} based on the insights gained from the vulnerability analysis, develop a robust model supervised by a crafted graph structural learning approach for both heterophilic and homophilic graphs. 
To this end, we start from a theoretical observation--\textit{the update of the negative classification loss} (log-likelihood loss $d\mathcal{L}_{atk}$) and \textit{the pairwise similarity matrix} (computed from the $\tau$-th powered aggregated neighbor features $\mathbf{A}^{\tau}\mathbf{X}$) \textit{are negatively correlated}. Consequently, the adversaries tend to connect the dissimilar node pairs as measured by the neighbor features. That is, we generalize the notion of similarity from measuring \textit{ego features} to \textit{neighbor features}. 
We present compelling empirical evidence for this finding in Fig.~\ref{Fig-density-attributes}. 
As a result, this valuable insight can be leveraged to detect malicious links and guide the development of a robust graph structural learning strategy to mine for ``better" topologies.


We thus propose a novel robust graph learning framework termed \underline{N}eighbor \underline{S}imilarity \underline{P}reserving \underline{G}raph \underline{N}eural \underline{N}etwork (\textbf{NSPGNN}) which contains the \underline{R}obust \underline{G}raph \underline{S}tructural \underline{L}earning (RGSL) approach to supervise the \textit{neighbor-similarity-preserved propagation} during training to obtain high-quality node representations. 
Specifically, we construct positive kNN graphs and negative kNN graphs according to the similarity scores of $\mathbf{A}^{\tau}\mathbf{X}$, which endeavor to contrastively supervise the message-passing mechanism to propagate node features while preserving the neighbor features consistency. 
More specifically, the structural information of the positive kNN graphs is implemented with a crafted adaptive attention mechanism to smooth the node's features with its nearest neighbors according to the similarity scores. 
In contrast, the structural information of the negative kNN graphs is implemented with another crafted adaptive attention mechanism to discriminate the node's features with its foremost neighbors to preserve the aggregated node features' similarity from an opposite perspective. 
In the sequel, the node embeddings are fused to form the final node embeddings contain richer localized information, and are fed into the objective for training. As a result, the RGSL can serve as an effective supervisor to form a reliable message-passing mechanism in the learning module. 

It is important to highlight that while our RGSL guided robust model was initially designed for heterophilic graphs, it has also exhibited remarkable robustness against attacks on homophilic graphs, even outperforming specifically designed robust baselines. This versatility makes \textbf{NSPGNN} a robust model beyond homophily and can achieve a more trustworthy graph data management system both on heterophilic and homophilic graphs. In summary, our work presents three main contributions:
\begin{itemize}
    \item We both theoretically and empirically analyze the vulnerability of the graph learning system and reveal that preserving the neighbor similarity can enhance its adversarial robustness regardless of homophily degree.
    \item Based on the insights of our vulnerability analysis, we propose a robust graph learning framework --\textbf{NSPGNN} by introducing an RGSL approach to contrastively supervise the neighbor-similarity-preserved propagation and adaptively capture the affluent localized information in the graph data.
    \item We conduct comprehensive experiments to demonstrate the remarkable performances of the proposed method, which outperforms other baselines on both clean and noisy graph data under diverse homophily.
\end{itemize}

\section{Related Works}
\subsection{Graph Learning for Heterophilic Graphs}
GNNs have achieved tremendous success in tackling the semi-supervised learning problem for relational data. However, there exists a limitation of the vanilla GNNs--The aggregation mechanism of the graph convolutional operation is specially crafted for the homophilic graphs, which narrows the application of GNN in the real world. Fortunately, a series of GNN variants have been proposed to bypass this limitation and can handle heterophilic graphs. For instance, H2GCN~\cite{H2GCN} crafted three vital designs: ego-
and neighbor-embedding separation, higher-order neighborhoods and a combination of intermediate representations to enhance the expressive power of GNN for heterophilic graphs. FAGNN~\cite{FAGNN} can adaptively change the proportion of low-frequency and high-frequency signals to efficiently mine the semantic and structural information of heterophilic graphs. GPRGNN~\cite{GPRGNN} introduced the generalized PageRank GNN framework to adaptively assign the GPR weights to jointly optimize node features and topological information extraction. GBKGNN~\cite{GBKGNN} adopts a bi-kernel for feature extraction and a selection gate to enhance the representation learning of GNN over uneven homophily levels of heterophilic graphs. BMGNN~\cite{BMGNN} incorporates block modeling information into the aggregation process, which can help GNN to aggregate information from neighbors with distinct homophily degrees. ACMGNN~\cite{ACMGNN} proposes the adaptive channel mixing framework adaptively exploits aggregation, diversification and identity channels node-wisely to extract richer localized information for diverse node heterophily situations.

\subsection{Robust Models}
It has been widely explored that GNNs are vulnerable to graph structural attacks~\cite{Mettack,TopologyAttack,BinarizedAttack,Nettack} since the aggregation mechanism of the graph convolutional layer highly relies on topology information of the relational data. To address this problem, a battery of defense methods against the graph structural attacks has been investigated. For example, GCNJaccard~\cite{GCNJaccard} prunes the malicious links via the Jaccard index on the node attributes. GNNGUARD~\cite{GNNGUARD} removes the malicious links during training by considering the cosine similarity of node features. ProGNN~\cite{ProGNN} learns a new dense adjacency matrix and GNN simultaneously by penalizing three graph properties: matrix rank, the nuclear norm of the adjacency matrix and feature smoothness. However, the above-mentioned robust models are highly reliant on the homophily assumption and may not be suitable for boosting the robustness of GNN over heterophilic graphs.
Alternatively, GARNET~\cite{garnet} learns a new reduced-rank graph topology via spectral graph embedding and probabilistic graphical model to enhance the GNN's robustness. It can enhance the robustness of GNNs over heterophilic graphs since it does not depend on the homophily assumption. However, the strong assumption of the Gaussian graphical model will mitigate the quality of its base graph construction. 
\section{Preliminaries}
\subsection{Homophily and Heterophily}
\label{sec-homophily}
The diversification of the homophilic graph and heterophilic graph primarily arises from the matching degree between the target node and its surroundings. 
It is worth noting that recent literature~\cite{H2GCN,GPRGNN,GBKGNN,FAGNN} uses the following metric to measure the homophily degree of the graph data: 
\begin{equation}
    \mathcal{H}(G)=\frac{|\{e_{uv}|e_{uv}\in\mathcal{E}, y_{u}=y_{v}\}|}{|\mathcal{E}|},
\end{equation}
where $\mathcal{E}$ is the edge set, $y_{u}$ represents the label for node $u$. 

\subsection{Semi-supervised Node Classification}
The input is an attributed graph $G=\{\mathcal{V},\mathbf{X},\mathbf{A},\mathcal{Y}\}$, where $\mathbf{X}\in\mathbb{R}^{N\times p}$ is the nodal attribute matrix, $N$ is the node number, $\mathbf{A}\in\{0,1\}^{N\times N}$ is the adjacency matrix where $\mathbf{A}_{uv}=1$ represents that the node $u$ is connected with node $v$ and vice versa, $\{\mathcal{Y}_{u}\}_{u=1}^{N}$ is the label for node $u$. The node set $\mathcal{V}$ is usually partitioned into training set $\mathcal{V}_{tr}$, validation set $\mathcal{V}_{val}$ and testing set $\mathcal{V}_{te}$ respectively. The most representative graph-based deep learning models are GNN and its variants. In particular, the GNN is formulated as an encoder $f_{\mathbf{W}}(\mathbf{X},\mathbf{A})\rightarrow \mathbf{Z}$, which maps the complex structural data to an Euclidean embedding space. Specially, there are two common graph filters~\cite{SGC,FAGNN,ACMGNN} for node representation learning:
\begin{equation}
    \text{low-pass: }\mathbf{Z}=\sigma(\mathbf{\hat{A}}\mathbf{X}\mathbf{W}),\ 
    \text{high-pass: }\mathbf{Z}=\sigma((\mathbf{I}-\mathbf{\hat{A}})\mathbf{X}\mathbf{W}),
\end{equation}
where $\mathbf{\hat{A}}=\tilde{\mathbf{D}}^{-\frac{1}{2}}\mathbf{\tilde{A}}\tilde{\mathbf{D}}^{-\frac{1}{2}}$, $\mathbf{\tilde{A}}=\mathbf{A}+\mathbf{I}$, $\mathbf{\tilde{D}}=\text{Diag}\{d_{ii}\}_{i=1}^{N}$, $d_{ii}=\sum_{j=1}^{N}\mathbf{A}_{ij}$.
Specially, the low-pass filter smooths the signal by averaging the information from neighboring nodes, while the high-pass filter enhances the discrimination between the representations of neighboring nodes.

\begin{figure*}[htp]
    \centering
    \begin{subfigure}[b]{0.32\textwidth}
    	\centering
    	\includegraphics[width=\textwidth,height=4.cm]{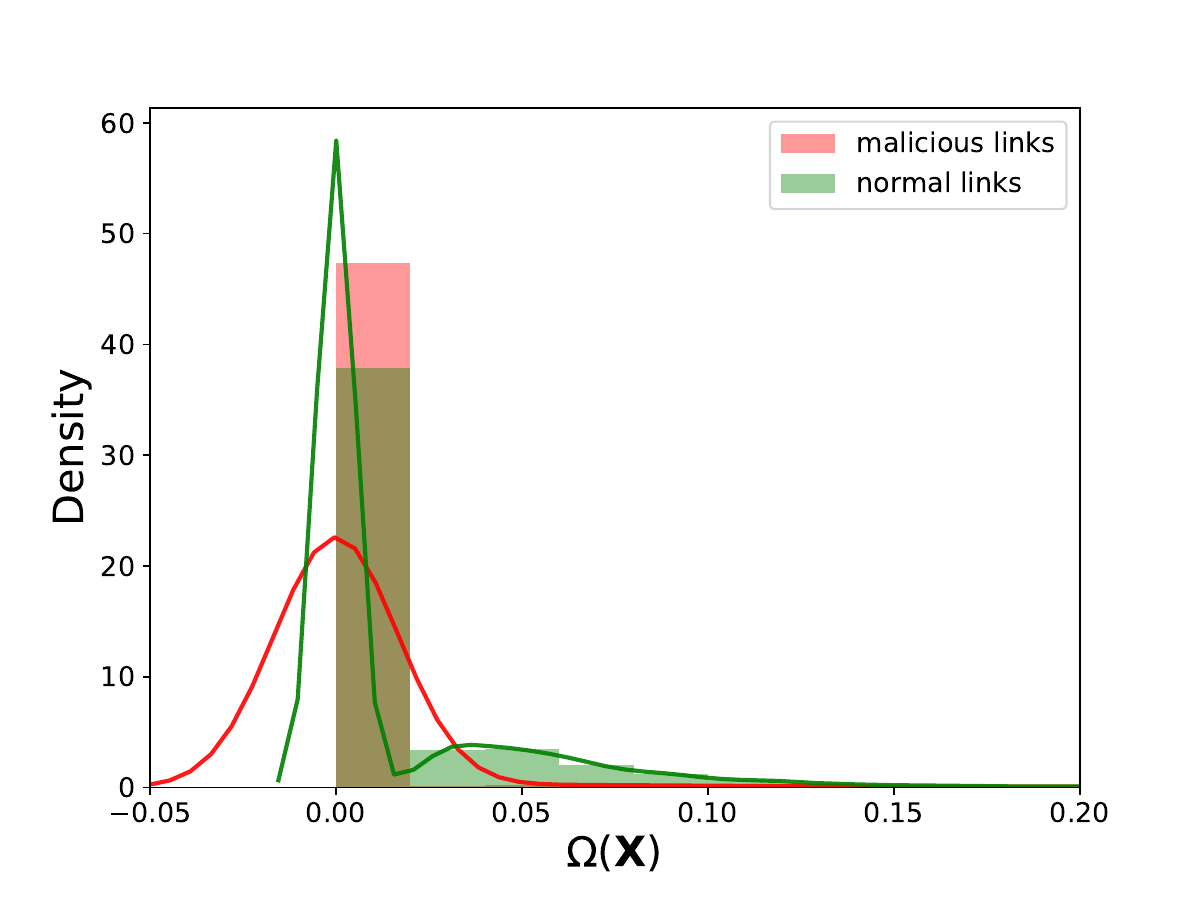}
    	\caption{Ego similarities for Mettack}
            \label{Fig-density-attributes-Mettack-X}
    \end{subfigure}
    \hfill
    \begin{subfigure}[b]{0.32\textwidth}
    	\centering
     	\includegraphics[width=\textwidth,height=4.cm]{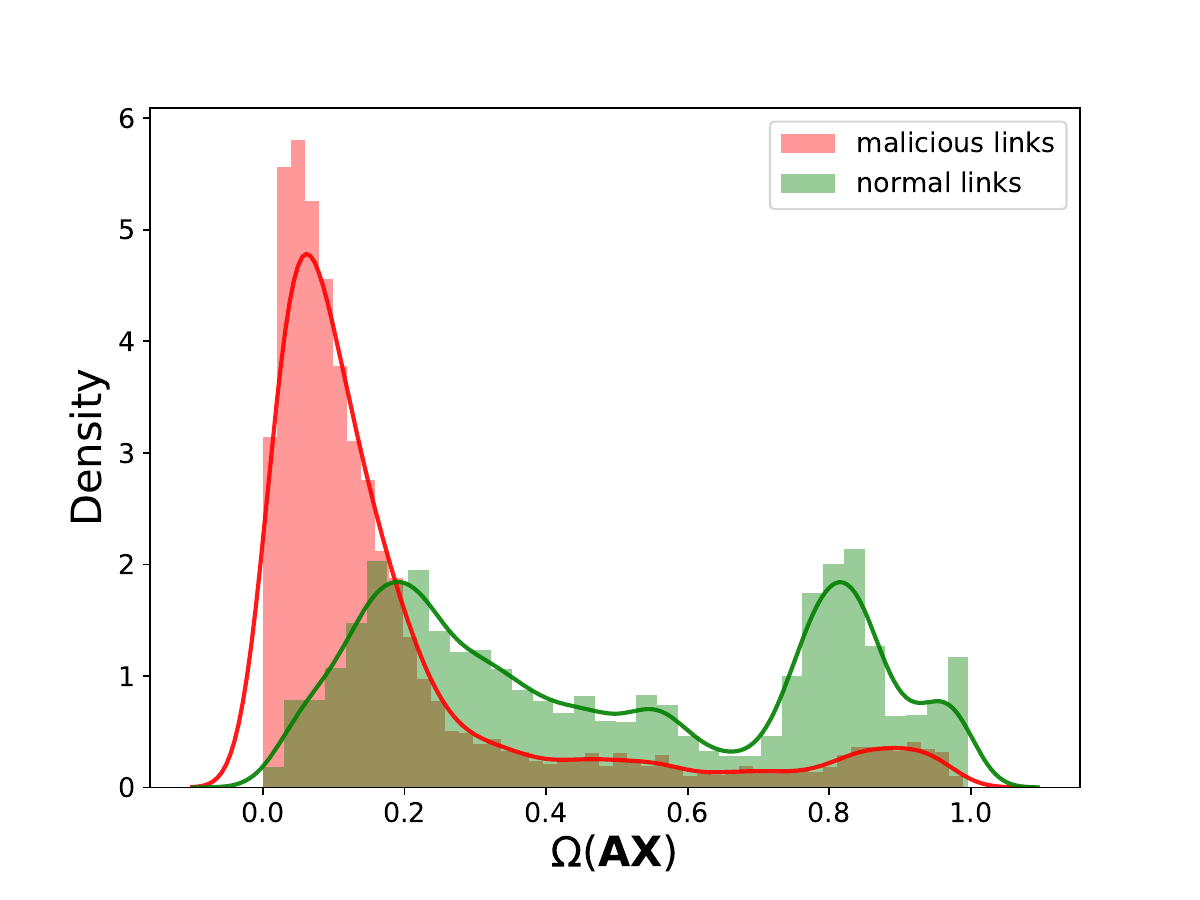}
     	\caption{One-hop similarities for Mettack}
    \end{subfigure}
    \hfill
    \begin{subfigure}[b]{0.32\textwidth}
    	\centering
     	\includegraphics[width=\textwidth,height=4.cm]{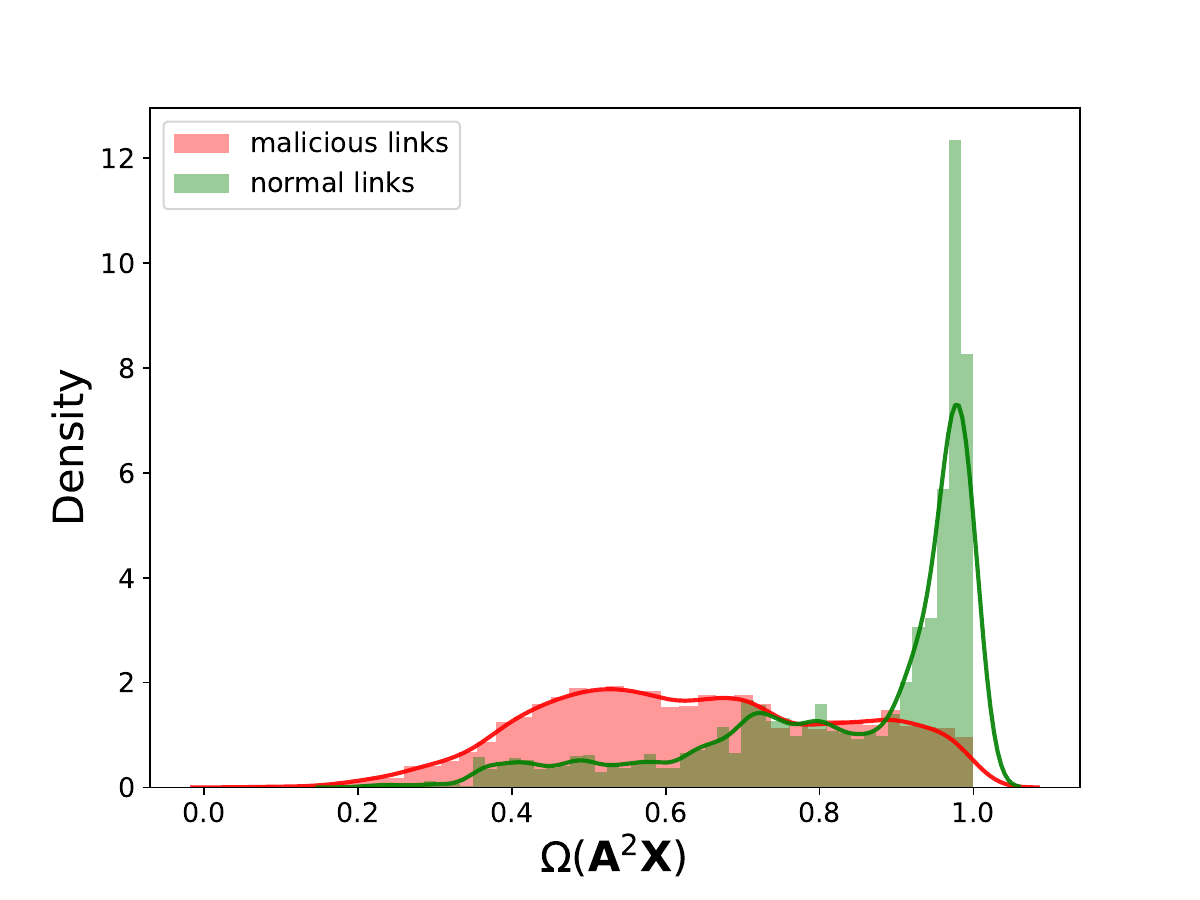}
     	\caption{Two-hop similarities for Mettack}
    \end{subfigure}
    \begin{subfigure}[b]{0.32\textwidth}
    	\centering
    	\includegraphics[width=\textwidth,height=4.cm]{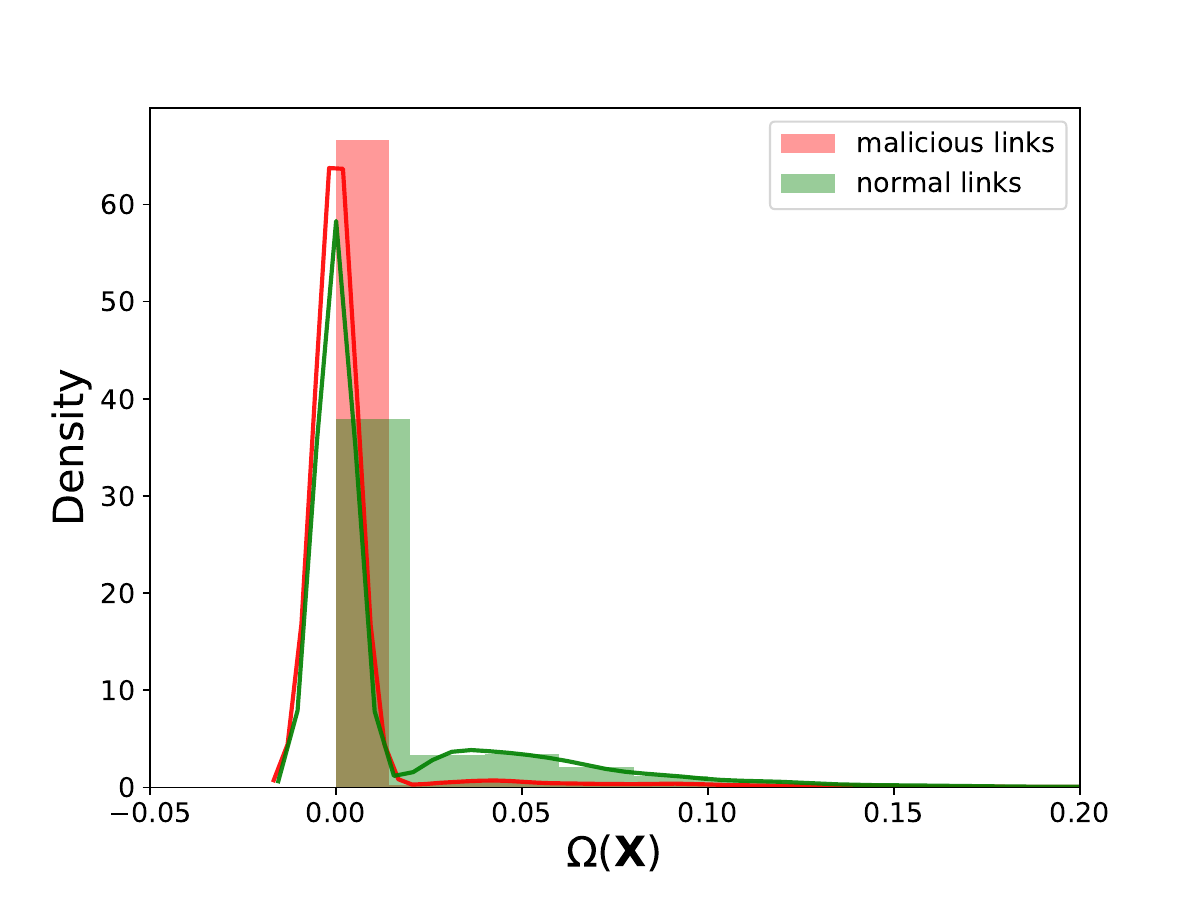}
    	\caption{Ego similarities for Minmax}
            \label{Fig-density-attributes-Minmax-X}
    \end{subfigure}
    \hfill
    \begin{subfigure}[b]{0.32\textwidth}
    	\centering
     	\includegraphics[width=\textwidth,height=4.cm]{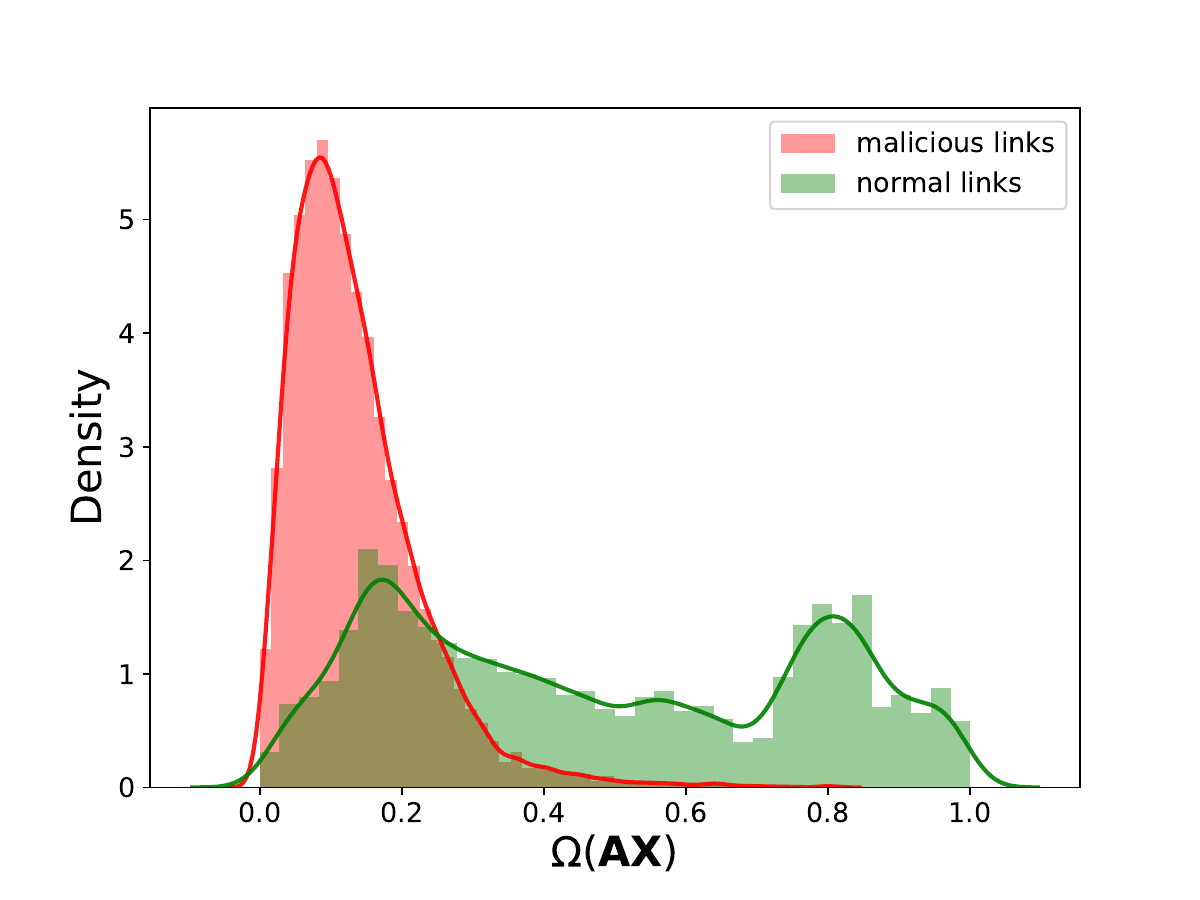}
     	\caption{One-hop similarities for Minmax}
    \end{subfigure}
    \hfill
    \begin{subfigure}[b]{0.32\textwidth}
    	\centering
     	\includegraphics[width=\textwidth,height=4.cm]{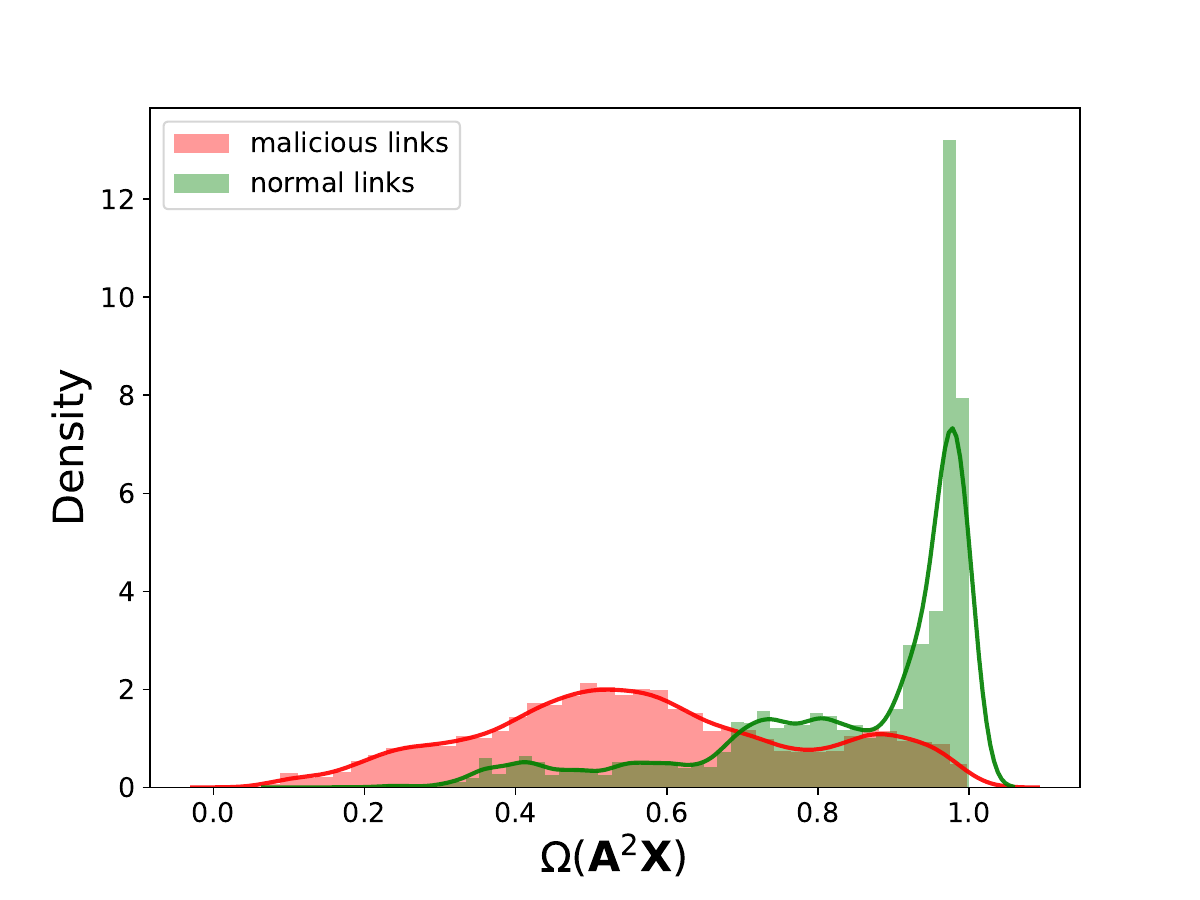}
     	\caption{Two-hop similarities for Minmax}
    \end{subfigure}
    \caption{Density plots of Ego similarities $\Omega(\mathbf{X})$, one-hop similarities $\Omega(\mathbf{AX})$, and two-hop similarities $\Omega(\mathbf{A}^{2}\mathbf{X})$ between benign links and malicious links on heterophilic graph. The red area represents the density of similarity scores for malicious links injected by the structural attacks and the green area for benign links.}
    \label{Fig-density-attributes}
\end{figure*}

\subsection{Graph Adversarial Attacks}
It has been widely explored that GNNs are vulnerable to graph structural attacks~\cite{Nettack,Mettack,TopologyAttack,BinarizedAttack,AttackCD}. 
To this end, the graph attacker aims at manipulating the clean graph's structure $\mathbf{A}$ with limited budgets to minimize the predicting accuracy on the unlabeled nodes without significantly altering the graph property such as node degree distribution~\cite{BinarizedAttack}. Mathematically, the graph structural attacks can be formulated as a discrete bi-level optimization problem: 
\begin{subequations}
    \label{eqn-bilevel}
    \begin{align}
        &\mathbf{A}^{p}=\argmin_{\mathbf{A}} \mathcal{L}_{atk}=-\mathcal{L}_{NLL}(\mathbf{X},\mathbf{A},\mathbf{W}^{*},\mathbf{Y},\mathcal{V}_{te}), \label{eqn-bilevel-outer}\\
        &\text{s.t. } \mathbf{W}^{*}=\argmin_{\mathbf{W}} \mathcal{L}_{NLL}(\mathbf{X},\mathbf{A},\mathbf{W},\mathbf{Y},\mathcal{V}_{tr}), \label{eqn-bilevel-inner} \ \|\mathbf{A}^{p}-\mathbf{A}\|\leq B. 
    \end{align}
\end{subequations}


\section{Vulnerability Analysis} 
\label{sec-vulnerability}
\subsection{Inefficacy of Ego Features}
Our investigation begins with examining the vulnerability of GNNs in the context of heterophilic graphs.
Previous robust models (e.g., GCNJaccard, ProGNN, GNNGUARD  ~\cite{GCNJaccard, ProGNN, GNNGUARD}) developed for homophilic graphs essentially rely on \textit{similarities of ego features} (denoted as $\Omega(\mathbf{X})$; defined later) to distinguish between malicious and benign links in the poisoned graph. However, our observation is that the ego features are not as effective as previously for heterophilic graphs. Without loss of generalizability, we employ two representative graph structural attacks, i.e., Mettack~\cite{Mettack} and MinMax~\cite{TopologyAttack} to manipulate a heterophilic graph--Chameleon~\cite{chameleon}. As shown in Fig.~\ref{Fig-density-attributes-Mettack-X} and \ref{Fig-density-attributes-Minmax-X}, the ego similarities fail to effectively differentiate between malicious and benign links in heterophilic graphs.
In particular, a significant proportion of heterophilic links (connecting dissimilar node pairs) in the poisoned heterophilic graphs are actually benign.
Consequently, existing robust GNN models based on homophily assumption cannot improve node classification performances for poisoned heterophilic graphs. 
Therefore, the main challenge lies in searching for \textit{a new strategy to differentiate malicious heterophilic links from benign ones} that goes beyond relying solely on ego similarities. To this end, we begin by analyzing the preference of attacks detailed below.



\subsection{Analysis of Attack Loss}
We begin with analyzing how the attack loss (defined in Eqn.~\ref{eqn-bilevel-outer}) would change according to structural perturbations. Note that in structural attacks (e.g., Mettack~\cite{Mettack}, Minmax~\cite{TopologyAttack}), the decision on which edges to perturb is determined by their impact on the attack loss (negative classification loss).
Therefore, the exploration of the relationship between the update (small impact for each perturbation) of the attack loss and the graph data information such as the topology information $\mathbf{A}$ and the semantic information $\mathbf{X}$ can provide insights into the attack preferences. Understanding this relationship can guide the development of new defense strategies aimed at enhancing the robustness of GNNs.

We consider the simple SGC~\cite{SGC} as the victim model in our analysis, which is often used as a surrogate model for graph structural attacks. The SGC model is formulated as:
\begin{equation}
    \mathbf{S}=\text{softmax}(\mathbf{Z})=\text{softmax}(\hat{\mathbf{A}}^{\tau}\mathbf{XW}), 
\end{equation}
where $\tau$ is the number of graph convolutional layers. The attacker's goal is to introduce structural poisons to minimize the attack loss $\mathcal{L}_{atk}$. We delve into the update of the attack loss of the SGC model and obtain the following theoretical observation:
\begin{theorem}
    \label{theorem-1}
    The magnitude of the update of the attack loss $d\mathcal{L}_{atk}$ is negatively related to the nodes' aggregated feature similarity matrix $\mathbf{K}=\mathbf{A}^{\tau}\mathbf{X}(\mathbf{A}^{\tau}\mathbf{X})^{\top}$.
\end{theorem}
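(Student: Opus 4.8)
The plan is to study the attack as a first-order perturbation of the surrogate SGC and to trace how flipping a single edge propagates to $\mathcal{L}_{atk}$. First I would write the attack loss explicitly, $\mathcal{L}_{atk}=-\mathcal{L}_{NLL}=\sum_{u\in\mathcal{V}_{te}}\sum_{c}Y_{uc}\log S_{uc}$, and take its total differential through the softmax layer. The standard softmax--log-likelihood identity gives $\partial\mathcal{L}_{atk}/\partial\mathbf{Z}=\mathbf{Y}-\mathbf{S}=:\mathbf{G}$, so that $d\mathcal{L}_{atk}=\langle\mathbf{G},d\mathbf{Z}\rangle$ with $\langle\cdot,\cdot\rangle$ the Frobenius inner product. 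Since $\mathbf{Z}=\hat{\mathbf{A}}^{\tau}\mathbf{X}\mathbf{W}$ with $\mathbf{W}=\mathbf{W}^{*}$ frozen at the inner optimum, the perturbation enters only through the aggregation operator, giving $d\mathbf{Z}=d(\hat{\mathbf{A}}^{\tau})\,\mathbf{X}\mathbf{W}$; expanding the matrix-power differential, $d(\hat{\mathbf{A}}^{\tau})=\sum_{s=0}^{\tau-1}\hat{\mathbf{A}}^{s}(d\hat{\mathbf{A}})\hat{\mathbf{A}}^{\tau-1-s}$.

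Next I would localize to a single flipped edge $(i,j)$, so that $d\hat{\mathbf{A}}$ is supported on the symmetric pair of entries $(i,j),(j,i)$ (treating the degree-normalization factor of $\hat{\mathbf{A}}$ as a positive scalar). The decisive observation is that inserting or deleting this edge reweights the $\tau$-hop aggregation at the two endpoints toward or away from one another: writing $\mathbf{h}_{u}:=(\mathbf{A}^{\tau}\mathbf{X})_{u}$ for the aggregated-feature row of node $u$, the induced change in the logits concentrates on rows $i$ and $j$ and is, to leading order, proportional to the \emph{difference} $(\mathbf{h}_{j}-\mathbf{h}_{i})\mathbf{W}$. Intuitively, if two endpoints already carry nearly identical aggregated features, toggling the edge barely moves the representation and hence barely moves the loss, whereas dissimilar endpoints produce a large representational shift.

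The heart of the argument is then to show that the bilinear form $\langle\mathbf{G},d\mathbf{Z}\rangle$ collapses to a metric-weighted squared norm of this difference. Using the softmax--NLL gradient structure together with the alignment between the prediction-error direction carried by $\mathbf{G}$ and the logit (representation) direction $\mathbf{h}\mathbf{W}$ (equivalently, reading off the leading curvature term of the cross-entropy), the two endpoint contributions combine into
\begin{equation*}
\begin{aligned}
    d\mathcal{L}_{atk}\ &\propto\ \big\|(\mathbf{h}_{i}-\mathbf{h}_{j})\mathbf{W}\big\|^{2}\\
    &=\ \big\|\mathbf{h}_{i}\mathbf{W}\big\|^{2}+\big\|\mathbf{h}_{j}\mathbf{W}\big\|^{2}-2\,\mathbf{h}_{i}\mathbf{W}\mathbf{W}^{\top}\mathbf{h}_{j}^{\top},
\end{aligned}
\end{equation*}
in which $\mathbf{W}\mathbf{W}^{\top}\succeq 0$ is a fixed positive-semidefinite metric and the only term coupling the node pair is the cross term $-2\,\mathbf{h}_{i}\mathbf{W}\mathbf{W}^{\top}\mathbf{h}_{j}^{\top}$. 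Dropping the metric (or absorbing it, e.g.\ under near-isotropic $\mathbf{W}\mathbf{W}^{\top}\approx c\mathbf{I}$) identifies this cross term with $-2cK_{ij}$, where $K_{ij}=\mathbf{h}_{i}\mathbf{h}_{j}^{\top}$ is exactly the $(i,j)$ entry of $\mathbf{K}=\mathbf{A}^{\tau}\mathbf{X}(\mathbf{A}^{\tau}\mathbf{X})^{\top}$. Hence the magnitude $|d\mathcal{L}_{atk}|$ is a decreasing function of $K_{ij}$ once the endpoint self-energies $\|\mathbf{h}_{i}\|^{2},\|\mathbf{h}_{j}\|^{2}$ are held fixed.

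Finally I would read off the stated relationship: because the attacker selects the perturbations with the largest loss impact, and that impact grows as $K_{ij}$ shrinks, the most damaging edges concentrate on node pairs with small aggregated-feature similarity, establishing that $d\mathcal{L}_{atk}$ is negatively related to $\mathbf{K}$ and matching the density plots of Fig.~\ref{Fig-density-attributes}. I expect the main obstacle to lie in the third step: rigorously justifying that the a priori indefinite form $\langle\mathbf{G},d\mathbf{Z}\rangle$ reduces to a genuine squared norm requires the alignment between $\mathbf{G}$ and the representation direction, and identifying \emph{both} endpoints with the full $\tau$-hop aggregate $\mathbf{A}^{\tau}\mathbf{X}$ (rather than a mixed product of intermediate powers) requires controlling the degree normalization and the $\sum_{s}$ power bookkeeping. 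Making these approximations precise, or stating them as explicit assumptions, is where the real care is needed; the differential calculus of the first two steps and the sign accounting of the last are otherwise routine.
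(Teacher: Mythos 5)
Your route is genuinely different from the paper's, and the gap sits exactly in the step you flag as hard. The paper never differentiates through the adjacency matrix at all: it models the ``update'' via the inner training loop, letting the weights move by a gradient step, $d\mathbf{W}=\gamma\nabla_{\mathbf{W}}\mathcal{L}_{nll}=\gamma(\hat{\mathbf{A}}^{\tau}\mathbf{X})^{\top}(\mathbf{S}^{*}-\mathbf{Y})$, while the structure stays fixed inside the forward map, so $d\mathbf{Z}=\hat{\mathbf{A}}^{\tau}\mathbf{X}\,d\mathbf{W}\propto\hat{\mathbf{A}}^{\tau}\mathbf{X}(\hat{\mathbf{A}}^{\tau}\mathbf{X})^{\top}(\mathbf{S}^{*}-\mathbf{Y})$. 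Pairing this with $\nabla_{\mathbf{Z}}\mathcal{L}_{nll}=\mathbf{S}^{*}-\mathbf{Y}$ yields, exactly and with no per-edge analysis,
\begin{equation*}
    d\mathcal{L}_{atk}=-\nabla_{\mathbf{Z}}\mathcal{L}_{nll}\,d\mathbf{Z}\ \propto\ -\Tr\bigl(\Delta\,\mathbf{K}\,\Delta^{\top}\bigr)=-\sum_{i,j=1}^{N}\mathbf{K}_{ij}\,\delta_{i}\delta_{j}^{\top},\qquad \Delta=(\mathbf{S}^{*}-\mathbf{Y})^{\top}\mathbf{D}^{-1},
\end{equation*}
so the Gram matrix $\mathbf{K}$ emerges automatically because the same factor $\hat{\mathbf{A}}^{\tau}\mathbf{X}$ appears once in the forward map and once, transposed, in the weight gradient. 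The negative sign, the $\mathbf{K}$-weighting, and the absence of any assumption on $\mathbf{W}$ all come for free from this choice of where the differential lives.

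Your version needs two things that you cannot get. First, $\langle\mathbf{G},d\mathbf{Z}\rangle$ is \emph{linear} in $d\hat{\mathbf{A}}$, hence it flips sign when the perturbation is reversed (edge added versus deleted); it cannot in general reduce to a nonnegative quantity proportional to $\|(\mathbf{h}_{i}-\mathbf{h}_{j})\mathbf{W}\|^{2}$. The ``alignment'' of the residual $\mathbf{G}=\mathbf{Y}-\mathbf{S}^{*}$ with the representation difference is not a technicality to be controlled afterwards --- it is the entire content of the theorem, and nothing in the SGC/softmax structure supplies it for arbitrary $\mathbf{S}^{*}$ and $\mathbf{Y}$. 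Second, even granting that reduction, the product-rule expansion $\sum_{s}\hat{\mathbf{A}}^{s}(d\hat{\mathbf{A}})\hat{\mathbf{A}}^{\tau-1-s}$ makes row $i$ absorb aggregates of node $j$ of orders $0,\dots,\tau-1$, never the $\tau$-hop row $(\mathbf{A}^{\tau}\mathbf{X})_{j}$; the cross term you obtain therefore couples mixed-order aggregates and is not $K_{ij}$, and you are forced into the extra isotropy assumption $\mathbf{W}\mathbf{W}^{\top}\approx c\mathbf{I}$ only because your construction strands $\mathbf{W}$ between the two feature vectors --- a problem the paper's framing never has, since there $\mathbf{W}$ is the thing being differentiated rather than the metric sandwiched between features. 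In short, the paper's weight-update (kernel/Gram) framing dissolves all three obstructions at once; the per-edge reading (attackers prefer pairs with small $\mathbf{K}_{uv}$) is then drawn as an interpretation of the exact identity above, not derived edge by edge as your plan attempts.
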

\begin{proof}
    In the inner loop of the attack objective defined in Eqn.~\ref{eqn-bilevel}, we have 
    \begin{equation}
        d\mathbf{Z}=\hat{\mathbf{A}}^{\tau}\mathbf{X}d\mathbf{W}=\hat{\mathbf{A}}^{\tau}\mathbf{X}\cdot\gamma\nabla_{\mathbf{W}}\mathcal{L}_{nll}=\hat{\mathbf{A}}^{\tau}\mathbf{X}\cdot\gamma\nabla_{\mathbf{Z}}\mathcal{L}_{nll}\nabla_{\mathbf{W}}\mathbf{Z}.
    \end{equation}
    On the other hand, we have:
    \begin{equation}
        \begin{split}
            &\nabla_{\mathbf{Z}_{i}}\mathcal{L}_{nll}=\nabla_{\mathbf{Z}_{i}}(-\sum_{c=1}^{C}\mathbf{Y}_{ic}\log\mathbf{S}_{ic})=
        -\sum_{c=1}^{C}\frac{\mathbf{Y}_{ic}}{\mathbf{S}_{ic}}\nabla_{\mathbf{Z}_{i}}\mathbf{S}_{ic}.
        \end{split}
    \end{equation}
    However, we can have:
    \begin{equation}
        \begin{split}
            \nabla_{\mathbf{Z}_{j}}\mathbf{S}_{i}&=\mathbf{S}_{i}\cdot\nabla_{\mathbf{Z}_{j}}\log(\mathbf{S}_{i})=\mathbf{S}_{i}\cdot(1\{i=j\}\\
            &-\frac{1}{\sum_{l=1}^{N}e^{\mathbf{Z}_{l}}}\cdot(\nabla_{\mathbf{Z}_{l}}\sum_{l=1}^{N}e^{\mathbf{Z}_{l}})) \\
            &=\mathbf{S}_{i}\cdot(1\{i=j\}-\frac{e^{\mathbf{Z}_{j}}}{\sum_{l=1}^{N}e^{\mathbf{Z}_{l}}})=\mathbf{S}_{i}\cdot(1\{i=j\}-\mathbf{S}_{i}).
        \end{split}
    \end{equation}
    For each element, we have:
    \begin{equation}
        [\nabla_{\mathbf{Z}_{ic^{\prime}}}\mathbf{S}_{ic}]_{c^{\prime}=1}^{C}=[\mathbf{S}_{ic}(1\{c=c^{\prime}\}-\mathbf{S}_{ic})]_{c^{\prime}=1}^{C}.
    \end{equation}
    Then, 
    \begin{equation}
        \begin{split}
            [\nabla_{\mathbf{Z}_{ic^{\prime}}}\mathcal{L}_{nll}]_{c^{\prime}=1}^{C}&=-[\sum_{c=1}^{C}\frac{\mathbf{Y}_{ic}}{\mathbf{S}_{ic}}\mathbf{S}_{ic}(1\{c=c^{\prime}\}-\mathbf{S}_{ic})]_{c^{\prime}=1}^{C}\\
        &=-[\sum_{c=1}^{C}\mathbf{Y}_{ic}(1\{c=c^{\prime}\}-\mathbf{S}_{ic})]_{c^{\prime}=1}^{C}\\
        &=[\mathbf{S}_{ic^{\prime}}-\mathbf{Y}_{ic^{\prime}}]_{c^{\prime}=1}^{C}.
        \end{split}
    \end{equation}
    In matrix formation, we have $\nabla_{\mathbf{Z}_{i}}\mathcal{L}_{nll}=\mathbf{S}_{i}-\mathbf{Y}_{i}$. Also, the gradient of $\mathbf{Z}$ w.r.t $\mathbf{W}$ is $\nabla_{\mathbf{W}}\mathbf{Z}=(\hat{\mathbf{A}}^{\tau}\mathbf{X})^{\top}$. Hence, we have:
    \begin{equation}
        \begin{split}
            &d\mathbf{Z}|_{\mathbf{Z}=\mathbf{Z}^{*}}\propto\hat{\mathbf{A}}^{\tau}\mathbf{X}(\hat{\mathbf{A}}^{\tau}\mathbf{X})^{\top}(\mathbf{S}^{*}-\mathbf{Y}),\\
            &d\mathcal{L}_{atk}=-\nabla_{\mathbf{Z}}\mathcal{L}_{nll}d\mathbf{Z}\\
            &\quad\quad\ \ \propto-\Tr((\mathbf{S}^{*}-\mathbf{Y})^{T}\hat{\mathbf{A}}^{\tau}\mathbf{X}(\hat{\mathbf{A}}^{\tau}\mathbf{X})^{\top}(\mathbf{S}^{*}-\mathbf{Y})) \\
            &\quad\quad\ \ =-\Tr((\mathbf{S}^{*}-\mathbf{Y})^{T}\mathbf{D}^{-1}\mathbf{A}^{\tau}\mathbf{X}(\mathbf{D}^{-1}\mathbf{A}^{\tau}\mathbf{X})^{\top}(\mathbf{S}^{*}-\mathbf{Y})) \\
            &\quad\quad\ \ =-\Tr(\Delta\mathbf{A}^{\tau}\mathbf{X}(\mathbf{A}^{\tau}\mathbf{X})^{\top}\Delta^{\top})=-\sum_{i,j=1}^{N}\mathbf{K}_{ij}\delta_{i}\delta_{j}^{\top}, 
        \end{split}
    \end{equation}
    where $\mathbf{K}=\mathbf{A}^{\tau}\mathbf{X}(\mathbf{A}^{\tau}\mathbf{X})^{\top}$ is the kernel matrix based on $\mathbf{A}^{\tau}\mathbf{X}$, $\delta_{i}$ is the $i$-the vector of $\Delta=(\mathbf{S}^{*}-\mathbf{Y})^{\top}\mathbf{D}^{-1}$. 
\end{proof}
Theorem.~\ref{theorem-1} illustrates that the magnitude of the update of the attack loss is inversely correlated with the similarity matrix $\mathbf{K}=\mathbf{A}^{\tau}\mathbf{X}(\mathbf{A}^{\tau}\mathbf{X})^{\top}$. Hence, the graph attacker tends to connect the node pair $(u,v)$ with a low value of the similarity score $\mathbf{K}_{uv}$ to influence the attack loss $\mathcal{L}_{atk}$ as much as possible. 

\begin{table}[h]
	\centering
	\caption{KL Divergence between probability densities of malicious links and benign links.}
	\label{tab-KL-Divergence}
	\resizebox{1.\columnwidth}{!}{%
		\begin{tabular}{c|cccccc}
			\toprule[1.pt]
			Attack&$\Omega(X)$&$\Omega(AX)$&$\Omega(A^2X)$&$\Omega(A^3X)$&$\Omega(A^5X)$&$\Omega(A^{10}X)$\\
			\hline
            Mettack & $0.241$ & $\mathbf{1.670}$ & $\mathbf{1.494}$ & $0.791$ & $0.636$ & $0.451$\\ 
            Minmax & $0.193$ & $\mathbf{0.952}$ & $\mathbf{1.078}$ & $0.860$ & $0.542$ & $0.454$\\
			\bottomrule[1.pt]
		\end{tabular}
	}
\end{table}

\subsection{Exploiting Attack Preferences}
Theorem.~\ref{theorem-1} validates that connecting dissimilar node pairs using the similarity matrix $\mathbf{K}$ leads to a larger magnitude of the update of the attack loss, thereby significantly affecting node classification performance. In this section, we further investigate and exploit the attack preferences of the graph attacker, which can serve as the cornerstone of our proposed defense strategy. Specifically, we define the similarity matrix as:  
\begin{equation}
    \label{eqn-cos-similarity}
    \Omega(\mathbf{A}^{\tau}\mathbf{X})[i,j]=\frac{\mathbf{A}^{\tau}\mathbf{X}[i]^{\top}\cdot\mathbf{A}^{\tau}\mathbf{X}[j]}{\|\mathbf{A}^{\tau}\mathbf{X}[i]\|\cdot\|\mathbf{A}^{\tau}\mathbf{X}[j]\|},
\end{equation}
where $\tau$ is the power of a matrix. It is worth noting that vanilla robust models utilize the ego similarity, i.e., $\Omega(\mathbf{X})$ to shape the attack preferences for graph structural attacks. 
To observe the attack preferences of the graph attacker in heterophilic graphs, we use Chameleon~\cite{chameleon} as an example. We employ two representative graph structural attacks Mettack~\cite{Mettack} and MinMax~\cite{TopologyAttack} on the clean graph. Subsequently, we report the cosine similarity score~\cite{cossim} based on ego features $\Omega(\mathbf{X})$ and one-hop neighbor features $\Omega(\mathbf{AX})$ (one-hop similarities) and two-hop neighbor features $\Omega(\mathbf{A}^{2}\mathbf{X})$ (two-hop similarities) for both benign links and malicious links in Fig.~\ref{Fig-density-attributes}. It is observed that for a heterophilic graph, ego similarities cannot differentiate malicious links from benign links since their densities are similar. However, there exists a significant difference between the density of one-hop similarities and two-hop similarities for benign links and malicious links and the mean values for malicious links are far less than benign links. Moreover, we want to emphasize that the attack preference based on $\tau$-hop similarity also works for homophilic graphs since Theorem.~\ref{theorem-1} is independent of the homophily ratio of the graph data. We will provide additional experiments in a later section to verify this issue.

The selection of the graph convolutional layer $\tau$ remains a challenge. In particular, Fig.~\ref{Fig-density-attributes-largeK} illustrates the density of the similarity scores for higher-order layers. The observed phenomenon demonstrates that higher-order similarities cannot distinguish the malicious and benign links, as the two densities become mixed. This issue arises due to the problem of over-smoothing~\cite{oversmoothing}. Specifically, repeatedly applying the aggregation operation can blend the attributes of nodes from different clusters, making them indistinguishable. Hence, it is crucial to select an appropriate model depth $\tau$ for crafting an efficient robust model. On the other hand, we also provide a quantitative analysis of the discrimination between the distribution of the similarity scores of malicious links and benign links by utilizing the K-L divergence~\cite{KLD}. The results show that $\tau=1,2$ achieves the largest distances between the distribution of similarity scores of malicious links and benign links. These results are coincided with the visualization of probability density plots in Fig.~\ref{Fig-density-attributes} and \ref{Fig-density-attributes-largeK}. 
\begin{figure}[h]
    \centering
    \begin{subfigure}[b]{0.24\textwidth}
    	\centering
    	\includegraphics[width=\textwidth,height=3.cm]{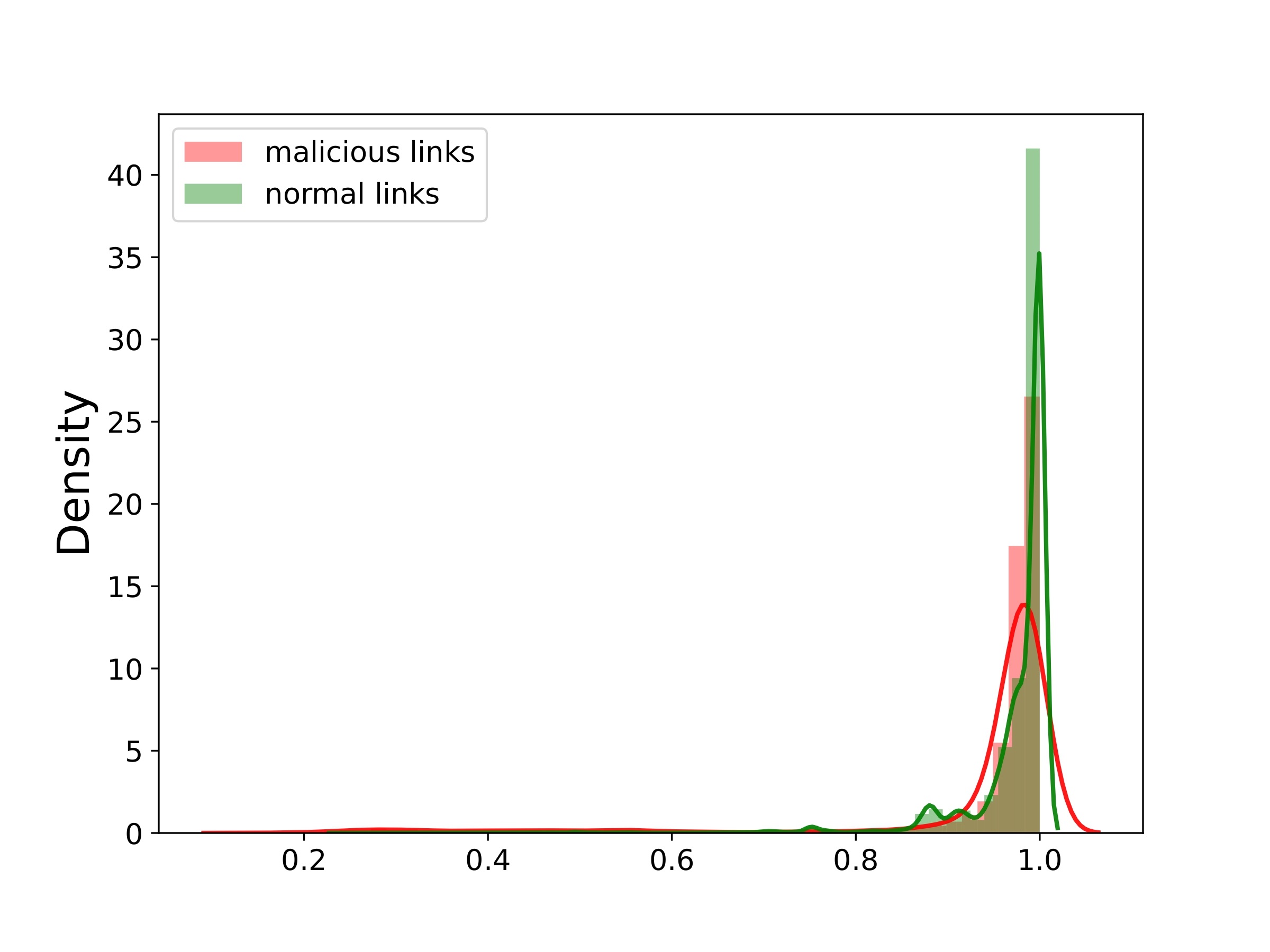}
    	\caption{$\Omega(\mathbf{A}^{5}\mathbf{X})$}
    \end{subfigure}
    \hfill
    \begin{subfigure}[b]{0.24\textwidth}
    \centering
     	\includegraphics[width=\textwidth,height=3.cm]{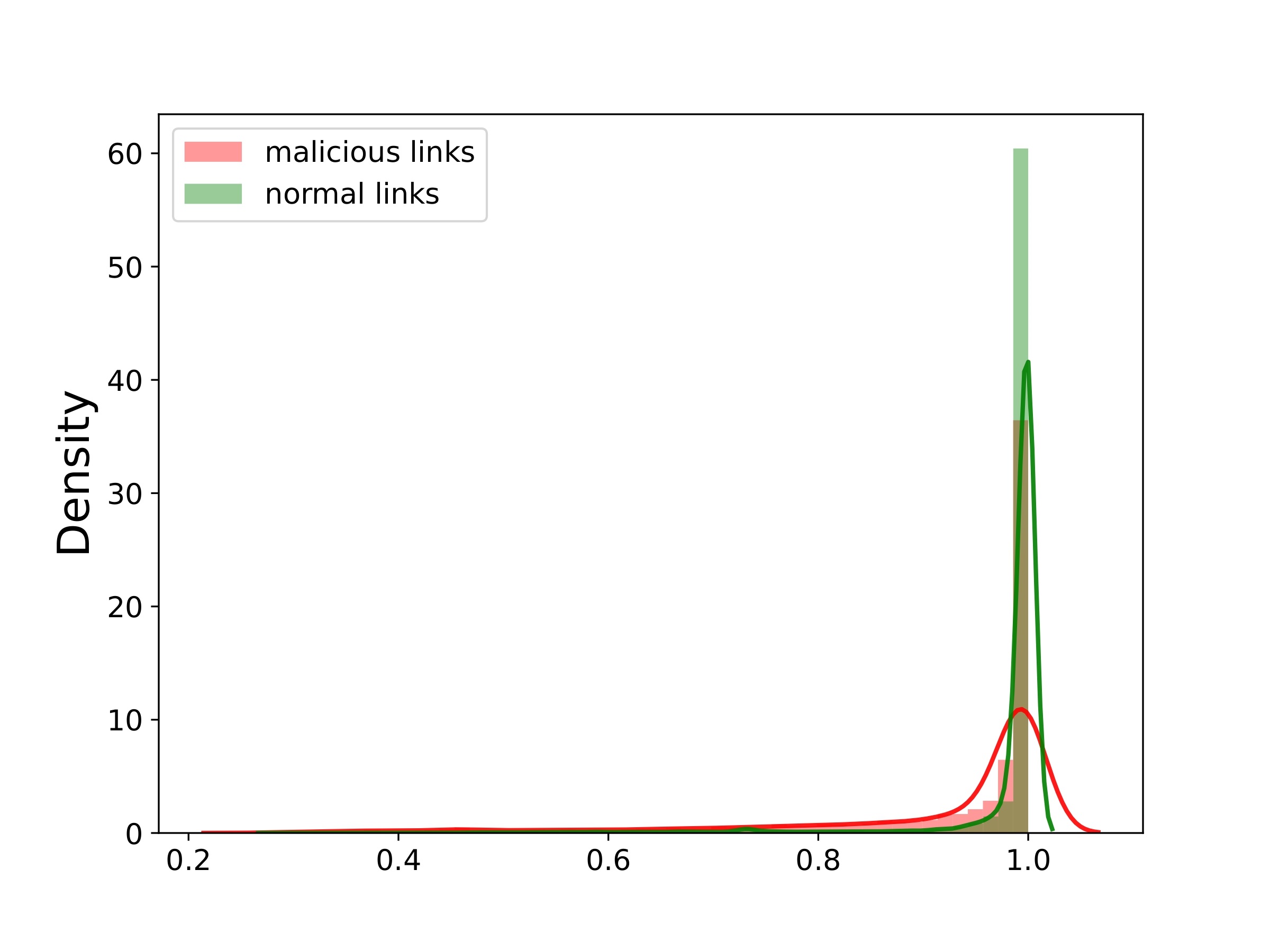}
     	\caption{$\Omega(\mathbf{A}^{10}\mathbf{X})$}
    \end{subfigure}
    \caption{Density plots of $\Omega(\mathbf{A}^{5}\mathbf{X})$ and $\Omega(\mathbf{A}^{10}\mathbf{X})$ between benign links and malicious links for poisoned graph.}
    \label{Fig-density-attributes-largeK}
\end{figure}

\section{Proposed Model: NSPGNN}

\begin{figure*}
    \centering
    \includegraphics[width=1.\textwidth,height=6.cm]{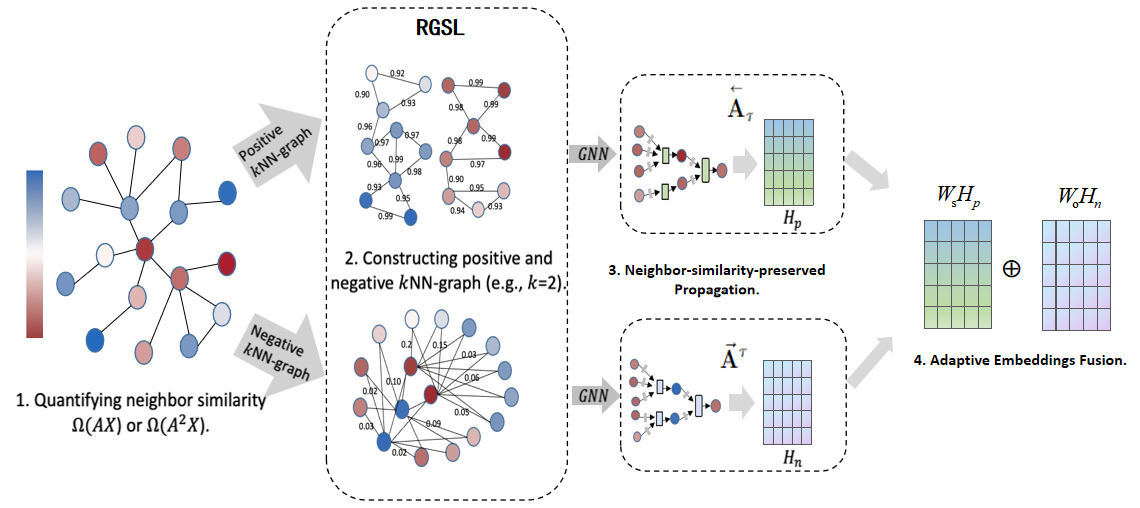}
    \caption{An overview of the proposed framework.}
    \label{fig-NSPGCN}
\end{figure*}
\subsection{Framework Overview}
The overall architecture of the proposed \textbf{NSPGNN} is depicted in Fig.~\ref{fig-NSPGCN}. The goal of \textbf{NSPGNN} is to aggregate the node features while preserving the neighbor's similarity. To this end, \textbf{NSPGNN} contains two modules: dual-kNN graphs construction and neighbor-similarity-preserved propagation. First, in the dual-kNN graphs construction phase, the model generates two positive k-nearest neighbor (kNN) graphs~\cite{kNN-graph} and two negative kNN graphs (to be detailed later) based on the one-hop and two-hop similarities to encode the neighbor similarity information into the graph's topology. Next, in order to incorporate the neighbor similarities information into node representations, we propagate the node features along the structural information of positive kNN graphs with a low-pass filter~\cite{SGC} to smooth the representations of connected nodes with high-level neighbor similarities. On the other hand, we also consider the dissimilar information to serve as the contrast to the similar information by utilizing the high-pass filter to enhance the discrimination of the connected nodes with low-level neighbor similarities. As a result, the model can capture both similar information and dissimilar information to double-preserve the neighbor similarities.  


\subsection{Dual-kNN Graphs Construction}
As previously mentioned in Sec.~\ref{sec-vulnerability}, it has been theoretically and empirically illustrated that the vulnerability of the GNN framework is highly dependent on neighbor similarities. 
To achieve universal robustness of the GNN framework over homophilic and heterophilic graphs, we craft a robust GNN model and preserve the neighbor similarities by encoding the similarities information into the graph's topology. 
To this end, we construct positive kNN graphs to capture the high-level neighbor similarities information and negative kNN graphs to serve as their counterparts. 

\subsubsection{Positive kNN Graphs}
To encode the neighbor similarities into the GNN framework, we first endeavor to capture the high-level similarities information of the $\tau$-hop similarity matrix by constructing positive kNN graphs. 
To this end, we first transform the features $\mathbf{A}^{\tau}\mathbf{X}$ into a topology by generating a kNN graph based on the $\tau$-hop similarity matrix $\Omega(\mathbf{A}^{\tau}\mathbf{X})$ (presented in Eqn.~\ref{eqn-cos-similarity}). Subsequently, we pick out the top-$k$ nearest neighbors for each node to construct the corresponding kNN graph, i.e.,
\begin{equation}
    \overset{\shortleftarrow}{\mathbf{A}}_{\tau}[u]\in \text{Argsort\_Desc}_{v\in\mathcal{V}\setminus u}\frac{\mathbf{A}^{\tau}\mathbf{X}[u]^{\top}\cdot\mathbf{A}^{\tau}\mathbf{X}[v]}{\|\mathbf{A}^{\tau}\mathbf{X}[u]\|\cdot\|\mathbf{A}^{\tau}\mathbf{X}[v]\|}, 
\end{equation}
where $\text{Argsort\_Desc}(\cdot)$ represents picking out indices of samples with descending order. Consequently, the target node in the positive graph $\overset{\shortleftarrow}{\mathbf{A}}_{\tau}$ will connect with other similar nodes based on the $\tau$-hop neighbor similarities.

\subsubsection{Negative kNN Graphs}
The limitation of the positive kNN graphs is that they only contain information on high-level similarities and omit the dissimilarity information between node pairs.  To tackle this problem, we contrastively introduce negative kNN graphs $\vec{\mathbf{A}}^{\tau}$ to encode the low-level neighbor similarities into the structure information to supervise propagation:
\begin{equation}
    \vec{\mathbf{A}}^{\tau}[u]\in \text{Argsort\_Asc}_{v\in\mathcal{V}\setminus u}\frac{\mathbf{A}^{\tau}\mathbf{X}[u]^{\top}\cdot\mathbf{A}^{\tau}\mathbf{X}[v]}{\|\mathbf{A}^{\tau}\mathbf{X}[u]\|\cdot\|\mathbf{A}^{\tau}\mathbf{X}[v]\|}, 
\end{equation}
where $\text{Argsort\_Asc}(\cdot)$ represents picking out indices of samples with ascending order. In this way, the negative graphs $\vec{\mathbf{A}}^{\tau}$ can capture the extremely dissimilar information and serve as the negative samples to preserve neighbor similarities in an opposite perspective.  

\subsection{neighbor-similarity-preserved Propagation}
After the dual-kNN graphs construction phase, it is important to determine the best choices for the aggregation mechanism to propagate node features while preserving the neighbor similarities effectively. Toward this end, we introduce an adaptive neighbor-similarity-preserved propagation mechanism. 
The formal form of the information flow for the node representation learning is:
\begin{subequations}
    \label{eqn-propagation}
    \begin{align}
        & \label{alpha comp} [\alpha_{1}^{(l)},\alpha_{2}^{(l)}]=\sigma(\mathbf{H}^{(l-1)}\mathbf{W}^{(l)}_{m}+\mathbf{b}^{(l)}_{m}),\\
        & \label{beta comp}[\beta_{1}^{(l)},\beta_{2}^{(l)}]=\sigma(\mathbf{H}^{(l-1)}\mathbf{W}^{(l)}_{n}+\mathbf{b}^{(l)}_{n}),\\
        & \label{AkNN comp}
        \overset{\shortleftarrow}{\mathbf{\mathcal{\hat{A}}}}_{\tau}=\overset{\shortleftarrow}{\mathbf{\mathcal{D}}}_{\tau}^{-\frac{1}{2}}\overset{\shortleftarrow}{\mathbf{\mathcal{A}}}_{\tau}\overset{\shortleftarrow}{\mathbf{\mathcal{D}}}_{\tau}^{-\frac{1}{2}}, \vec{\mathbf{\mathcal{\hat{A}}}}_{\tau}=\vec{\mathbf{\mathcal{D}}}_{\tau}^{-\frac{1}{2}}\vec{\mathbf{\mathcal{A}}}_{\tau}\vec{\mathbf{\mathcal{D}}}_{\tau}^{-\frac{1}{2}}, \\
        & \label{Ap comp} \overset{\shortleftarrow}{\mathbf{\mathcal{A}}}=\alpha_{1}^{(l)}\odot\overset{\shortleftarrow}{\mathbf{\mathcal{\hat{A}}}}_{1}+\alpha_{2}^{(l)}\odot\overset{\shortleftarrow}{\mathbf{\mathcal{\hat{A}}}}_{2},\\
        & \label{An comp} 
        \vec{\mathbf{\mathcal{A}}}=\alpha_{1}^{(l)}\odot\vec{\mathbf{\mathcal{\hat{A}}}}_{1}+\alpha_{2}^{(l)}\odot\vec{\mathbf{\mathcal{\hat{A}}}}_{2},\\
        & \label{H comp} 
        \mathbf{H}^{(l)}=\sigma(\mathbf{H}^{(l-1)}\mathbf{W}_{s}^{(l)}+
        \overset{\shortleftarrow}{\mathbf{\mathcal{A}}}\mathbf{H}^{(l-1)}\mathbf{W}_{o}^{(l)}\\
        &\quad\quad+
        (\mathbf{I}-\vec{\mathbf{\mathcal{A}}})\mathbf{H}^{(l-1)}\mathbf{W}_{d}^{(l)})
        ,
    \end{align}
\end{subequations}
where $\overset{\shortleftarrow}{\mathbf{\mathcal{A}}}_{\tau}=\overset{\shortleftarrow}{\mathbf{A}}_{\tau}+\mathbf{I}$ and $\vec{\mathbf{\mathcal{A}}}_{\tau}=\vec{\mathbf{A}}_{\tau}+\mathbf{I}$, $\tau=1,2$, $\mathbf{H}^{(0)}=\mathbf{X}$, $\sigma(\cdot)$ is the activation function such as ReLU~\cite{ReLU}. 

In this process, we train the learnable weight matrix $\alpha_{1}^{(l)}$, $\alpha_{2}^{(l)}$, $\beta_{1}^{(l)}$ and $\beta_{2}^{(l)}$ at each layer by implementing a multi-layer perceptron (MLP~\cite{MLP}) on the nodal feature matrix $\mathbf{H}^{(l-1)}$ to adaptively balance the relative importance between the one-hop and two-hop neighbor similarities for positive kNN graphs and negative kNN graphs. Next, we propagate the node features along the positive hybrid structures $\overset{\shortleftarrow}{\mathbf{\mathcal{A}}}$ with a low-pass filter to smooth the features of similar node pairs. Alternatively, the node features are aggregated along the negative hybrid structure $\vec{\mathbf{\mathcal{A}}}$ with a high-pass filter to enhance the discrimination of the features of dissimilar node pairs. In the meanwhile, we assign different weight matrices for ego-embeddings $\mathbf{H}^{(l-1)}$ and neighbor-embeddings obtained from low-pass and high-pass filters to let the model determine the relative importance of self-loops for graph representation learning. It is worth noting that $\odot$ denotes the operation to multiply the $i$-th element vector with the $i$-th row of a matrix. Finally, the three parties (ego-embeddings $\mathbf{H}^{(l-1)}\mathbf{W}_{s}^{(l)}$, low-pass embeddings $\overset{\shortleftarrow}{\mathbf{\mathcal{A}}}\mathbf{H}^{(l-1)}\mathbf{W}_{o}^{(l)}$ and high-pass embeddings $(\mathbf{I}-\vec{\mathbf{\mathcal{A}}})\mathbf{H}^{(l-1)}\mathbf{W}_{d}^{(l)}$) together determine the final node representations. Alg.~\ref{alg-NSPGNN} presents the algorithm details of \textbf{NSPGNN}, which contains two modules: dual-kNN graph constructions and neighbor-similarity-preserved propagation.
\begin{algorithm}  
\caption{\textbf{NSPGNN}.}  
\label{alg-NSPGNN}
\begin{algorithmic}[1]  
\REQUIRE Input graph adjacency matrix $\mathbf{A}$ and its attributes $\mathbf{X}$, hyperparameters $k_{1}$ and $k_{2}$, training labels $\mathbf{Y}_{tr}$, node-set $\mathcal{V}$, learning rate $\eta$, training epoch $T$, number of hidden layers $L$, parameters set $\Theta$.
\STATE{\textbf{Dual-kNN Graph Contructions}:}
\STATE{Compute similarity matrix: \\$\Omega(\mathbf{A}^{\tau}\mathbf{X})[u,v]=\frac{\mathbf{A}^{\tau}\mathbf{X}[u]^{\top}\cdot\mathbf{A}^{\tau}\mathbf{X}[v]}{\|\mathbf{A}^{\tau}\mathbf{X}[u]\|\cdot\|\mathbf{A}^{\tau}\mathbf{X}[v]\|}$},
\STATE{Obtain positive kNN graphs: \\ $\overset{\shortleftarrow}{\mathbf{A}}_{\tau}[u]\in \text{Argsort\_Desc}_{v\in\mathcal{V}\setminus u}\ \Omega(\mathbf{A}^{\tau}\mathbf{X})[u,v]$,}
\STATE{Obtain negative kNN graphs: \\ $\vec{\mathbf{A}}_{\tau}[u]\in \text{Argsort\_Asc}_{v\in\mathcal{V}\setminus u}\ \Omega(\mathbf{A}^{\tau}\mathbf{X})[u,v]$.}
\STATE{\textbf{neighbor-similarity-preserved Propagation}:}
\FOR{$t\leq T$}
\STATE{Initialize $\mathbf{H}^{(0)}=\mathbf{X}$,}
\FOR{$l=1,2,...,L$}
\STATE{Obtain learnable weights: \\ $[\alpha_{1}^{(l)},\alpha_{2}^{(l)}]=\sigma(\mathbf{H}^{(l-1)}\mathbf{W}^{(l)}_{m}+\mathbf{b}^{(l)}_{m})$, \\ $[\beta_{1}^{(l)},\beta_{2}^{(l)}]=\sigma(\mathbf{H}^{(l-1)}\mathbf{W}^{(l)}_{n}+\mathbf{b}^{(l)}_{n})$;}
\STATE{Obtain positive and negative propagation matrix:\\ $\overset{\shortleftarrow}{\mathbf{\mathcal{A}}}=\alpha_{1}^{(l)}\odot\overset{\shortleftarrow}{\mathbf{\mathcal{\hat{A}}}}_{1}+\alpha_{2}^{(l)}\odot\overset{\shortleftarrow}{\mathbf{\mathcal{\hat{A}}}}_{2}$, \\ $\vec{\mathbf{\mathcal{A}}}=\alpha_{1}^{(l)}\odot\vec{\mathbf{\mathcal{\hat{A}}}}_{1}+\alpha_{2}^{(l)}\odot\vec{\mathbf{\mathcal{\hat{A}}}}_{2}$},
\STATE{Obtain node embeddings via guided propagation: $\mathbf{H}^{(l)}=\sigma(\mathbf{H}^{(l-1)}\mathbf{W}_{s}^{(l)}+
        \overset{\shortleftarrow}{\mathbf{\mathcal{A}}}\mathbf{H}^{(l-1)}\mathbf{W}_{o}^{(l)}+
        (\mathbf{I}-\vec{\mathbf{\mathcal{A}}})\mathbf{H}^{(l-1)}\mathbf{W}_{d}^{(l)})$.}
\STATE{Gradient descent: $\Theta^{t}\leftarrow \Theta^{t-1}-\eta\frac{\partial\mathcal{L}_{nll}(\mathbf{H}^{(L)},\mathbf{Y}_{tr})}{\partial\Theta}$.}
\ENDFOR
\ENDFOR
\RETURN{\textbf{NSPGNN} node embeddings $\mathbf{H}^{(L)}$.}
\end{algorithmic}  
\end{algorithm}

\subsection{Time Complexity Analysis}
The additional time complexity of \textbf{NSPGNN} compared to the vanilla GNN is derived from the dual-kNN graph constructions phase. Its time complexity is $\mathcal{O}(N^2)$ for positive and negative kNN graph constructions. However, we can speed up computing the pairwise similarity matrix via Recursive Lanczos Bisection~\cite{RLBisection} or MapReduce~\cite{MapReduce} which can reduce the time complexity to $\mathcal{O}(N^{1.14})$. 


\section{Experiments}
In this section, we evaluate our proposed framework and aim to answer the following research question (RQ):
\begin{itemize}
    \item \textbf{RQ1}: What are the performances of \textbf{NSPGNN} compared to baselines on clean homophilic and heterophilic graphs?
    \item \textbf{RQ2}: What are the adversarial robustness of \textbf{NSPGNN} compared to baselines on poisoned homophilic and heterophilic graphs?
    \item \textbf{RQ3}: What are the influences of varying hyperparameters and different components?
\end{itemize}

\begin{table}[h]
	\centering
	\caption{Dataset statistics.}
	\label{tab-dataset}
	\resizebox{1.\columnwidth}{!}{%
		\begin{tabular}{c|ccccc}
			\toprule[1.pt]
			Datasets &$N$&$|\mathcal{E}|$&Classes&Features&$\mathcal{H}(G)$\\
			\hline
            Citeseer & $2110$ & $3668$ & $6$ & $3703$ & $0.81$\\
           Chameleon & $2277$ & $31371$ & $5$ & $2325$ & $0.23$\\ 
           Cora & $2485$ & $5069$ & $7$ & $1433$ & $0.74$\\ 
            Squirrel & $5201$ & $198353$ & $5$ & $2089$ & $0.22$\\
            Photo & $7650$ & $119081$ & $8$ & $745$ & $0.60$ \\
           Crocodile & $11631$ & $170773$ & $5$ & $128$ & $0.23$\\
           Tolokers & $11758$ & $519000$ & $2$ & $10$ & $0.59$\\
			\bottomrule[1.pt]
		\end{tabular}
	}
\end{table}

\begin{table*}[h]
	\centering
	\caption{Robust performances of heterophilic GNNs over \textbf{heterophilic graphs} against \textit{Mettack}.}
	\label{tab-exp-Mettack}
	\resizebox{1.\textwidth}{!}{%
        \begin{tabular}{c|c|ccccccc|cc}
        \toprule 
        Dataset & $\delta_{atk}$ &  GPRGNN &  FAGNN &  H2GCN &  GBKGNN &  BMGCN &  ACMGNN &  GARNET &  NSPGNN w.o. & NSPGNN\\
        \hline
        \multirow{6}*{Chameleon}& 1\% & 64.04 (0.98) & 67.28 (0.54) & 59.41 (0.95) & 64.43 (1.00) & 65.79 (0.95) & 64.32 (1.25) & 64.63 (1.02) & 70.59 (0.99) & \textbf{70.88 (1.00)}\\
        & 5\% & 62.87 (0.96) & 62.76 (0.62) & 57.06 (0.91) & 57.00 (0.92) & 62.17 (0.85) & 60.44 (0.81) & 60.88 (1.01) & \textbf{70.02 (1.01)} & 69.39 (0.74)\\
        & 10\% & 59.34 (0.96) & 56.56 (0.87) & 54.43 (0.80) & 54.34 (1.12) & 59.82 (1.06) & 58.62 (1.29) & 59.04 (0.93) & 67.54 (0.88) & \textbf{68.99 (0.87)}\\
        & 15\% & 56.91 (1.29) & 55.22 (1.06) & 54.04 (1.00) & 51.10 (1.34) & 56.62 (0.86) & 56.56 (1.10) & 56.54 (1.19) & 64.78 (0.74) & \textbf{65.31 (1.17)}\\
        & 20\% & 54.21 (0.73) & 52.46 (0.66) & 53.62 (1.58) & 49.43 (1.08) & 55.66 (0.91) & 55.15 (1.43) & 55.09 (1.00) & 63.14 (0.79) & \textbf{63.53 (0.89)}\\
        & 25\% & 52.32 (0.66) & 50.48 (0.59) & 52.50 (1.01) & 47.41 (0.90) & 54.69 (0.94) & 54.36 (1.09) & 54.84 (1.12) & 60.46 (0.77) & \textbf{60.53 (0.62)}\\
        \hline
        \multirow{6}*{Squirrel}& 1\% & 40.75 (2.00) & 46.83 (3.19) & 33.28 (1.00) & 50.56 (1.19) & 44.61 (0.94) & 48.06 (1.06) & 45.87 (1.20) & 55.80 (0.93) & \textbf{57.21 (0.92)} \\
        & 5\% & 39.42 (1.20) & 41.99 (1.76) & 33.06 (1.13) & 46.11 (0.92) & 42.33 (0.77) & 46.41 (1.62) & 44.07 (1.09) & 52.91 (0.79) & \textbf{56.02 (0.54)} \\
        & 10\% & 35.28 (1.21) & 38.13 (1.61) & 33.64 (1.05) & 40.98 (2.61) & 40.89 (0.81) & 42.78 (0.55) & 41.46 (1.11) & 51.83 (0.63) & \textbf{53.85 (1.10)} \\
        & 15\% & 32.90 (0.83) & 36.25 (1.00) & 32.83 (0.86) & 37.71 (1.57) & 39.28 (0.38) & 40.74 (1.31) & 40.25 (1.25) & 51.24 (0.47) & \textbf{53.53 (0.92)} \\
        & 20\% & 31.27 (0.65) & 34.20 (1.81) & 32.26 (1.27) & 34.71 (2.06) & 38.27 (0.71) & 38.77 (1.17) & 38.12 (1.03) & 49.98 (0.78) & \textbf{51.20 (0.72)} \\
        & 25\% & 30.18 (1.26) & 32.86 (2.58) & 32.65 (1.20) & 32.79 (1.21) & 36.77 (0.69) & 37.45 (1.40) & 37.38 (0.84) & 48.60 (0.81) & \textbf{49.98 (0.80)} \\
        \bottomrule
        
        \end{tabular}
	}
\end{table*}

\begin{table*}[h]
	\centering
	\caption{Robust performances of heterophilic GNNs over \textbf{heterophilic graphs} against \textit{Minmax}.}
        \label{tab-exp-Minmax}
	\resizebox{1.\textwidth}{!}{%
        \begin{tabular}{c|c|ccccccc|cc}
        \toprule 
        Dataset & $\delta_{atk}$ &  GPRGNN &  FAGNN &  H2GCN &  GBKGNN &  BMGCN &  ACMGNN &  GARNET &  NSPGNN w.o. & NSPGNN \\
        \hline
        \multirow{7}*{Chameleon}& 1\% & 61.47 (0.93) & 61.40 (2.46) & 58.75 (1.15) & 64.01 (0.56) & 63.47 (0.87) & 64.30 (0.92) & 61.75 (0.93) & 67.71 (0.68) & \textbf{67.76 (1.00)} \\
        & 5\% & 51.71 (0.65) & 50.72 (1.11) & 51.40 (1.05) & 52.06 (0.57) & 52.59 (0.73) & 53.05 (1.60) & 57.92 (0.98) & 58.31 (0.67) & \textbf{60.29 (0.58)} \\
        & 10\% & 46.29 (0.61) & 43.77 (1.18) & 47.32 (1.01) & 44.85 (0.74) & 47.24 (0.96) & 48.07 (0.74) & 51.56 (0.89) & \textbf{55.64 (0.82)} & 55.11 (0.78) \\
        & 15\% & 43.20 (0.89) & 39.41 (1.54) & 43.88 (1.44) & 39.71 (0.65) & 44.10 (0.97) & 44.39 (1.19) & 46.69 (1.07) & 52.01 (1.09) & \textbf{52.39 (1.06)} \\
        & 20\% & 39.67 (0.78) & 35.39 (1.07) & 41.16 (1.64) & 33.25 (0.62) & 43.38 (0.99) & 44.43 (1.52) & 48.89 (0.78) & \textbf{51.40 (1.76)} & 51.34 (1.04) \\
        & 25\% & 38.38 (0.51) & 34.45 (1.40) & 41.38 (1.43) & 32.94 (0.39) & 43.82 (1.11) & 43.55 (2.11) & 47.79 (1.29) & 51.56 (1.16) & \textbf{53.05 (1.45)} \\
        \hline
        \multirow{7}*{Squirrel}& 1\% & 42.46 (0.65) & 43.22 (0.61) & 29.49 (2.23) & 48.61 (0.63) & 40.65 (0.66) & 44.10 (1.23) & 42.60 (0.81) & 55.69 (1.00) & \textbf{57.52 (0.68)} \\
        & 5\% & 34.52 (1.20) & 33.50 (0.98) & 28.17 (2.98) & 38.70 (0.59) & 33.14 (0.49) & 36.09 (0.58) & 36.50 (0.80) & 45.38 (0.62) & \textbf{48.32 (0.86)} \\
        & 10\% & 32.92 (1.24) & 30.21 (1.05) & 29.06 (1.30) & 33.15 (0.81) & 31.98 (0.73) & 33.80 (0.93) & 34.69 (0.34) & 43.51 (0.96) & \textbf{46.37 (0.69)} \\
        & 15\% & 32.41 (1.19) & 27.80 (0.94) & 27.93 (2.26) & 30.30 (0.54) & 32.04 (0.57) & 34.47 (0.82) & 34.89 (1.08) & 40.76 (0.67) & \textbf{43.77 (0.62)} \\
        & 20\% & 31.05 (2.34) & 25.14 (1.05) & 28.61 (1.61) & 26.95 (0.85) & 30.71 (0.74) & 32.73 (1.41) & 33.54 (0.57) & 40.18 (0.63) & \textbf{42.31 (0.49)} \\
        & 25\% & 30.51 (0.92) & 23.34 (1.09) & 28.87 (1.63) & 24.33 (1.09) & 31.78 (0.41) & 32.67 (1.04) & 33.19 (0.98) & 39.63 (0.92) & \textbf{41.53 (1.08)} \\
        \hline
        \multirow{7}*{Crocodile}& 1\% & 57.50 (0.28) & 61.13 (0.51) & 54.40 (2.48) & 62.32 (0.58) & 63.63 (0.28) & 62.97 (0.76) & 64.50 (0.27) & \textbf{68.75 (0.46)} & 68.72 (0.45) \\
        & 5\% & 51.50 (0.37) & 55.57 (0.40) & 53.46 (2.32) & 53.58 (0.29) & 58.92 (1.31) & 57.76 (1.44) & 62.51 (0.62) & 64.76 (0.53) & \textbf{64.78 (0.45)} \\
        & 10\% & 47.96 (0.38) & 52.72 (0.49) & 46.11 (2.40) & 49.71 (0.50) & 55.71 (0.47) & 57.82 (1.73) & 59.52 (0.59) & 63.28 (0.58) & \textbf{64.20 (0.40)} \\
        & 15\% & 46.54 (0.38) & 50.59 (0.47) & 43.01 (0.27) & 47.46 (0.66) & 53.68 (0.47) & 55.84 (0.89) & 58.52 (1.07) & 59.32 (0.56) & \textbf{61.58 (0.54)} \\
        & 20\% & 45.66 (0.58) & 49.99 (0.42) & 42.88 (0.25) & 46.17 (0.67) & 52.99 (0.44) & 56.27 (1.25) & 60.29 (0.63) & 60.32 (0.79) & \textbf{61.75 (0.86)} \\
        & 25\% & 45.99 (0.56) & 50.24 (0.32) & 42.71 (0.32) & 45.90 (0.66) & 53.84 (0.74) & 55.37 (1.56) & \textbf{60.76 (0.42)} & 59.11 (0.82) & 59.36 (0.57) \\
        \hline
        \multirow{7}*{Tolokers}& 1\% & 0.66 (0.017) & 0.70 (0.011) & 0.75 (0.015) & 0.69 (0.011) & 0.70 (0.012) & 0.66 (0.005) & 0.73 (0.016) & 0.75 (0.014) & \textbf{0.76 (0.008)} \\
        & 5\% & 0.65 (0.013) & 0.68 (0.011) & 0.72 (0.015) & 0.67 (0.013) & 0.69 (0.011) & 0.66 (0.013) & 0.69 (0.010) & \textbf{0.74 (0.014)} & \textbf{0.74 (0.010)} \\
        & 10\% & 0.65 (0.005) & 0.64 (0.018) & 0.71 (0.013) & 0.66 (0.012) & 0.68 (0.010) & 0.62 (0.014) & 0.70 (0.015) & \textbf{0.74 (0.013)} & 0.73 (0.010) \\
        & 15\% & 0.64 (0.010) & 0.65 (0.013) & 0.70 (0.018) & 0.66 (0.009) & 0.67 (0.014) & 0.61 (0.013) & 0.69 (0.010) & \textbf{0.74 (0.015)} & \textbf{0.74 (0.013)} \\
        & 20\% & 0.64 (0.015) & 0.65 (0.012) & 0.69 (0.019) & 0.65 (0.010) & 0.67 (0.013) & 0.59 (0.011) & 0.69 (0.014) & \textbf{0.73 (0.018)} & \textbf{0.73 (0.015)} \\
        & 25\% & 0.65 (0.010) & 0.65 (0.014) & 0.69 (0.011) & 0.65 (0.016) & 0.68 (0.012) & 0.62 (0.008) & 0.69 (0.010) & \textbf{0.73 (0.011)} & \textbf{0.73 (0.014)} \\
        \bottomrule
        \end{tabular}
	}
\end{table*}

\subsection{Experiment Settings}
\subsubsection{Dataset}
In this paper, we mainly investigate the adversarial robustness of GNN framework against the graph adversarial attacks. Thus, we conduct experiments on four typical heterophilic graphs: Chameleon, Squirrel~\cite{chameleon}, Crocodile and Tolokers~\cite{tolokers}\footnote{Mettack fails to attack Crocodile and Tolokers due to cuda out of memory.} and three homophilic graphs: Cora, CiteSeer~\cite{Cora} and Photo~\cite{Photo}. The statistics of the datasets are shown in Tab.~\ref{tab-dataset}. 

\subsubsection{Setup}
We use \textit{Pytorch-Geometric}~\cite{PytorchGeometric} to preprocess the six graph datasets and implement robust models, i.e., GCN-Jaccard~\cite{GCNJaccard}, ProGNN~\cite{ProGNN} on \textit{DeepRobust}~\cite{deeprobust}, and GNNGUARD~\cite{GNNGUARD}, RGCN~\cite{RGCN}, AirGNN~\cite{AirGNN} and ElasticGNN~\cite{ElasticGNN} on GreatX~\cite{GreatX}. We implement GNN variants crafted for heterophilic graphs (HGNNs) like GPRGNN~\cite{GPRGNN}, FAGNN~\cite{FAGNN}, GBKGNN~\cite{GBKGNN}, BMGCN~\cite{BMGNN} and ACMGNN~\cite{ACMGNN}, GARNET~\cite{garnet} based on their source code. It is worth noting that all the baselines contain up-to-date competitive robust GNNs and HGNNs (For more descriptions please refer to Sec.~\ref{sec-baseline}). We evaluate the robustness of GNNs over homophilic and heterophilic graphs on the semi-supervised node classification task and conduct $10$ individual experiments with varying seeds and report the mean and standard error of the test accuracies for fair comparisons. We consider two typical graph adversarial attacks: Mettack~\cite{Mettack} and Minmax~\cite{TopologyAttack} with the attacking power $\delta_{atk}=\{1\%,5\%,10\%,15\%,20\%,25\%\}$, which represents the proportion of the modified links over the link set $\mathcal{E}$. We tune the hyperparameters $k_{1}$ and $k_{2}$ from the set $\{1,5,10,15,20,25,30\}$ and determine the best choices based on the validation accuracy. We train $500$ epochs in all experiments using the Adam~\cite{Adam} optimizer with a learning rate of $0.01$ for all the models. 

\subsection{Baselines}
\label{sec-baseline}
The baselines include the state-of-the-art GNN variants for heterophilic graphs and robust GNN models. 
\begin{itemize}
    \item \textbf{GCN}~\cite{GCN} is the most representative GNN model which utilizes the graph convolutional layer to propagate node features with a low-pass filter.
\end{itemize}
The following are GNNs under heterophily:
\begin{itemize}
    \item \textbf{GPRGNN}~\cite{GPRGNN} adaptively learns the generalized PageRank weights to optimize nodal feature and topological information extraction jointly. 
    \item \textbf{FAGNN}~\cite{FAGNN} adaptively integrates low-pass and high-pass signals in the message-passing mechanism to learn graph representations for homophilic and heterophilic graphs. 
    \item \textbf{H2GCN}~\cite{H2GCN} identifies ego- and neighbor-embedding separation, higher-order neighborhoods, and a combination of intermediate representations to boost learning from the graph structure under heterophily. 
    \item \textbf{GBKGNN}~\cite{GBKGNN} proposes a bi-kernel feature transformation and a selection gate to capture homophily and heterophily information respectively.
    \item \textbf{BMGCN}~\cite{BMGNN} introduces block modeling into the framework of GNN to automatically learn the corresponding aggregation rules for neighbors of different classes. 
    \item \textbf{ACMGNN}~\cite{ACMGNN} proposes the adaptive channel mixing framework to adaptively exploit aggregation, diversification and identity channels node-wisely to extract richer localized information for diverse node heterophily situations.
    \item \textbf{GARNET}~\cite{garnet} is a scalable spectral method that leverages weighted spectral embedding to construct a base graph, and then refines the base graph by pruning additional uncritical edges based on a probabilistic graphical model. 
\end{itemize}
The following are robust GNN models:
\begin{itemize}
    \item \textbf{GCN-Jaccard}~\cite{GCNJaccard} preprocesses the graph data by pruning links that connect nodes with low values of Jaccard similarity of node attributes. 
    \item \textbf{ProGNN}~\cite{ProGNN} jointly learns a structural graph and a robust GNN model from the poisoned graph guided by the three properties: low-rank, sparsity and feature smoothness.
    \item \textbf{GNNGUARD}~\cite{GNNGUARD} detects and quantifies the relationship between the graph structure and node features to assign higher weights to edges connecting similar nodes while pruning edges between unrelated nodes during training.
    \item \textbf{RGCN}~\cite{RGCN} learns Gaussian distributions for each node feature and employs an attention mechanism to penalize nodes with high variance.   
    \item \textbf{AirGNN}~\cite{AirGNN} proposes an adaptive message passing scheme to learn a GNN framework with adaptive residual to tackle the trade-off between abnormal and normal features during GNN training.
    \item \textbf{ElasticGNN}~\cite{ElasticGNN} to enhance the smoothness of the graph data locally and globally by $L_{1}$ and $L_{2}$ penalties to enhance the adversarial robustness of the GNN framework.  
    \item \textbf{H2GCN-SVD}~\cite{H2GCNSVD} combines the H2GCN~\cite{H2GCN} with singular value decomposition techniques to mitigate the malicious influences of the high-rank topology attacks during the message-passing mechanism of the high-pass filter of the GCN layer. 
\end{itemize}

\begin{figure*}[h]
    \centering
    \begin{subfigure}[b]{0.32\textwidth}
    	\centering
    	\includegraphics[width=\textwidth,height=3.6cm]{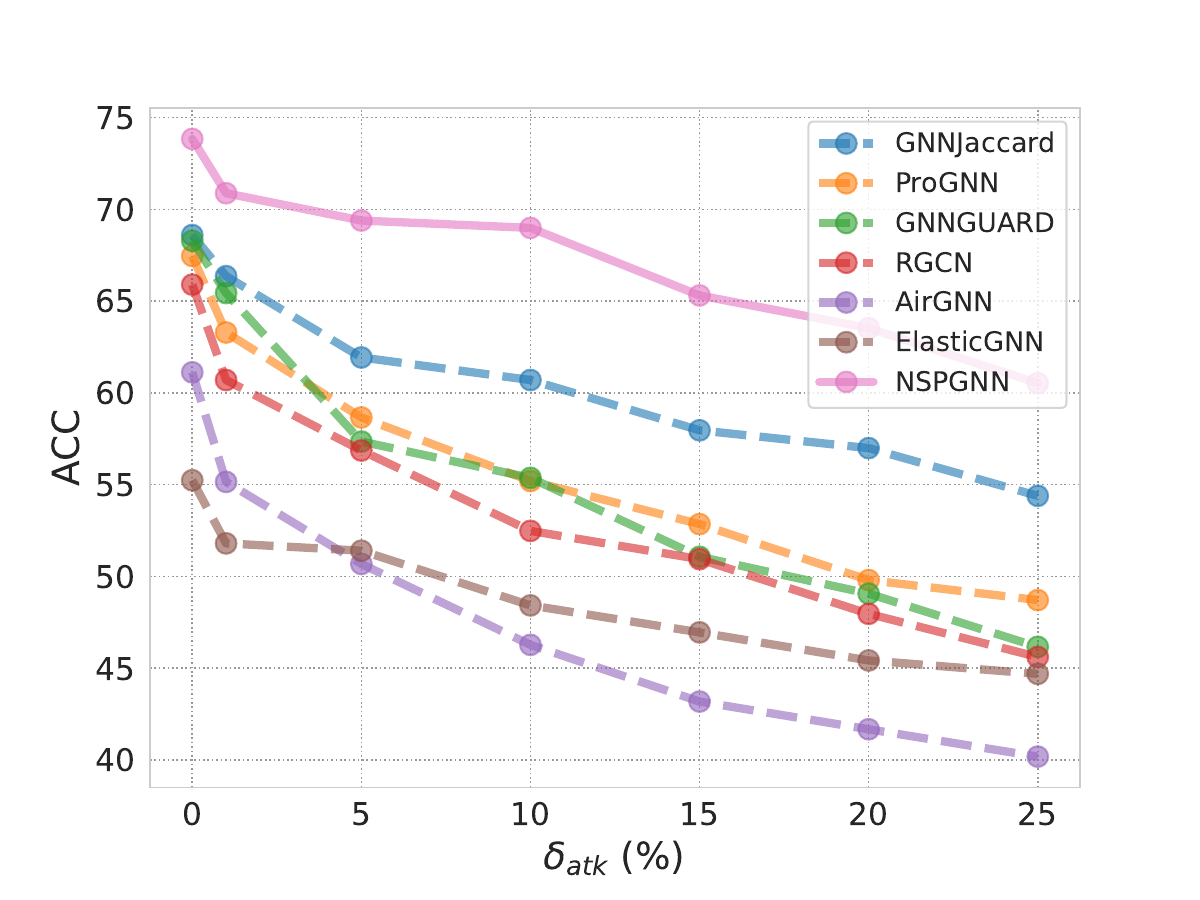}
    	\caption{Chameleon for Mettack}
        \label{fig-acc-robust-model-chameleon-Mettack}
    \end{subfigure}
    \hfill
    \begin{subfigure}[b]{0.32\textwidth}
    	\centering
     	\includegraphics[width=\textwidth,height=3.6cm]{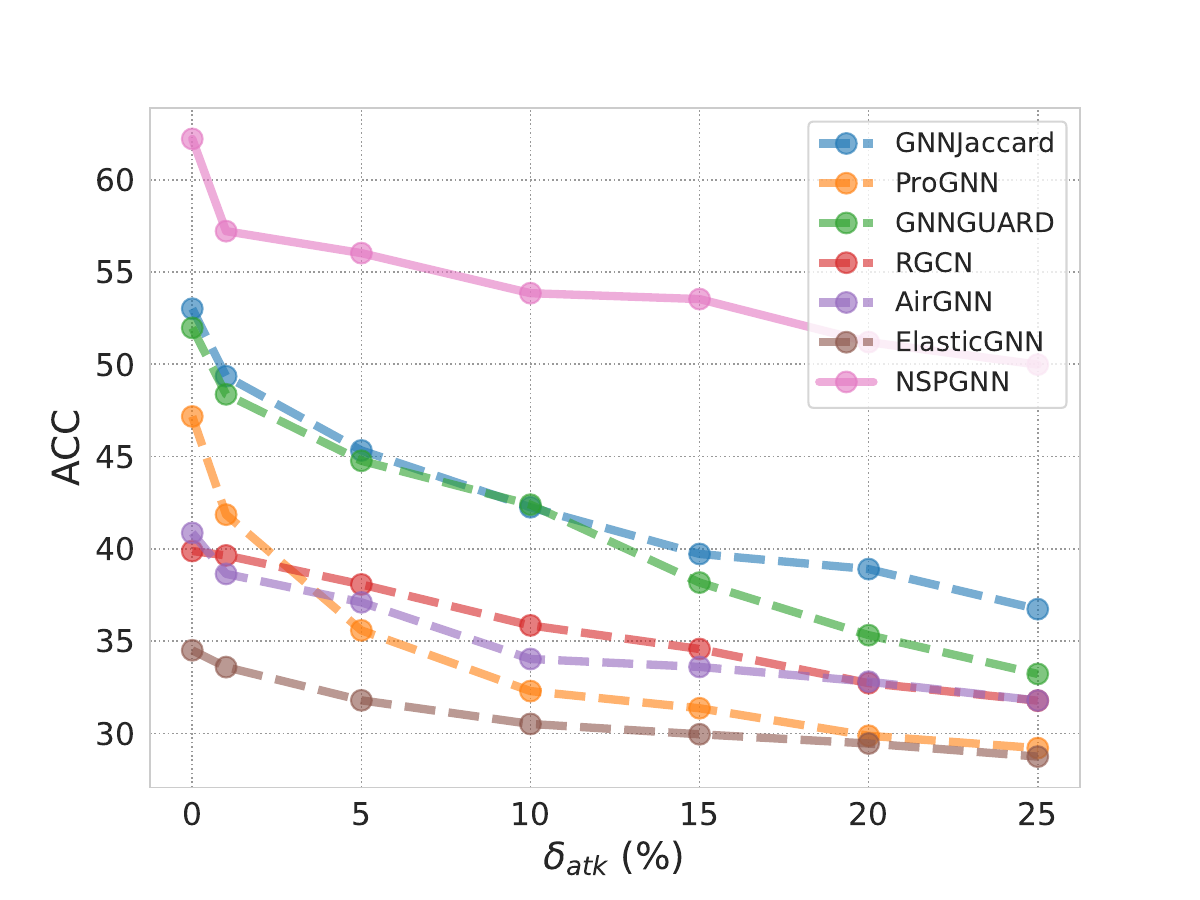}
     	\caption{Squirrel for Mettack}
            \label{fig-acc-robust-model-squirrel-Mettack}
    \end{subfigure}
    \hfill
    \begin{subfigure}[b]{0.32\textwidth}
    	\centering
     	\includegraphics[width=\textwidth,height=3.6cm]{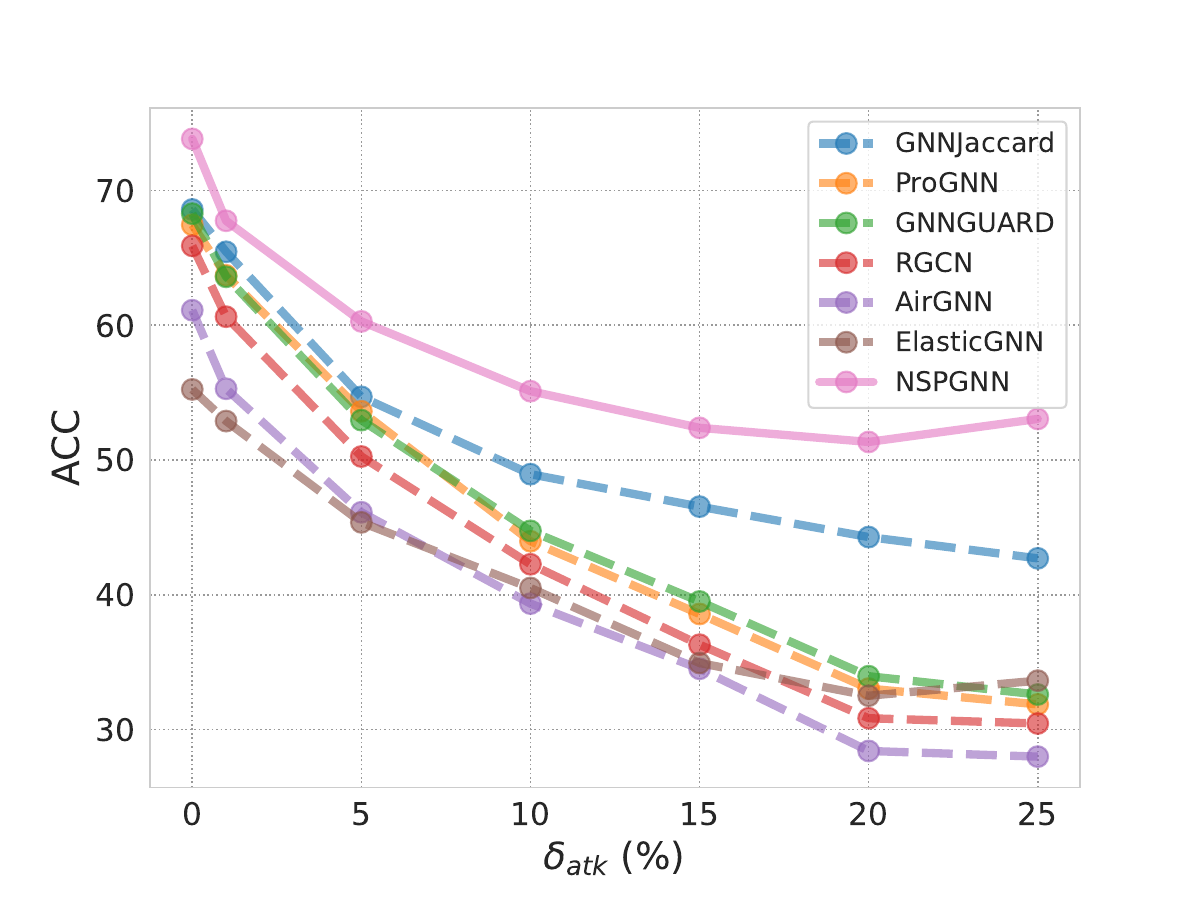}
     	\caption{Chameleon for Minmax}
            \label{fig-acc-robust-model-chameleon-Minmax}
    \end{subfigure}
    \begin{subfigure}[b]{0.32\textwidth}
    	\centering
    	\includegraphics[width=\textwidth,height=3.6cm]{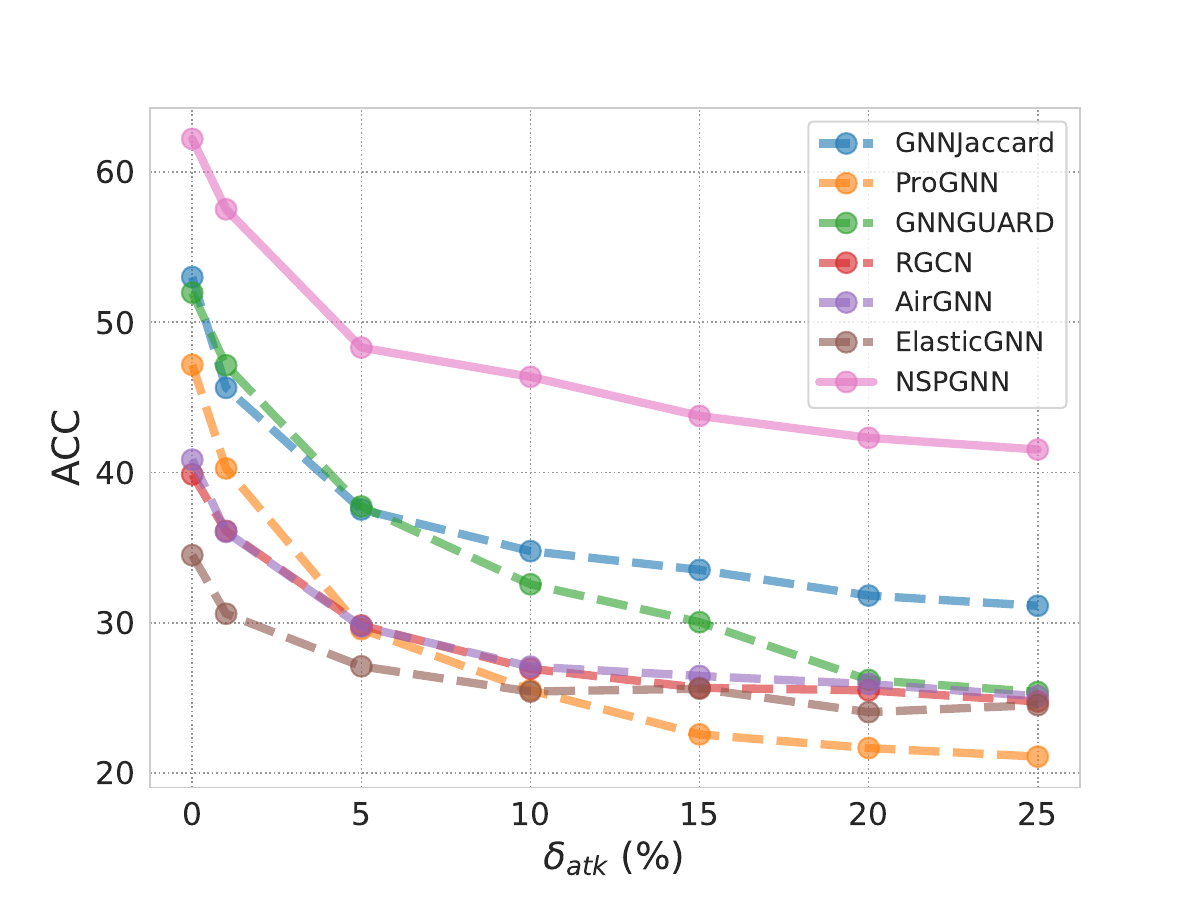}
    	\caption{Squirrel for Minmax}
        \label{fig-acc-robust-model-squirrel-Minmax}
    \end{subfigure}
    \hfill
    \begin{subfigure}[b]{0.32\textwidth}
    	\centering
     	\includegraphics[width=\textwidth,height=3.6cm]{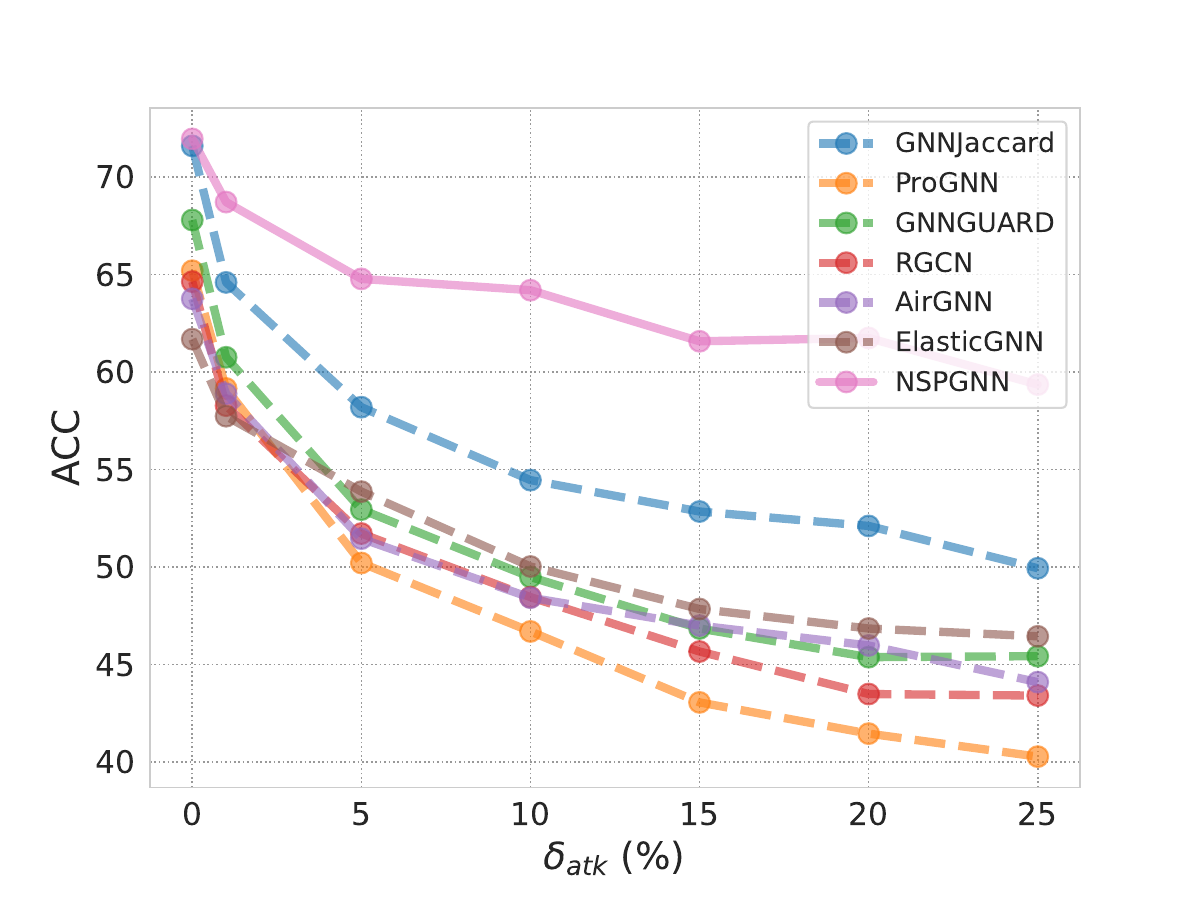}
     	\caption{Crocodile for Minmax}
            \label{fig-acc-robust-model-crocodile-Minmax}
    \end{subfigure}
    \hfill
    \begin{subfigure}[b]{0.32\textwidth}
    	\centering
     	\includegraphics[width=\textwidth,height=3.6cm]{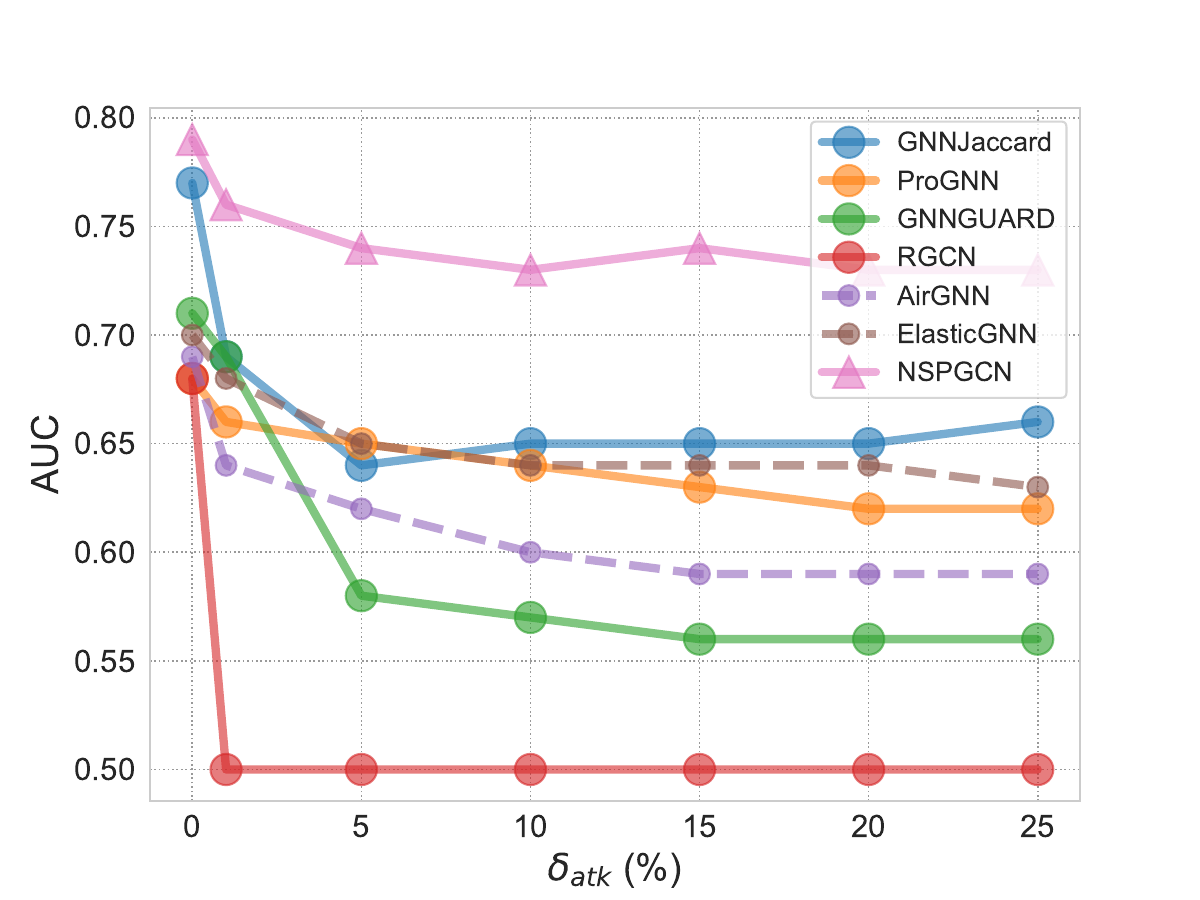}
     	\caption{Tolokers for Minmax}
            \label{fig-acc-robust-model-crocodile-Minmax}
    \end{subfigure}
    \hfill
    \begin{subfigure}[b]{0.32\textwidth}
        \centering
        \includegraphics[width=\textwidth,height=3.6cm]{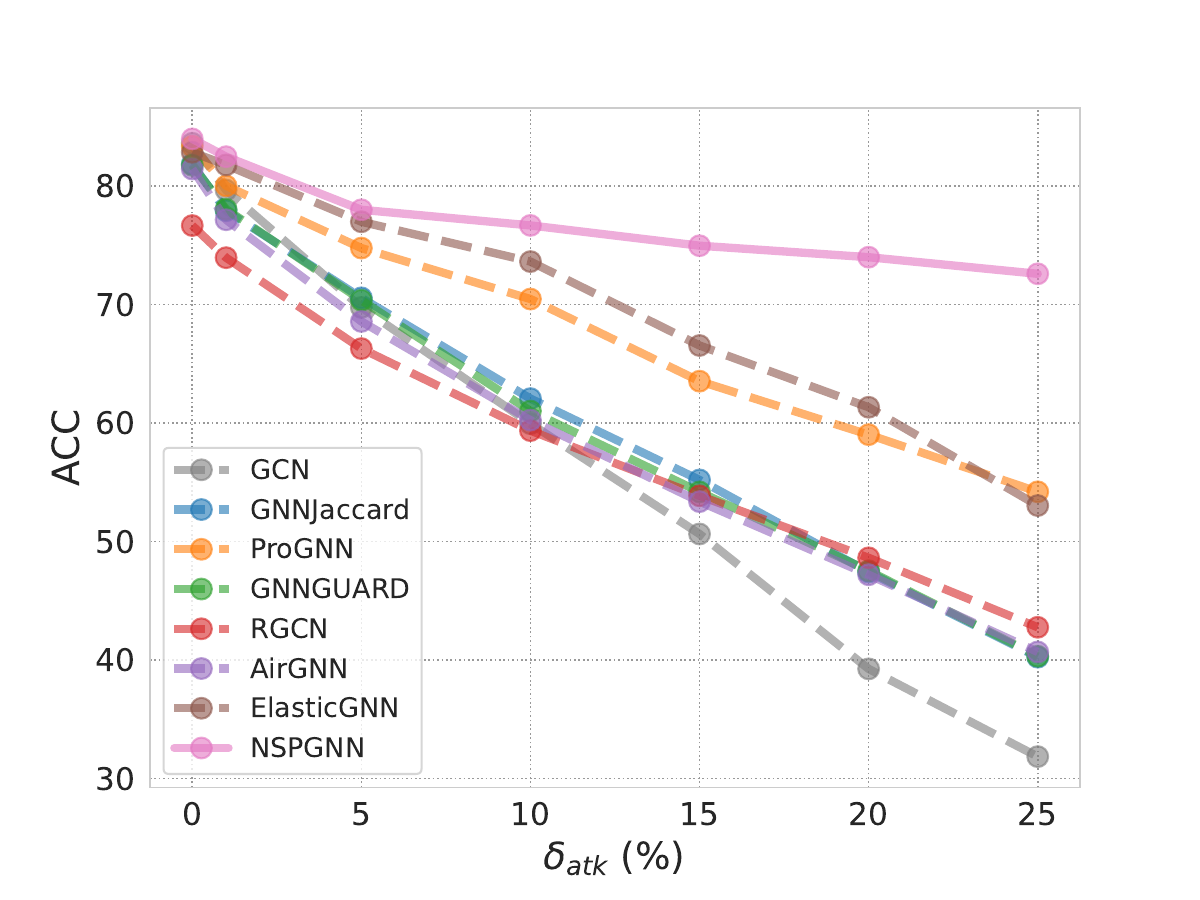}
        \caption{Cora for Mettack}
        \label{Fig-acc-robust-model-cora-mettack}
    \end{subfigure}
    \hfill
    \begin{subfigure}[b]{0.32\textwidth}
        \centering
        \includegraphics[width=\textwidth,height=3.6cm]{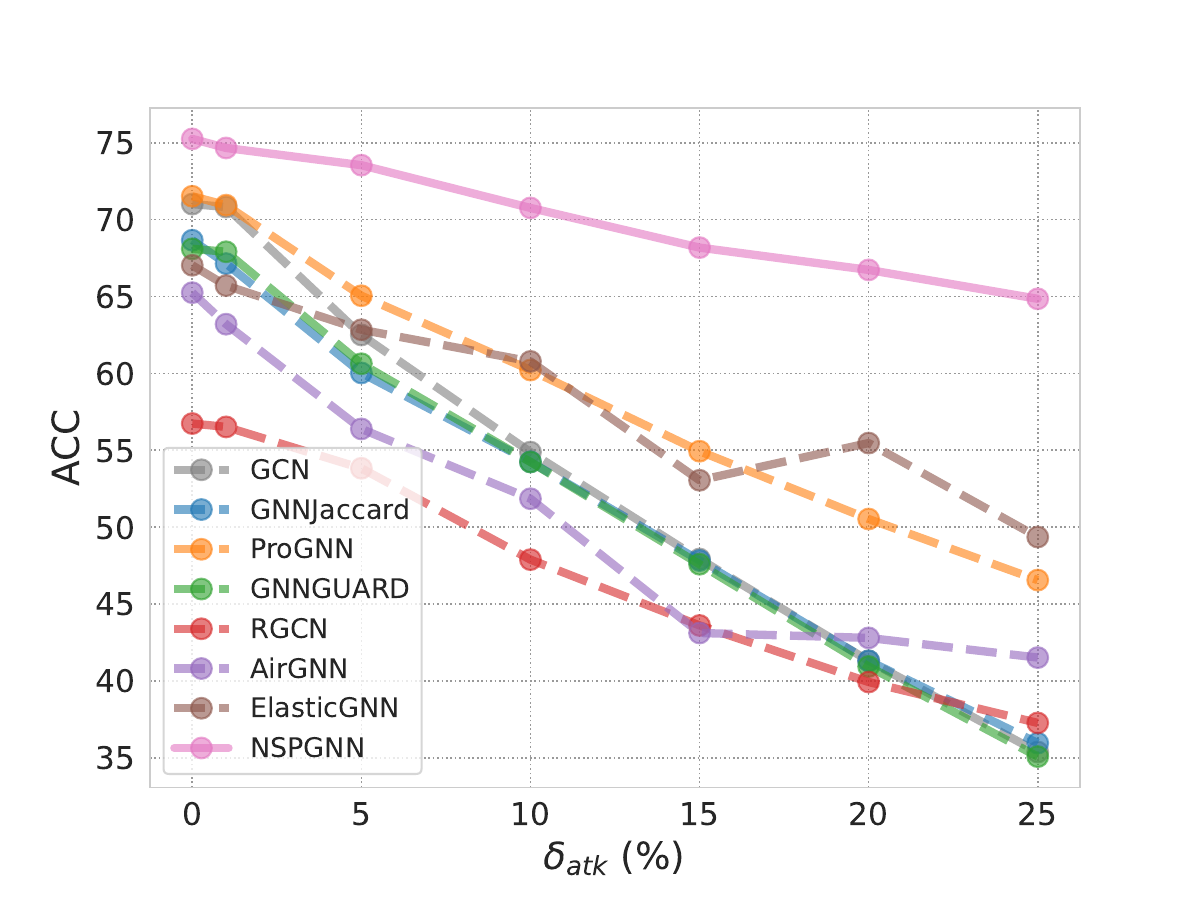}
        \caption{CiteSeer for Mettack}
        \label{Fig-acc-robust-model-citeseer-mettack}
    \end{subfigure}
    \hfill
    \begin{subfigure}[b]{0.32\textwidth}
        \centering
        \includegraphics[width=\textwidth,height=3.6cm]{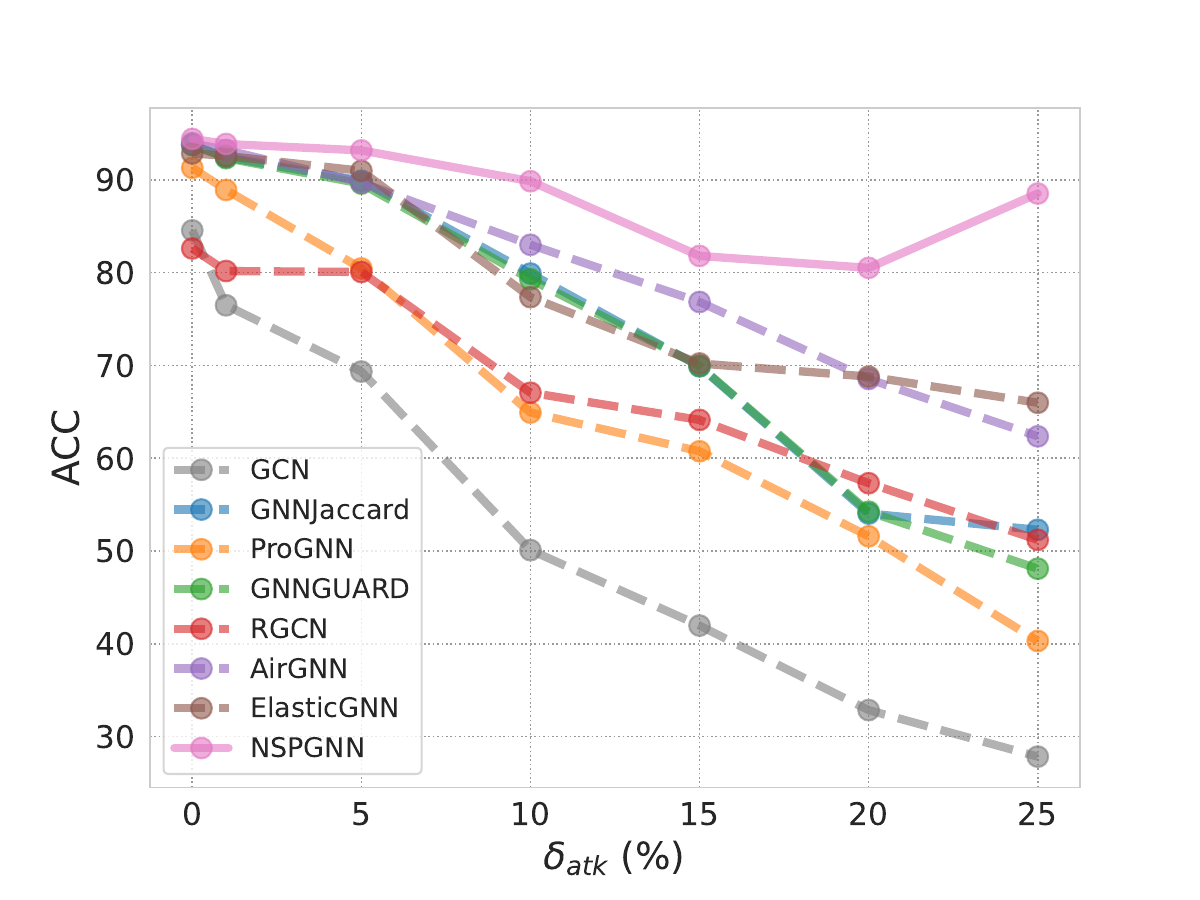}
        \caption{Photo for Mettack}
        \label{Fig-acc-robust-model-photo-mettack}
    \end{subfigure}
    \begin{subfigure}[b]{0.32\textwidth}
        \centering
        \includegraphics[width=\textwidth,height=3.6cm]{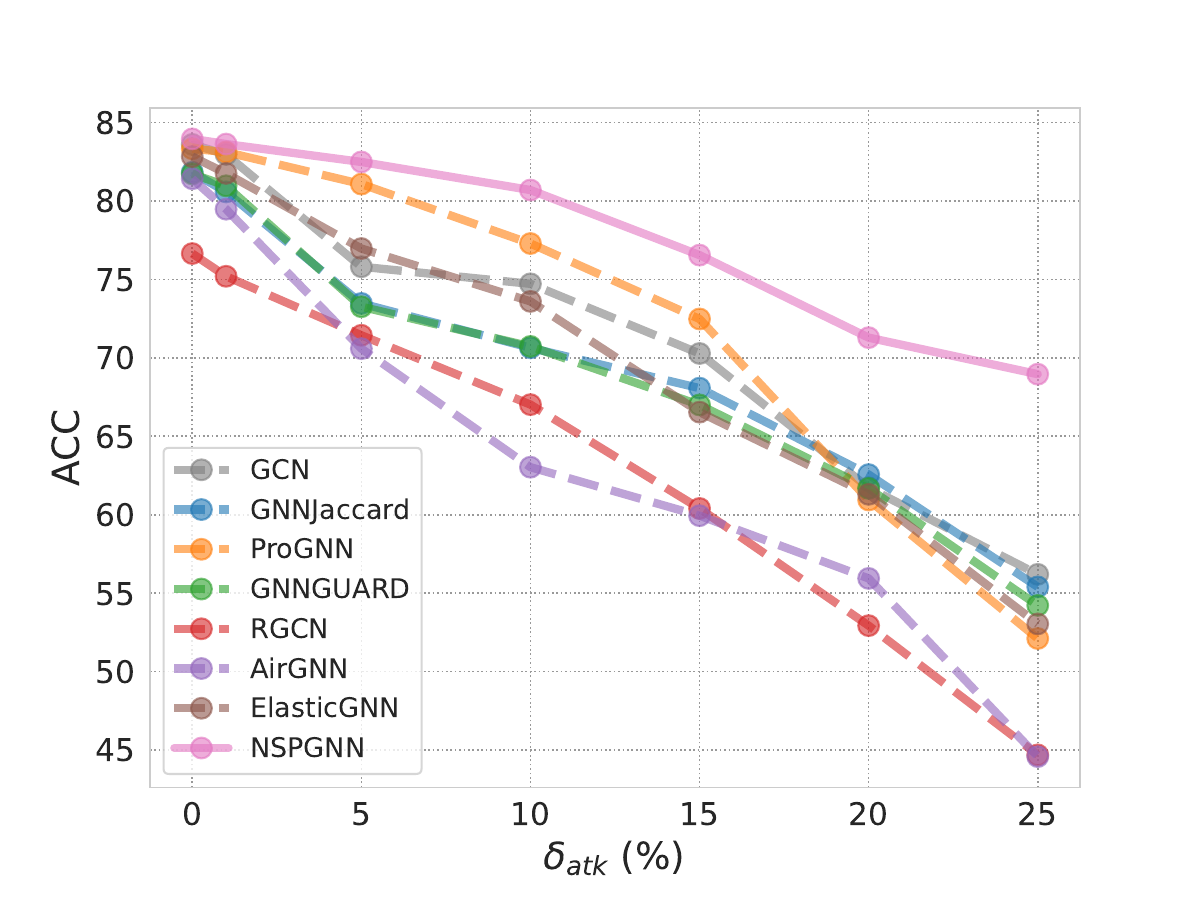}
        \caption{Cora for Minmax}
        \label{Fig-acc-robust-model-cora-minmax}
    \end{subfigure}
    \hfill
    \begin{subfigure}[b]{0.32\textwidth}
        \centering
        \includegraphics[width=\textwidth,height=3.6cm]{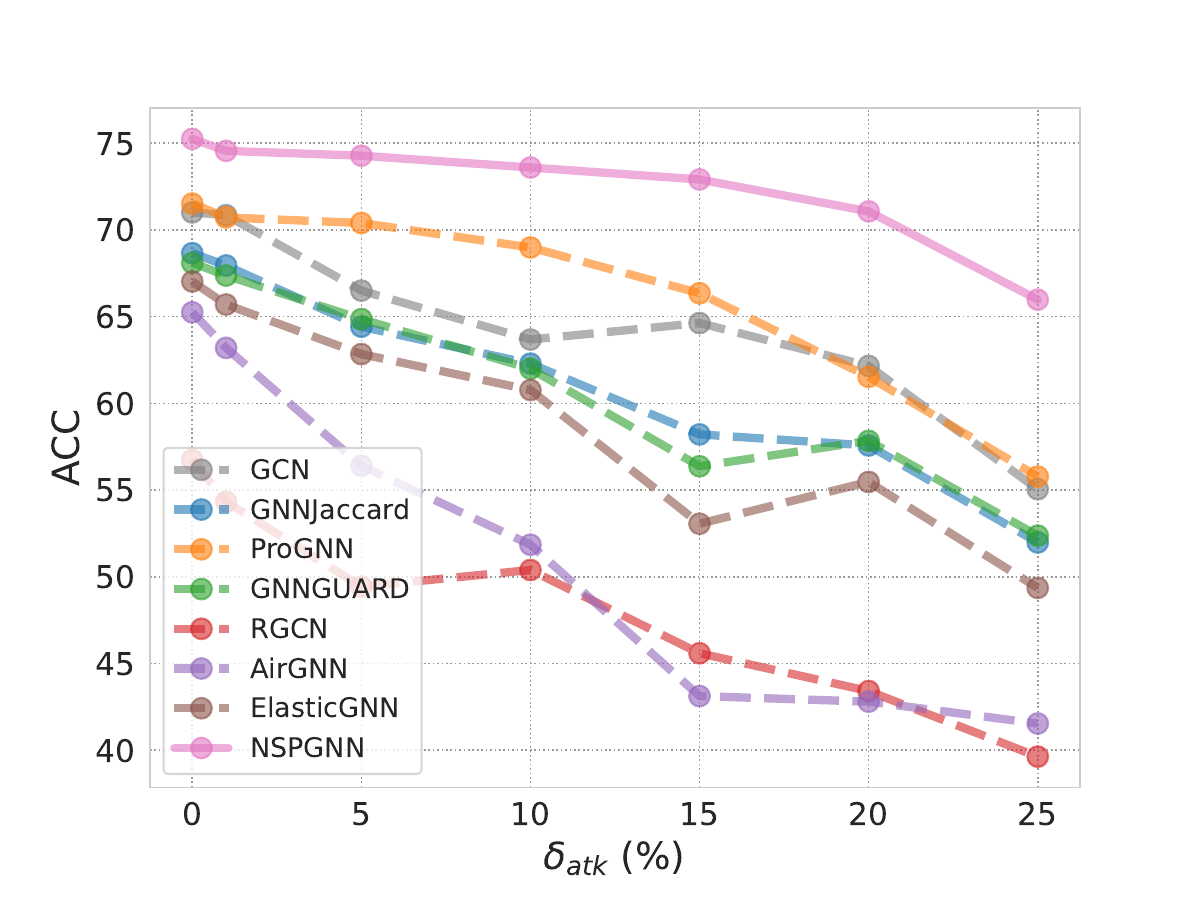}
        \caption{CiteSeer for Minmax}
        \label{Fig-acc-robust-model-citeseer-minmax}
    \end{subfigure}
    \hfill
    \begin{subfigure}[b]{0.32\textwidth}
        \centering
        \includegraphics[width=\textwidth,height=3.6cm]{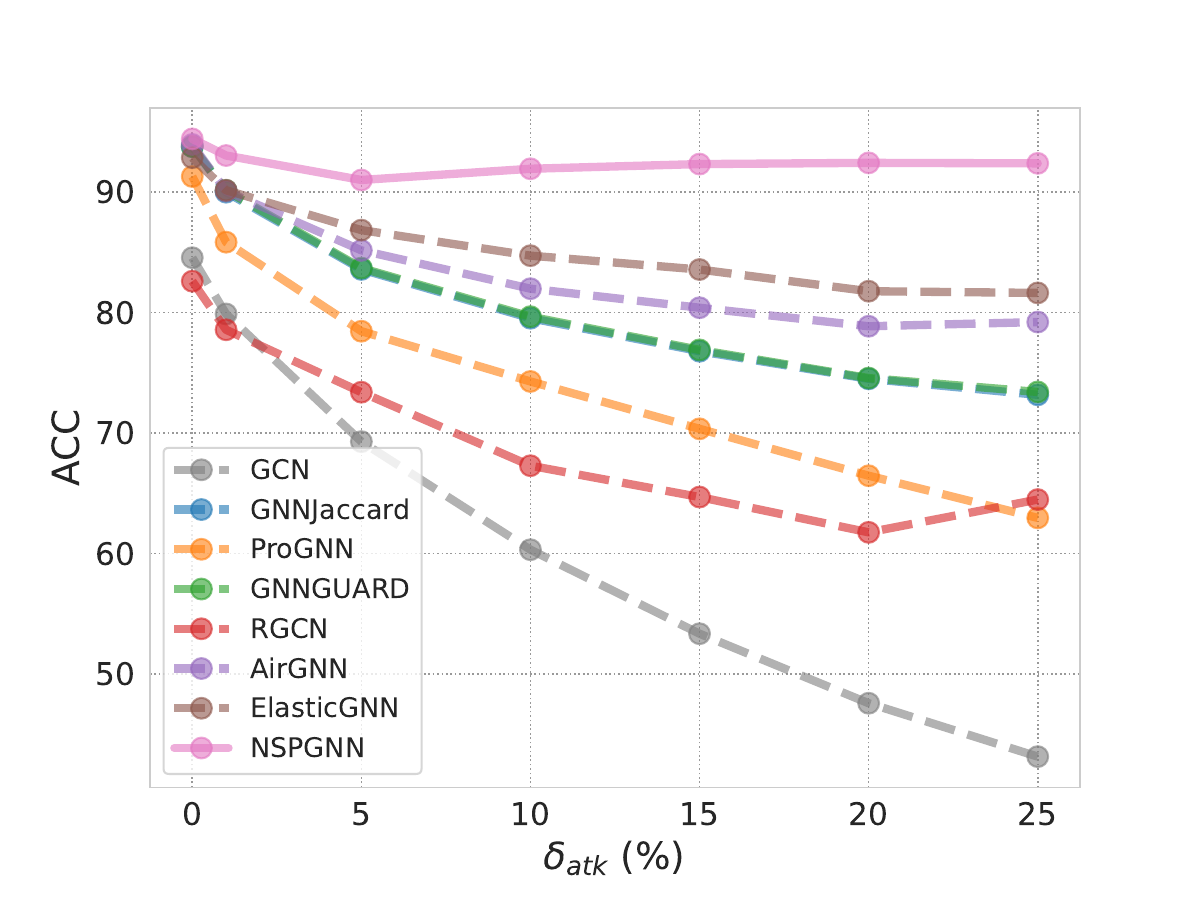}
        \caption{Photo for Minmax}
        \label{Fig-acc-robust-model-photo-minmax}
    \end{subfigure}
    \caption{Robust performances of robust GNNs over \textbf{heterophilic} and \textbf{homophilic} graphs.}
    \label{Fig-acc-robust-model}
\end{figure*}

\begin{figure*}[h]
    \centering
    \begin{subfigure}[b]{0.32\textwidth}
        \centering
        \includegraphics[width=\textwidth,height=3.6cm]{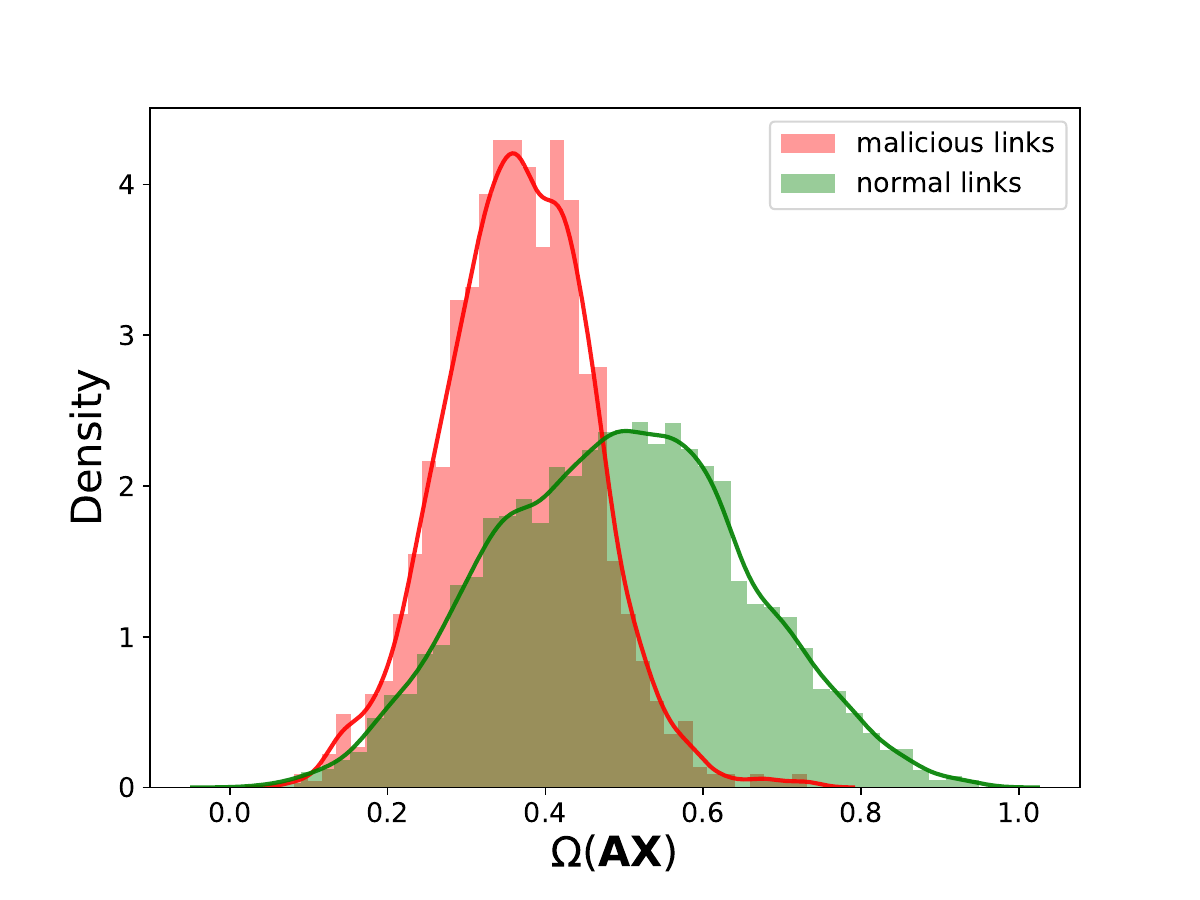}
        \caption{Cora for Mettack}
        \label{Fig-density-similarity-cora-mettack}
    \end{subfigure}
    \hfill
    \begin{subfigure}[b]{0.32\textwidth}
        \centering
        \includegraphics[width=\textwidth,height=3.6cm]{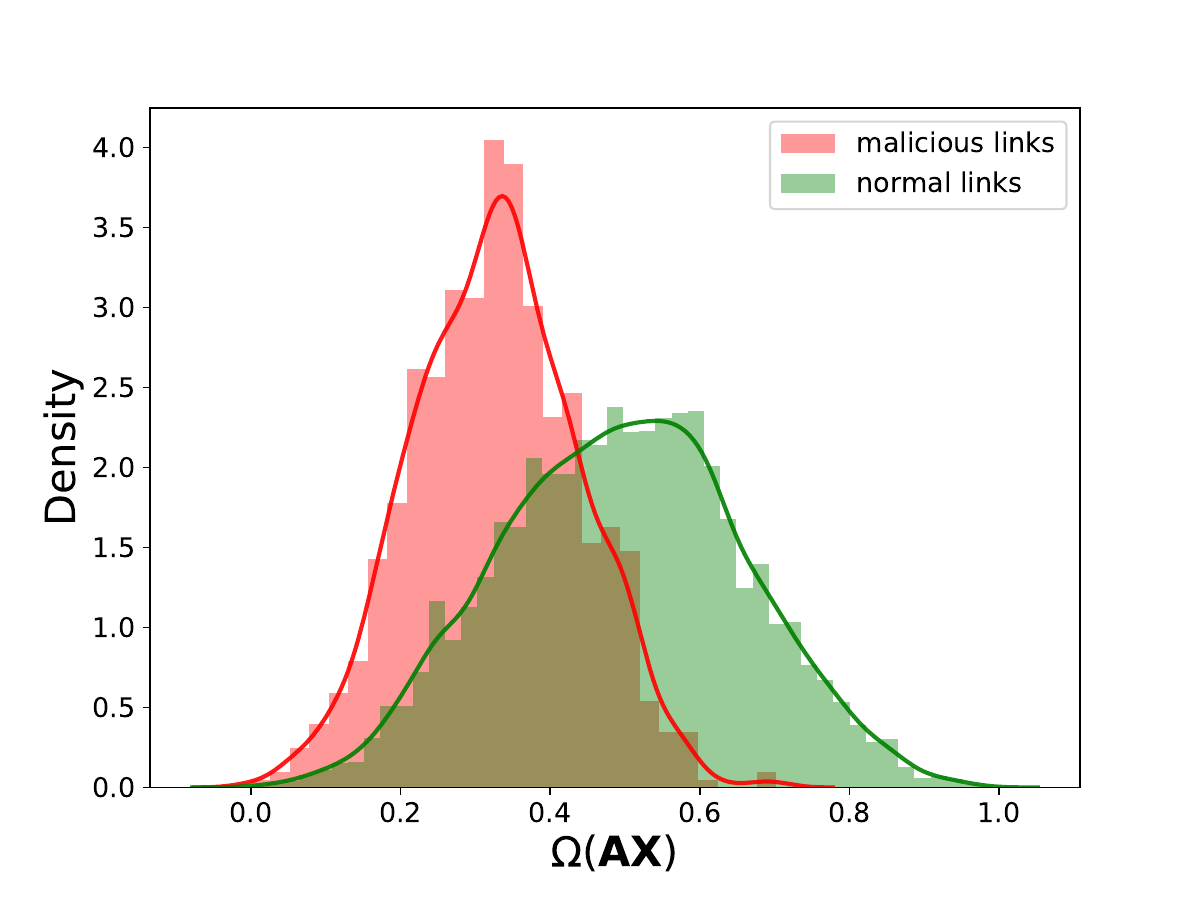}
        \caption{Cora for Minmax}
        \label{Fig-density-similarity-cora-minmax}
    \end{subfigure}
    \hfill
    \begin{subfigure}[b]{0.32\textwidth}
        \centering
        \includegraphics[width=\textwidth,height=3.6cm]{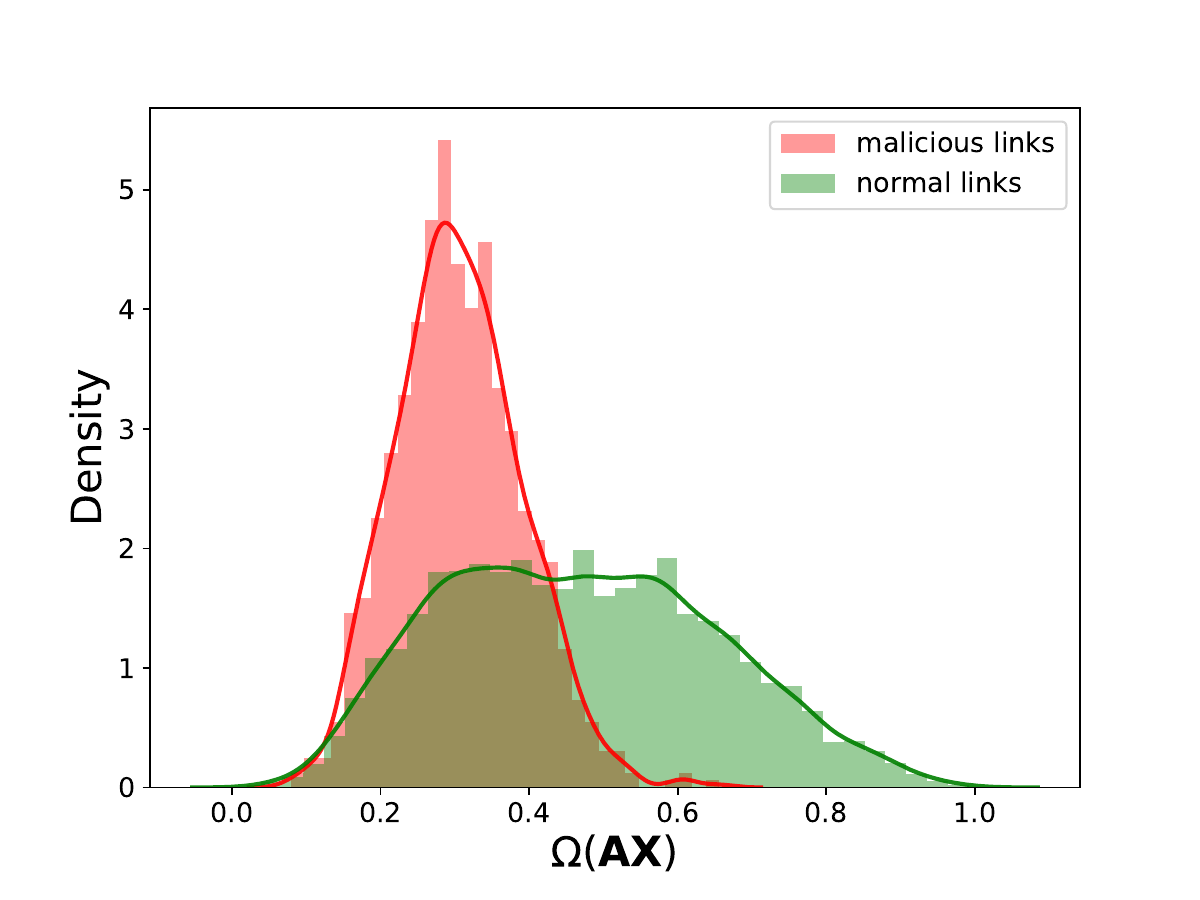}
        \caption{CiteSeer for Mettack}
    \end{subfigure}
    \hfill
    \begin{subfigure}[b]{0.32\textwidth}
        \centering
        \includegraphics[width=\textwidth,height=3.6cm]{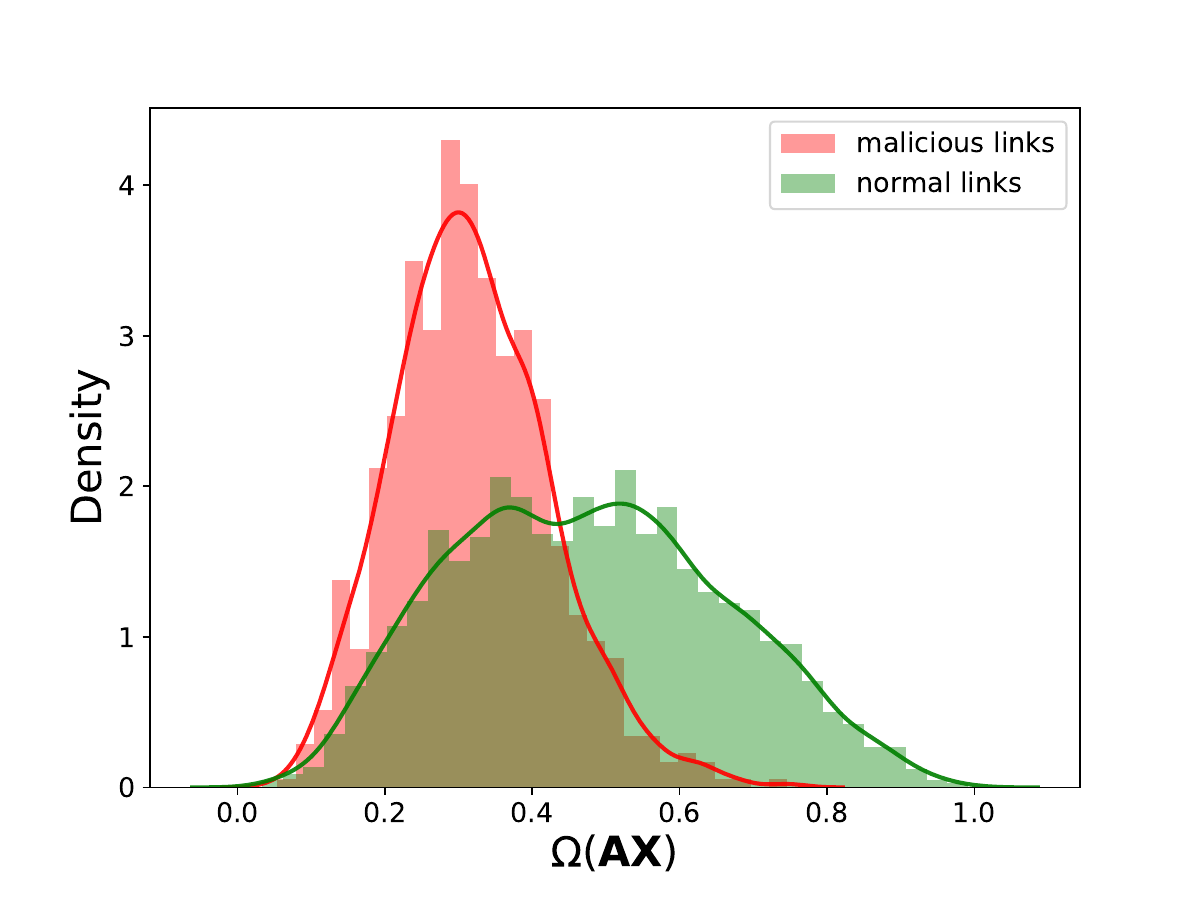}
        \caption{CiteSeer for Minmax}
    \end{subfigure}
    \hfill
    \begin{subfigure}[b]{0.32\textwidth}
        \centering
        \includegraphics[width=\textwidth,height=3.6cm]{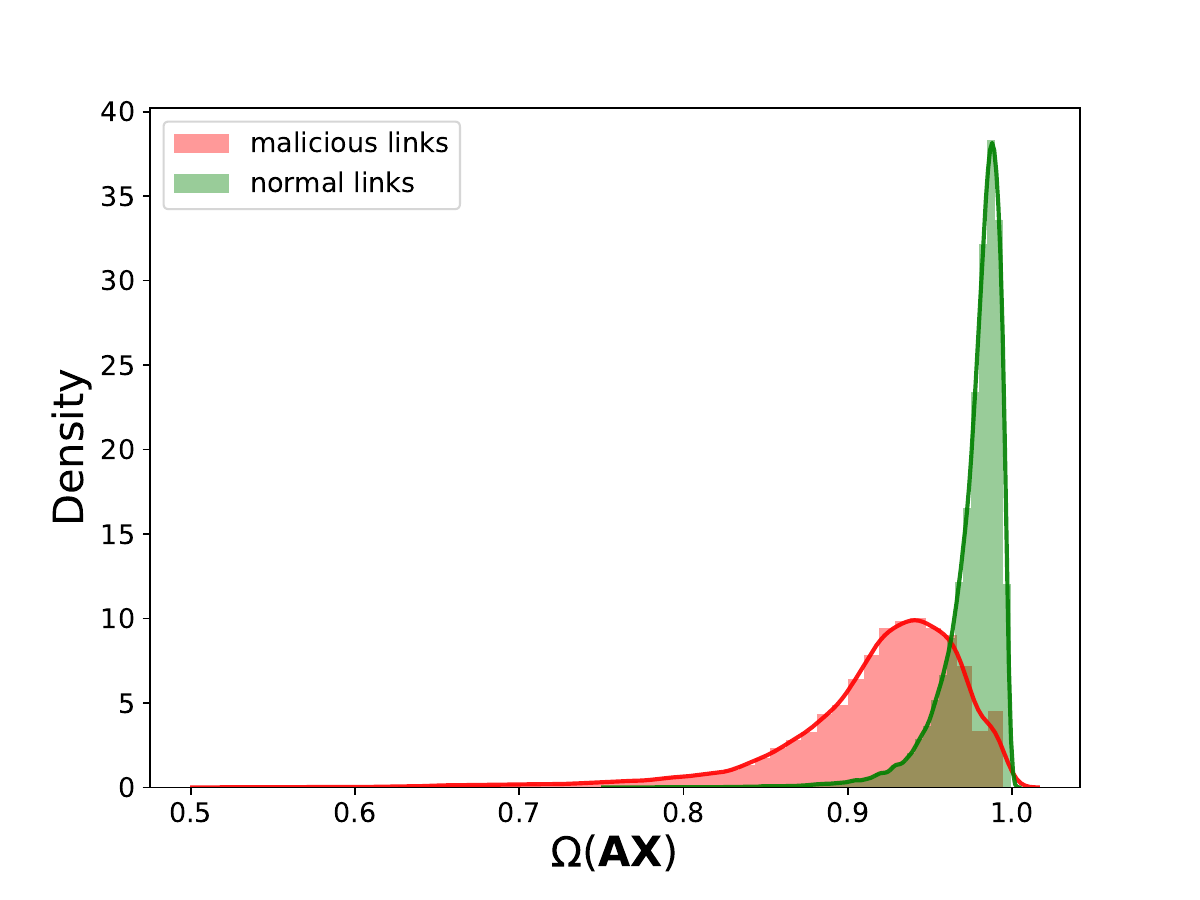}
        \caption{Photo for Mettack}
    \end{subfigure}
    \hfill
    \begin{subfigure}[b]{0.32\textwidth}
        \centering
        \includegraphics[width=\textwidth,height=3.6cm]{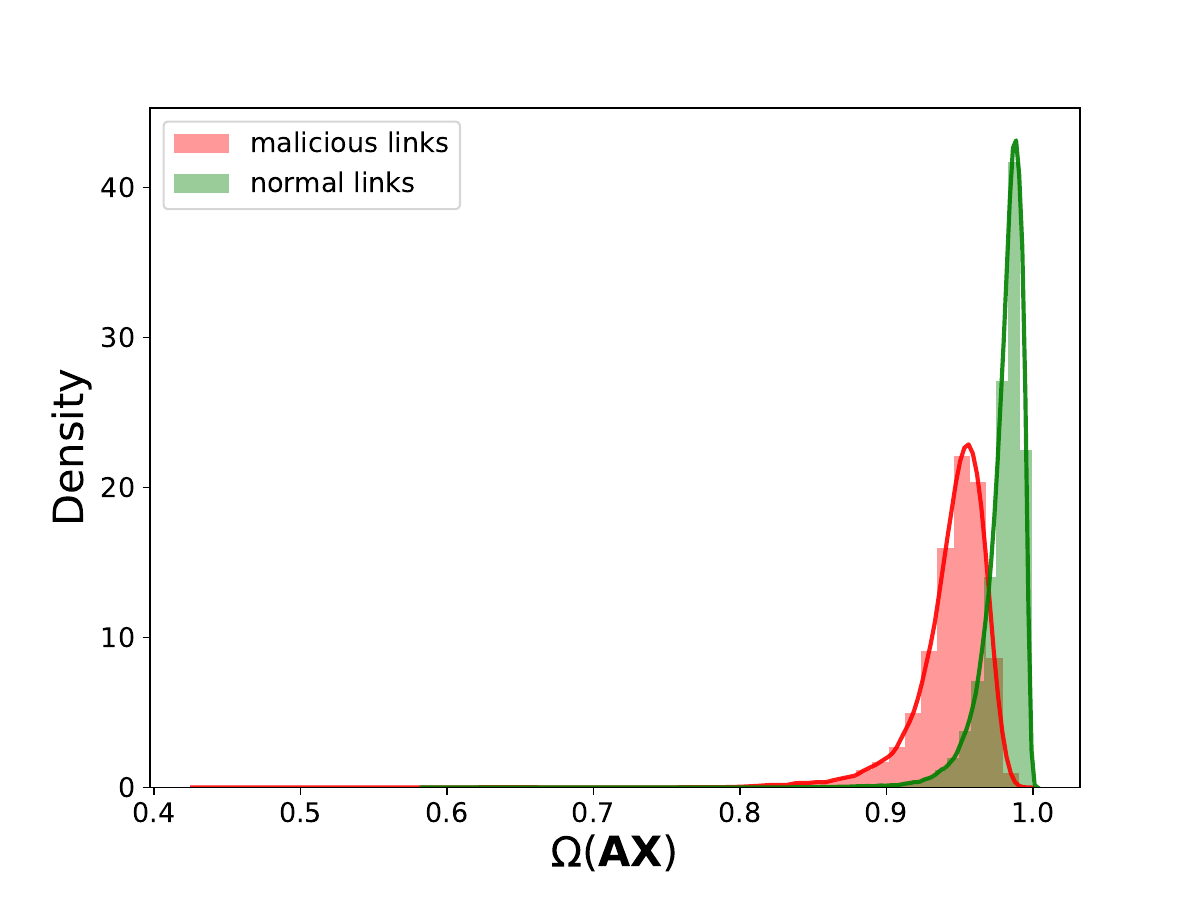}
        \caption{Photo for Minmax}
    \end{subfigure}
    \caption{Density plots for homophilic graphs under different attack methods.}
    \label{Fig-homo-simi}
\end{figure*}

\subsection{Performances on Clean Graphs}

The results in Tab.~\ref{tab-exp-clean-heter} and \ref{tab-exp-clean-homo} present the semi-supervised node classification performances of \textbf{NSPGNN} and other state-of-the-art baselines over four clean heterophilic and three clean homophilic graphs. It is worth noting that \textbf{NSPGNN w.o.} is the ablation of the proposed model where we omit the negative kNN graphs construction. It is surprising that although our proposed neighbor-similarity-preserved message-passing mechanism is specially crafted to alleviate the malicious effects of the potential adversarial manipulations in the poisoned graphs. It is probable that the positive kNN graph constructions can serve as a useful graph data augmentation technique to refine the clean graph's structures and make it particularly suitable for semi-supervised node classification tasks regardless of the homophily degree of the graph data.   

\begin{table}[h]
	\centering
	\caption{Performances of heterophilic GNNs over clean \textbf{heterophilic graphs}.}
        \label{tab-exp-clean-heter}
	\resizebox{0.45\textwidth}{!}{%
        \begin{tabular}{c|cccc}
        \toprule 
        Dataset & Chameleon & Squirrel & Crocodile & Tolokers \\
        \hline
        GPRGNN & 71.01 (1.12) & 51.30 (1.06) & 63.18 (0.31) & 0.70 (0.012)\\
        \hline
        FAGNN & 69.89 (0.63) & 52.62 (0.57) & 66.54 (0.37) & 0.73 (0.010)\\
        \hline
        H2GCN & 61.73 (0.80) & 34.39 (0.60) & 54.71 (3.63) & 0.76 (0.016)\\
        \hline
        GBKGNN & 69.39 (0.67) & 53.68 (0.77) & 69.66 (0.58) & 0.72 (0.015)\\
        \hline
        BMGCN & 67.90 (1.07) & 47.78 (0.61) & 69.42 (0.46) & 0.74 (0.01)\\
        \hline
        ACMGNN & 68.20 (1.20) & 51.21 (1.33) & 69.82 (0.78) & 0.67 (0.005)\\
        \hline
        GARNET & 67.74 (1.10) & 47.82 (0.92) & 68.53 (0.44) & 0.79 (0.010)\\
        \hline
        GCN-Jaccard & 68.60 (0.78) & 53.10 (0.91) & 71.60 (0.49) & 0.77 (0.015)\\
        \hline
        ProGNN & 67.46 (1.38) & 47.17 (1.13) & 65.20 (0.74) & 0.68 (0.007) \\
        \hline
        GNNGUARD & 68.29 (0.20) & 51.97 (0.25) & 67.80 (0.35) & 0.71 (0.015) \\
        \hline
        RGCN & 65.90 (1.03) & 39.88 (2.23) & 64.64 (0.41) & 0.68 (0.013) \\
        \hline
        AirGNN & 61.12 (1.00) & 40.86 (0.64) & 63.76 (0.20) & 0.69 (0.012) \\
        \hline
        ElasticGNN & 55.24 (1.04) & 34.50 (0.55) & 61.69 (0.66) & 0.70 (0.009) \\
        \hline
        NSPGNN w.o. & 72.87 (0.91) & 59.90 (0.63) & 71.85 (0.60) & 0.77 (0.010)\\
        \hline
        NSPGNN & \textbf{73.84 (0.56)} & \textbf{62.21 (0.69)} & \textbf{71.96 (0.93)} & \textbf{0.79 (0.014)}\\
        \bottomrule
        \end{tabular}
	}
\end{table}

\begin{table}[h]
	\centering
	\caption{Performances of robust GNNs over clean \textbf{homophilic graphs}.}
        \label{tab-exp-clean-homo}
	\resizebox{0.4\textwidth}{!}{%
        \begin{tabular}{c|cccc}
        \toprule 
        Dataset & Cora & CiteSeer & Photo \\
        \hline
        GCN & 83.59 (0.27) & 71.01 (0.34) & 84.54 (0.41)\\
        \hline
        GCNJaccard & 81.83 (0.21) & 68.66 (0.28) & 93.75 (0.94)\\
        \hline
        ProGNN & 83.33 (0.58) & 71.52 (1.08) & 91.28 (0.30)\\
        \hline
        GNNGUARD & 81.73 (0.25) & 68.10 (0,18) & 93.77 (0.75)\\
        \hline
        RGCN & 76.65 (1.31) & 56.74 (0.69) & 82.59 (3.21)\\
        \hline
        AirGNN & 81.42 (0.29) & 65.25 (0.50) & 93.94 (0.68)\\
        \hline
        H2GCN-SVD & 75.40 (0.38) & 53.91 (0.45) & -\\
        \hline
        NSPGNN w.o. & 83.46 (0.54) & 72.67 (1.67) & 94.20 (0.98)\\
        \hline
        NSPGNN & \textbf{83.96 (0.19)} & \textbf{75.25 (0.45)} & \textbf{94.40 (1.04)}\\
        \bottomrule
        \end{tabular}
	}
\end{table}

\subsection{Robustness over Heterophilic Graphs}
\subsubsection{Defense Against Mettack}
The results in Tab.~\ref{tab-exp-Mettack} present the semi-supervised node classification performances of \textbf{NSPGNN} and other HGNN baselines over three heterophilic graphs under Mettack with varying attacking powers. It is worth noting that \textbf{NSPGNN w.o.} is the ablation of the proposed model where we omit the negative kNN graphs construction. We have the following three observations: \textbf{1)} The proposed model \textbf{NSPGNN} and its ablation consistently outperform other HGNNs by a large margin. For example, \textbf{NSPGNN} outperforms the second-best performances around $24.72\%$, $27.12\%$, $29.88\%$, $32.99\%$, $34.31\%$, $33.71\%$ for Squirrel dataset with attacking power $\delta_{atk}$ equal to $1\%$, $5\%$, $10\%$, $15\%$, $20\%$, $25\%$ respectively. These phenomenons indicate that preserving the neighbor similarity can effectively defend against Mettack on heterophilic graphs. \textbf{2)} Compared to its ablation, \textbf{NSPGNN} achieves slightly better node classification accuracies for most cases, which demonstrates that the negative kNN graphs play an important role when supervising the neighbor-similarity-guided propagation. \textbf{3)}, the performance gains between \textbf{NSPGNN w.o} and the second-best performances are larger than the performance gains between \textbf{NSPGNN w.o} and \textbf{NSPGNN} indicates that propagation with positive kNN graphs is far more effective than propagation with negative kNN graphs. It makes sense since preserving the high-similarity information will likely prune the malicious links and thus enhance the adversarial robustness of GNNs.  

In the meanwhile, we also compare the proposed model with other robust model benchmarks against Mettack on heterophilic graphs in Fig.~\ref{fig-acc-robust-model-chameleon-Mettack} and \ref{fig-acc-robust-model-squirrel-Mettack}. It is observed that \textbf{NSPGNN} consistently outperforms other robust models by a large margin. This is due to the assumption that pruning links connecting dissimilar ego node features can enhance the adversarial robustness is unsuitable for heterophilic graphs. There already exists a large amount of inter-class links that connect dissimilar ego node features in heterophilic graphs and pruning links according to this strategy may likely delete normal inter-class links. However, \textbf{NSPGNN} prunes links based on the aggregated neighbors' feature similarity instead of ego feature similarity and thus can precisely prune the malicious inter-class links, which makes it particularly suitable for enhancing the adversarial robustness of GNNs over heterophilic graphs.      

\subsubsection{Defense Against Minmax}
We also present the adversarial robustness of GNNs against Minmax--another typical graph structural attack method on heterophilic graphs in Tab.~\ref{tab-exp-Minmax}. Similar to Mettack, it is observed that \textbf{NSPGNN} and its ablation consistently outperform other HGNN baselines by a large margin in most cases. For example, the performance gains between \textbf{NSPGNN} and the second-best performances for Squirrel with $\delta_{atk}=1\%$, $5\%$, $10\%$, $15\%$, $20\%$, $25\%$ are $18.68\%$, $25.89\%$, $33.67\%$, $25.45\%$, $26.15\%$, $25.13\%$. It is demonstrated that preserving the neighbor similarity can also effectively mitigate the malicious effects caused by Minmax. Overall, the phenomenon that \textbf{NSPGNN} performs the best in most cases both against Mettack and Minmax illustrates that our proposed robust model indeed can effectively provide sufficient valid signals to supervise the propagation, which smooth the distance of intra-class nodes and enlarge the distance of inter-class nodes.   

On the other hand, Fig.~\ref{fig-acc-robust-model-chameleon-Minmax}, \ref{fig-acc-robust-model-squirrel-Minmax}, \ref{fig-acc-robust-model-crocodile-Minmax} present the robust performances of the proposed method compared with other robust baselines against Minmax. It is observed that \textbf{NSPGNN} significantly outperforms other robust models with different attacking powers. The largest gap between \textbf{NSPGNN} and the second-best performances are $24.24\%$, $33.41\%$ and $18.86\%$ for Chameleon, Squirrel and Crocodile respectively. In the meanwhile, the performance gaps between the proposed method and other robust models increase as the attacking power increases. This phenomenon demonstrates that preserving neighbor similarity can precisely prune a proportion of malicious effects even when the poisoned graphs are highly contaminated while vanilla robust models fail to effectively mitigate the malicious effects, particularly on highly poisoned graphs.    

\begin{figure}[h]
	\centering
    \begin{subfigure}[b]{0.234\textwidth}
        \centering
        \includegraphics[width=\textwidth,height=3.5cm]{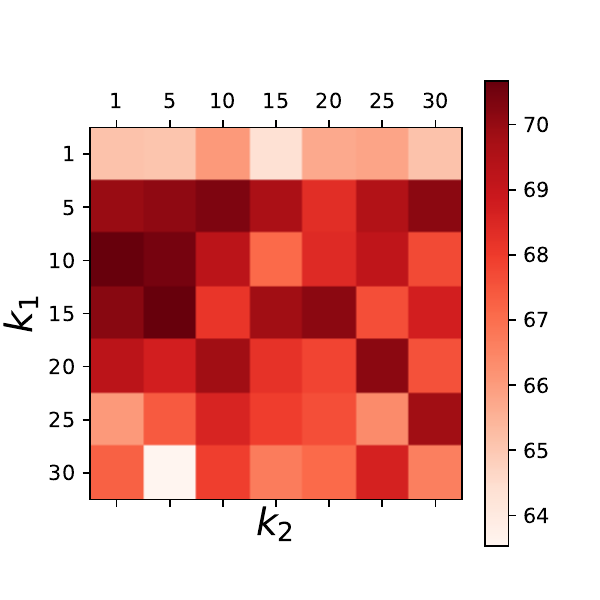}
        \caption{Cora for Mettack}
    \end{subfigure}
    \hfill
    \begin{subfigure}[b]{0.234\textwidth}
        \centering
        \includegraphics[width=\textwidth,height=3.5cm]{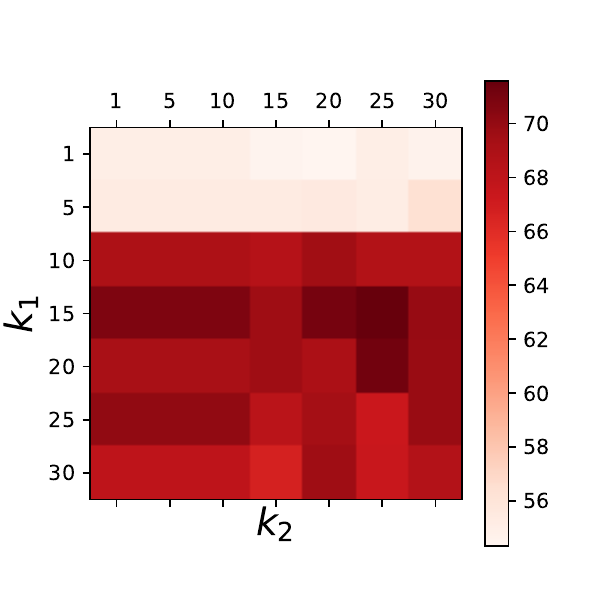}
        \caption{Cora for Minmax}
    \end{subfigure}
    \hfill
    \begin{subfigure}[b]{0.234\textwidth}
        \centering
        \includegraphics[width=\textwidth,height=3.5cm]{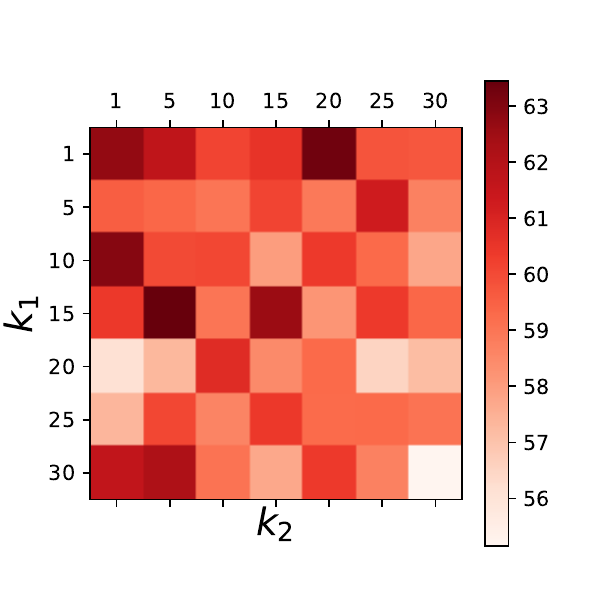}
        \caption{CiteSeer for Mettack}
    \end{subfigure}
    \hfill
    \begin{subfigure}[b]{0.234\textwidth}
        \centering
        \includegraphics[width=\textwidth,height=3.5cm]{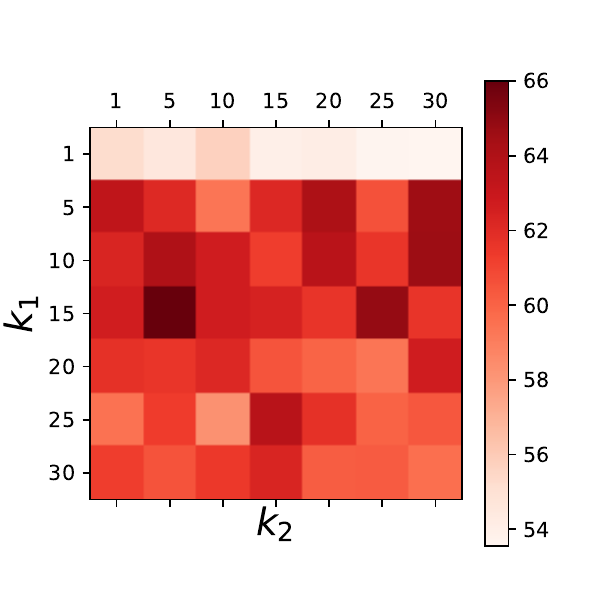}
        \caption{CiteSeer for Minmax}
    \end{subfigure}
    \hfill
    \begin{subfigure}[b]{0.234\textwidth}
    	\centering
    	\includegraphics[width=\textwidth,height=3.5cm]{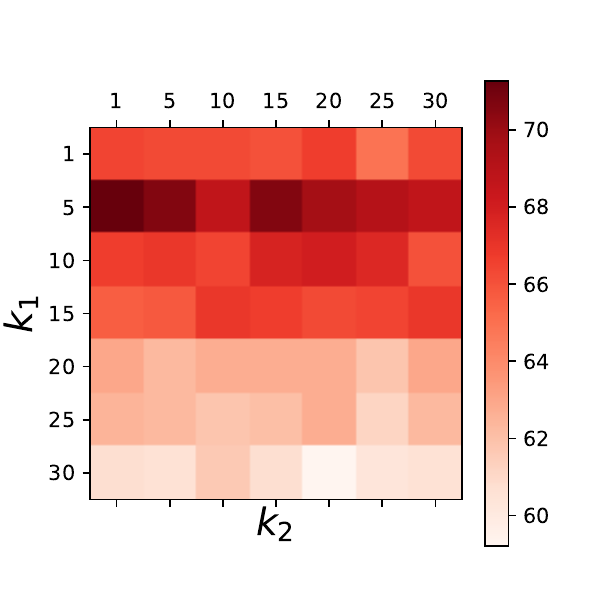}
    	\caption{Chameleon for Mettack}
    \end{subfigure}
	\hfill
    \begin{subfigure}[b]{0.234\textwidth}
    	\centering
     	\includegraphics[width=\textwidth,height=3.5cm]{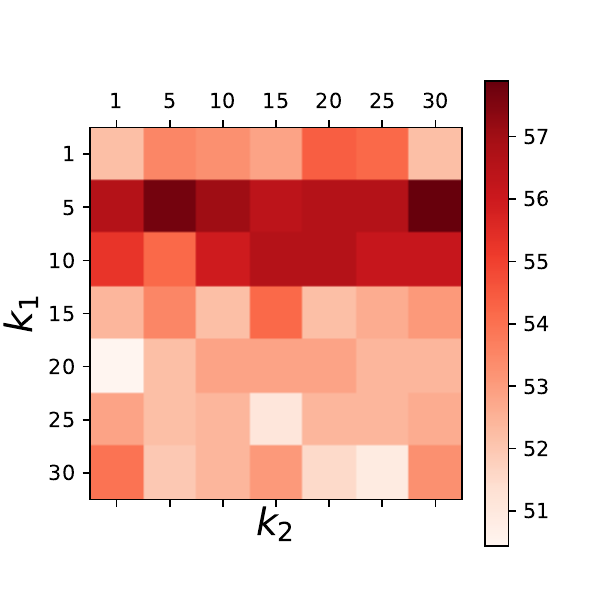}
     	\caption{Chameleon for Minmax}
    \end{subfigure}
    \hfill
    \begin{subfigure}[b]{0.234\textwidth}
        \centering
        \includegraphics[width=\textwidth,height=3.5cm]{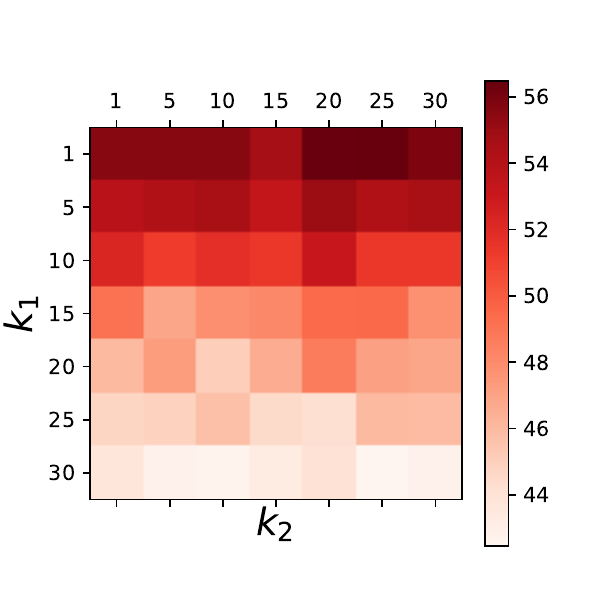}
        \caption{Squirrel for Mettack}
    \end{subfigure}
    \hfill
    \begin{subfigure}[b]{0.234\textwidth}
        \centering
        \includegraphics[width=\textwidth,height=3.5cm]{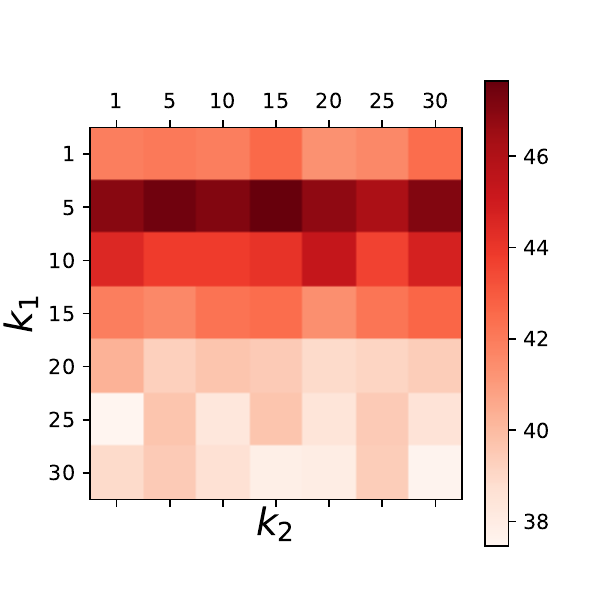}
        \caption{Squirrel for Minmax}
    \end{subfigure}
    \caption{Sensitivity analysis on the number of nearest neighbors for positive kNN graph ($k_{1}$) and negative kNN graph ($k_{2}$).}
    \label{fig-sensitivity-mat-chameleon}
\end{figure}

\subsection{Robustness over Homophilic Graphs}
In this section, we analyze the adversarial robustness of the proposed model over homophilic graphs and experimentally verify that preserving neighbor similarity can also deal with malicious effects on homophilic graphs. Fig.~\ref{Fig-acc-robust-model-cora-mettack}, \ref{Fig-acc-robust-model-cora-minmax}, \ref{Fig-acc-robust-model-citeseer-mettack}, \ref{Fig-acc-robust-model-citeseer-minmax}, \ref{Fig-acc-robust-model-photo-mettack}, \ref{Fig-acc-robust-model-photo-minmax} presents the robust performances of the proposed model compared with current robust baselines against graph adversarial attacks on Cora, CiteSeer and Photo. The observations are two-fold: \textbf{1)} \textbf{NSPGNN} consistently achieves the best performances compared to the robust baselines under different attacking scenarios for homophilic graphs. It indicates that preserving neighbor similarity can also mitigate the adversarial effects of graph adversarial attacks on homophilic graphs and even performs better than preserving ego similarity (GCN-Jaccard, ProGNN, GNNGUARD etc.). This result coincides with the theoretical proof of Theorem.~\ref{theorem-1} since the theoretical result is independent of the homophily ratio of the input graph data. \textbf{2)} Similar to the results in Tab.~\ref{tab-exp-Minmax}, \textbf{NSPGNN} performs the best among the robust models on homophilic clean graphs, which demonstrates that preserving the neighbor similarity can serve as an effective data augmentation technique to refine the clean graph's structure for better semi-supervised node classification performances. In contrast, preserving ego similarity such as GCN-Jaccard may sacrifice the clean graph's accuracy. This phenomenon is supported by Theorem.~\ref{theorem-1} since the attack loss (negative classification loss) is negatively related to neighbor similarity instead of ego similarity. Thus, preserving ego similarity cannot optimize the classification loss for the clean graph.    

\subsection{Similarity on Homophilic Graphs}
It is previously mentioned in Sec.~\ref{sec-vulnerability} that the newly defined similarity matrix $\Omega(\mathbf{A}^{\tau}\mathbf{X})$ can successfully tell apart the malicious links out of normal links based on the density of the similarity scores for each link. In this section, we additionally explore whether this similarity metric can serve as a malicious effect detector for homophilic graphs. Fig.~\ref{Fig-homo-simi} provides the density plots of similarity scores for homophilic graphs. These results indicate that the similarity metric can also distinguish malicious links from normal links on homophilic graphs. Additionally, it can also verify the success of \textbf{NSPGNN} against graph adversarial attacks on homophilic graphs.   
\subsection{Sensitivity Analysis}
In this section, we provide the sensitivity analysis on the number of nearest neighbors of positive kNN graph $k_{1}$ and negative kNN graph $k_{2}$. Fig.~\ref{fig-sensitivity-mat-chameleon} presents the impacts of different choices of $\tau_1=1$ and $\tau_2=2$ in the proposed model. It is observed that the performance of \textbf{NSPGNN} is more sensitive to $k_{1}$ than $k_{2}$. This phenomenon indicates that the impact of the positive kNN graph is larger than the negative kNN graph, which coincides with the ablation results. 

\subsection{Impacts of $\tau$}
In this section, we analyze different choices of the vital hyperparameter $\tau$ for our proposed method. We evaluate the clean accuracies and robust performances of \textbf{NSPGNN} with different settings of $\tau$ on Tab.~\ref{tab-varying-tau}. For example, $\{1,2,3\}$ means we construct the dual kNN graphs based on three kinds of similarity scores $\Omega(\mathbf{AX})$, $\Omega(\mathbf{A}^2\mathbf{X})$ and $\Omega(\mathbf{A}^3\mathbf{X})$. It is observed that choosing $\tau=1$ and $\tau=2$ to construct the dual kNN graphs can achieve the best performances on clean and poisoned graphs, which is consistent with the empirical results on Tab.~\ref{tab-KL-Divergence} that $\tau=1$ and $\tau=2$ can provide the largest distance between the benign links and malicious links based on the similarity scores.    

\begin{table}[h]
	\centering
	\caption{Impacts of different choices of $\tau$.}
	\label{tab-varying-tau}
	\resizebox{0.8\columnwidth}{!}{%
		\begin{tabular}{c|cccccc}
			\toprule[1.pt]
			$\delta_{atk}$ &$\{1\}$&$\{2\}$&$\{1,2\}$&$\{1,3\}$&$\{2,3\}$&$\{1,2,3\}$\\
			\hline
           $0\%$ & $70.39$ & $67.98$ & $\mathbf{73.84}$ & $68.42$ & $68.20$ & $66.67$\\ 
           \hline
           $25\%$ & $56.58$ & $57.46$ & $\mathbf{60.53}$ & $57.24$ & $54.39$ & $57.46$\\
			\bottomrule[1.pt]
		\end{tabular}
	}
\end{table}

\subsection{Impacts of neighbor-similarity-preserved propagation}
In this section, we analyze the impact of adaptive neighbor-similarity-preserved propagation in the proposed model. In the ablation version, we remove the neighbor-similarity-preserved propagation mechanism and directly sanitize the potential malicious links based on the descending order of the similarity scores with $\tau=1$ and $\tau=2$, and then feed the sanitized graph into a graph neural network for training. The results in Tab.~\ref{tab-sanitize} demonstrate that utilizing the adaptive neighbor-similarity-preserved propagation performs much better than directly sanitizing the raw graph data via the similarity scores.

\begin{table}[h]
	\centering
	\caption{Impacts of neighbor-similarity-preserved propagation.}
	\label{tab-sanitize}
	\resizebox{0.8\columnwidth}{!}{%
		\begin{tabular}{c|ccc}
			\toprule[1.pt]
			$\delta_{atk}$ &NSP-Sanitize ($\tau=1$)&NSP-Sanitize ($\tau=2$)&NSPGNN\\
			\hline
           $0\%$ & $62.94$ & $64.04$ & $\mathbf{73.84}$\\ 
           \hline
           $25\%$ & $49.12$ & $47.15$ & $\mathbf{60.53}$\\
			\bottomrule[1.pt]
		\end{tabular}
	}
\end{table}


\section{Conclusion}
We discover the vulnerability of the graph data management system and present an effective robust graph structural learning approach to adaptively supervise the reliable message-passing mechanism during training. Specifically, it endeavors to enhance the adversarial robustness of graph learning methods on both homophilic and heterophilic graphs by preserving neighbor similarities. Through comprehensive analysis, we establish a connection between neighbor similarities and the negative classification loss, revealing that malicious adversaries tend to connect node pairs with low-level neighbor similarities. Leveraging this insight, we propose a novel robust graph structural learning approach where node features are adaptively propagated along the positive kNN graphs to smooth the features of node pairs with high similarity scores, and along the negative kNN graphs to discriminate the node pairs with low similarity scores. In addition, preserving neighbor similarity can serve as a form of graph data augmentation, improving the performance of node classification by refining the clean graph's structure. Thus, our proposed method can lay the foundation for enhancing the security of the graph data management system under diverse graph homophily.

\bibliographystyle{IEEEtran}
\bibliography{citation}

\begin{thebibliography}{10}
\providecommand{\url}[1]{#1}
\csname url@samestyle\endcsname
\providecommand{\newblock}{\relax}
\providecommand{\bibinfo}[2]{#2}
\providecommand{\BIBentrySTDinterwordspacing}{\spaceskip=0pt\relax}
\providecommand{\BIBentryALTinterwordstretchfactor}{4}
\providecommand{\BIBentryALTinterwordspacing}{\spaceskip=\fontdimen2\font plus
\BIBentryALTinterwordstretchfactor\fontdimen3\font minus
  \fontdimen4\font\relax}
\providecommand{\BIBforeignlanguage}[2]{{%
\expandafter\ifx\csname l@#1\endcsname\relax
\typeout{** WARNING: IEEEtran.bst: No hyphenation pattern has been}%
\typeout{** loaded for the language `#1'. Using the pattern for}%
\typeout{** the default language instead.}%
\else
\language=\csname l@#1\endcsname
\fi
#2}}
\providecommand{\BIBdecl}{\relax}
\BIBdecl

\bibitem{battaglia2018relational}
P.~W. Battaglia, J.~B. Hamrick, V.~Bapst, A.~Sanchez-Gonzalez, V.~Zambaldi,
  M.~Malinowski, A.~Tacchetti, D.~Raposo, A.~Santoro, R.~Faulkner
  \emph{et~al.}, ``Relational inductive biases, deep learning, and graph
  networks,'' \emph{arXiv preprint arXiv:1806.01261}, 2018.

\bibitem{zhang2020deep}
Z.~Zhang, P.~Cui, and W.~Zhu, ``Deep learning on graphs: A survey,'' \emph{IEEE
  Transactions on Knowledge and Data Engineering}, vol.~34, no.~1, pp.
  249--270, 2020.

\bibitem{wu2020comprehensive}
Z.~Wu, S.~Pan, F.~Chen, G.~Long, C.~Zhang, and S.~Y. Philip, ``A comprehensive
  survey on graph neural networks,'' \emph{IEEE transactions on neural networks
  and learning systems}, vol.~32, no.~1, pp. 4--24, 2020.

\bibitem{GCN}
\BIBentryALTinterwordspacing
T.~N. Kipf and M.~Welling, ``Semi-supervised classification with graph
  convolutional networks,'' in \emph{International Conference on Learning
  Representations}, 2017. [Online]. Available:
  \url{https://openreview.net/forum?id=SJU4ayYgl}
\BIBentrySTDinterwordspacing

\bibitem{GraphSage}
W.~Hamilton, Z.~Ying, and J.~Leskovec, ``Inductive representation learning on
  large graphs,'' \emph{Advances in neural information processing systems},
  vol.~30, 2017.

\bibitem{li2019semi}
J.~Li, Y.~Rong, H.~Cheng, H.~Meng, W.~Huang, and J.~Huang, ``Semi-supervised
  graph classification: A hierarchical graph perspective,'' in \emph{The World
  Wide Web Conference}, 2019, pp. 972--982.

\bibitem{shervashidze2011weisfeiler}
N.~Shervashidze, P.~Schweitzer, E.~J. Van~Leeuwen, K.~Mehlhorn, and K.~M.
  Borgwardt, ``Weisfeiler-lehman graph kernels.'' \emph{Journal of Machine
  Learning Research}, vol.~12, no.~9, 2011.

\bibitem{shibata2012link}
N.~Shibata, Y.~Kajikawa, and I.~Sakata, ``Link prediction in citation
  networks,'' \emph{Journal of the American society for information science and
  technology}, vol.~63, no.~1, pp. 78--85, 2012.

\bibitem{daud2020applications}
N.~N. Daud, S.~H. Ab~Hamid, M.~Saadoon, F.~Sahran, and N.~B. Anuar,
  ``Applications of link prediction in social networks: A review,''
  \emph{Journal of Network and Computer Applications}, vol. 166, p. 102716,
  2020.

\bibitem{homophily}
M.~McPherson, L.~Smith-Lovin, and J.~M. Cook, ``Birds of a feather: Homophily
  in social networks,'' \emph{Annual review of sociology}, vol.~27, no.~1, pp.
  415--444, 2001.

\bibitem{H2GCN}
J.~Zhu, Y.~Yan, L.~Zhao, M.~Heimann, L.~Akoglu, and D.~Koutra, ``Beyond
  homophily in graph neural networks: Current limitations and effective
  designs,'' \emph{Advances in neural information processing systems}, vol.~33,
  pp. 7793--7804, 2020.

\bibitem{GBKGNN}
L.~Du, X.~Shi, Q.~Fu, X.~Ma, H.~Liu, S.~Han, and D.~Zhang, ``Gbk-gnn: Gated
  bi-kernel graph neural networks for modeling both homophily and
  heterophily,'' in \emph{Proceedings of the ACM Web Conference 2022}, 2022,
  pp. 1550--1558.

\bibitem{FAGNN}
D.~Bo, X.~Wang, C.~Shi, and H.~Shen, ``Beyond low-frequency information in
  graph convolutional networks,'' in \emph{Proceedings of the AAAI Conference
  on Artificial Intelligence}, vol.~35, no.~5, 2021, pp. 3950--3957.

\bibitem{GPRGNN}
\BIBentryALTinterwordspacing
E.~Chien, J.~Peng, P.~Li, and O.~Milenkovic, ``Adaptive universal generalized
  pagerank graph neural network,'' in \emph{International Conference on
  Learning Representations}, 2021. [Online]. Available:
  \url{https://openreview.net/forum?id=n6jl7fLxrP}
\BIBentrySTDinterwordspacing

\bibitem{ACMGNN}
S.~Luan, C.~Hua, Q.~Lu, J.~Zhu, M.~Zhao, S.~Zhang, X.-W. Chang, and D.~Precup,
  ``Revisiting heterophily for graph neural networks,'' \emph{Advances in
  neural information processing systems}, vol.~35, pp. 1362--1375, 2022.

\bibitem{Nettack}
D.~Z{\"u}gner, A.~Akbarnejad, and S.~G{\"u}nnemann, ``Adversarial attacks on
  neural networks for graph data,'' in \emph{SIGKDD}, 2018, pp. 2847--2856.

\bibitem{Mettack}
\BIBentryALTinterwordspacing
D.~Zügner and S.~Günnemann, ``Adversarial attacks on graph neural networks
  via meta learning,'' in \emph{International Conference on Learning
  Representations}, 2019. [Online]. Available:
  \url{https://openreview.net/forum?id=Bylnx209YX}
\BIBentrySTDinterwordspacing

\bibitem{TopologyAttack}
K.~Xu, H.~Chen, S.~Liu, P.-Y. Chen, T.-W. Weng, M.~Hong, and X.~Lin, ``Topology
  attack and defense for graph neural networks: An optimization perspective,''
  in \emph{Proceedings of the Twenty-Eighth International Joint Conference on
  Artificial Intelligence, {IJCAI-19}}.\hskip 1em plus 0.5em minus 0.4em\relax
  International Joint Conferences on Artificial Intelligence Organization, 7
  2019, pp. 3961--3967.

\bibitem{BinarizedAttack}
Y.~Zhu, Y.~Lai, K.~Zhao, X.~Luo, M.~Yuan, J.~Ren, and K.~Zhou,
  ``Binarizedattack: Structural poisoning attacks to graph-based anomaly
  detection,'' in \emph{2022 IEEE 38th International Conference on Data
  Engineering (ICDE)}, 2022, pp. 14--26.

\bibitem{ProGNN}
W.~Jin, Y.~Ma, X.~Liu, X.~Tang, S.~Wang, and J.~Tang, ``Graph structure
  learning for robust graph neural networks,'' in \emph{Proceedings of the 26th
  ACM SIGKDD international conference on knowledge discovery and data mining},
  2020, pp. 66--74.

\bibitem{GCNJaccard}
H.~Wu, C.~Wang, Y.~Tyshetskiy, A.~Docherty, K.~Lu, and L.~Zhu, ``Adversarial
  examples for graph data: Deep insights into attack and defense,'' in
  \emph{Proceedings of the Twenty-Eighth International Joint Conference on
  Artificial Intelligence, {IJCAI-19}}.\hskip 1em plus 0.5em minus 0.4em\relax
  International Joint Conferences on Artificial Intelligence Organization, 7
  2019, pp. 4816--4823.

\bibitem{GCNSVD}
\BIBentryALTinterwordspacing
N.~Entezari, S.~A. Al-Sayouri, A.~Darvishzadeh, and E.~E. Papalexakis, ``All
  you need is low (rank): Defending against adversarial attacks on graphs,'' in
  \emph{Proceedings of the 13th International Conference on Web Search and Data
  Mining}, ser. WSDM '20.\hskip 1em plus 0.5em minus 0.4em\relax New York, NY,
  USA: Association for Computing Machinery, 2020, p. 169–177. [Online].
  Available: \url{https://doi.org/10.1145/3336191.3371789}
\BIBentrySTDinterwordspacing

\bibitem{GNNGUARD}
X.~Zhang and M.~Zitnik, ``Gnnguard: Defending graph neural networks against
  adversarial attacks,'' in \emph{Proceedings of the 34th International
  Conference on Neural Information Processing Systems}, ser. NIPS'20.\hskip 1em
  plus 0.5em minus 0.4em\relax Red Hook, NY, USA: Curran Associates Inc., 2020.

\bibitem{chameleon}
\BIBentryALTinterwordspacing
R.~A. Rossi and N.~K. Ahmed, ``The network data repository with interactive
  graph analytics and visualization,'' in \emph{AAAI}, 2015. [Online].
  Available: \url{https://networkrepository.com}
\BIBentrySTDinterwordspacing

\bibitem{BMGNN}
D.~He, C.~Liang, H.~Liu, M.~Wen, P.~Jiao, and Z.~Feng, ``Block modeling-guided
  graph convolutional neural networks,'' in \emph{Proceedings of the AAAI
  conference on artificial intelligence}, vol.~36, no.~4, 2022, pp. 4022--4029.

\bibitem{garnet}
\BIBentryALTinterwordspacing
C.~Deng, X.~Li, Z.~Feng, and Z.~Zhang, ``{GARNET}: Reduced-rank topology
  learning for robust and scalable graph neural networks,'' in \emph{The First
  Learning on Graphs Conference}, 2022. [Online]. Available:
  \url{https://openreview.net/forum?id=kvwWjYQtmw}
\BIBentrySTDinterwordspacing

\bibitem{SGC}
F.~Wu, A.~Souza, T.~Zhang, C.~Fifty, T.~Yu, and K.~Weinberger, ``Simplifying
  graph convolutional networks,'' in \emph{Proceedings of the 36th
  International Conference on Machine Learning}, ser. Proceedings of Machine
  Learning Research, K.~Chaudhuri and R.~Salakhutdinov, Eds., vol.~97.\hskip
  1em plus 0.5em minus 0.4em\relax PMLR, 09--15 Jun 2019, pp. 6861--6871.

\bibitem{AttackCD}
\BIBentryALTinterwordspacing
J.~Li, H.~Zhang, Z.~Han, Y.~Rong, H.~Cheng, and J.~Huang, ``Adversarial attack
  on community detection by hiding individuals,'' in \emph{Proceedings of The
  Web Conference 2020}, ser. WWW '20.\hskip 1em plus 0.5em minus 0.4em\relax
  New York, NY, USA: Association for Computing Machinery, 2020, p. 917–927.
  [Online]. Available: \url{https://doi.org/10.1145/3366423.3380171}
\BIBentrySTDinterwordspacing

\bibitem{cossim}
A.~Singhal \emph{et~al.}, ``Modern information retrieval: A brief overview,''
  \emph{IEEE Data Eng. Bull.}, vol.~24, no.~4, pp. 35--43, 2001.

\bibitem{oversmoothing}
Q.~Li, Z.~Han, and X.-M. Wu, ``Deeper insights into graph convolutional
  networks for semi-supervised learning,'' in \emph{Proceedings of the AAAI
  conference on artificial intelligence}, vol.~32, no.~1, 2018.

\bibitem{KLD}
\BIBentryALTinterwordspacing
I.~Csiszar, ``{$I$-Divergence Geometry of Probability Distributions and
  Minimization Problems},'' \emph{The Annals of Probability}, vol.~3, no.~1,
  pp. 146 -- 158, 1975. [Online]. Available:
  \url{https://doi.org/10.1214/aop/1176996454}
\BIBentrySTDinterwordspacing

\bibitem{kNN-graph}
\BIBentryALTinterwordspacing
G.~L. Miller, S.-H. Teng, W.~Thurston, and S.~A. Vavasis, ``Separators for
  sphere-packings and nearest neighbor graphs,'' \emph{J. ACM}, vol.~44, no.~1,
  p. 1–29, jan 1997. [Online]. Available:
  \url{https://doi.org/10.1145/256292.256294}
\BIBentrySTDinterwordspacing

\bibitem{ReLU}
A.~F. Agarap, ``Deep learning using rectified linear units (relu),''
  \emph{arXiv preprint arXiv:1803.08375}, 2018.

\bibitem{MLP}
S.~Haykin, \emph{Neural networks: a comprehensive foundation}.\hskip 1em plus
  0.5em minus 0.4em\relax Prentice Hall PTR, 1994.

\bibitem{RLBisection}
J.~Chen, H.-r. Fang, and Y.~Saad, ``Fast approximate knn graph construction for
  high dimensional data via recursive lanczos bisection,'' \emph{J. Mach.
  Learn. Res.}, vol.~10, p. 1989–2012, dec 2009.

\bibitem{MapReduce}
\BIBentryALTinterwordspacing
W.~Dong, C.~Moses, and K.~Li, ``Efficient k-nearest neighbor graph construction
  for generic similarity measures,'' in \emph{Proceedings of the 20th
  International Conference on World Wide Web}, ser. WWW '11.\hskip 1em plus
  0.5em minus 0.4em\relax New York, NY, USA: Association for Computing
  Machinery, 2011, p. 577–586. [Online]. Available:
  \url{https://doi.org/10.1145/1963405.1963487}
\BIBentrySTDinterwordspacing

\bibitem{tolokers}
\BIBentryALTinterwordspacing
O.~Platonov, D.~Kuznedelev, M.~Diskin, A.~Babenko, and L.~Prokhorenkova, ``A
  critical look at the evaluation of {GNN}s under heterophily: Are we really
  making progress?'' in \emph{The Eleventh International Conference on Learning
  Representations}, 2023. [Online]. Available:
  \url{https://openreview.net/forum?id=tJbbQfw-5wv}
\BIBentrySTDinterwordspacing

\bibitem{Cora}
A.~K. McCallum, K.~Nigam, J.~Rennie, and K.~Seymore, ``Automating the
  construction of internet portals with machine learning,'' \emph{Information
  Retrieval}, vol.~3, pp. 127--163, 2000.

\bibitem{Photo}
J.~McAuley, C.~Targett, Q.~Shi, and A.~Van Den~Hengel, ``Image-based
  recommendations on styles and substitutes,'' in \emph{Proceedings of the 38th
  international ACM SIGIR conference on research and development in information
  retrieval}, 2015, pp. 43--52.

\bibitem{PytorchGeometric}
M.~Fey and J.~E. Lenssen, ``Fast graph representation learning with {PyTorch
  Geometric},'' in \emph{ICLR Workshop on Representation Learning on Graphs and
  Manifolds}, 2019.

\bibitem{deeprobust}
Y.~Li, W.~Jin, H.~Xu, and J.~Tang, ``Deeprobust: A pytorch library for
  adversarial attacks and defenses,'' \emph{arXiv preprint arXiv:2005.06149},
  2020.

\bibitem{RGCN}
\BIBentryALTinterwordspacing
D.~Zhu, Z.~Zhang, P.~Cui, and W.~Zhu, ``Robust graph convolutional networks
  against adversarial attacks,'' in \emph{Proceedings of the 25th ACM SIGKDD
  International Conference on Knowledge Discovery \& Data Mining}, ser. KDD
  '19.\hskip 1em plus 0.5em minus 0.4em\relax New York, NY, USA: Association
  for Computing Machinery, 2019, p. 1399–1407. [Online]. Available:
  \url{https://doi.org/10.1145/3292500.3330851}
\BIBentrySTDinterwordspacing

\bibitem{AirGNN}
X.~Liu, J.~Ding, W.~Jin, H.~Xu, Y.~Ma, Z.~Liu, and J.~Tang, ``Graph neural
  networks with adaptive residual,'' in \emph{Advances in Neural Information
  Processing Systems}, A.~Beygelzimer, Y.~Dauphin, P.~Liang, and J.~W. Vaughan,
  Eds., 2021.

\bibitem{ElasticGNN}
X.~Liu, W.~Jin, Y.~Ma, Y.~Li, H.~Liu, Y.~Wang, M.~Yan, and J.~Tang, ``Elastic
  graph neural networks,'' in \emph{International Conference on Machine
  Learning}.\hskip 1em plus 0.5em minus 0.4em\relax PMLR, 2021, pp. 6837--6849.

\bibitem{GreatX}
J.~Li, B.~Wu, C.~Hou, G.~Fu, Y.~Bian, L.~Chen, J.~Huang, and Z.~Zheng, ``Recent
  advances in reliable deep graph learning: Inherent noise, distribution shift,
  and adversarial attack,'' 2023.

\bibitem{Adam}
D.~P. Kingma and J.~Ba, ``Adam: A method for stochastic optimization,'' 2017.

\bibitem{H2GCNSVD}
\BIBentryALTinterwordspacing
J.~Zhu, J.~Jin, D.~Loveland, M.~T. Schaub, and D.~Koutra, ``How does
  heterophily impact the robustness of graph neural networks? theoretical
  connections and practical implications,'' in \emph{Proceedings of the 28th
  ACM SIGKDD Conference on Knowledge Discovery and Data Mining}, ser. KDD
  '22.\hskip 1em plus 0.5em minus 0.4em\relax New York, NY, USA: Association
  for Computing Machinery, 2022, p. 2637–2647. [Online]. Available:
  \url{https://doi.org/10.1145/3534678.3539418}
\BIBentrySTDinterwordspacing

\end{thebibliography}

\end{document}